\documentclass{article}

\PassOptionsToPackage{numbers, compress, sort}{natbib}
\usepackage[preprint]{neurips_2026}

\usepackage{microtype}
\usepackage{graphicx}
\usepackage{subcaption}
\usepackage{booktabs} 

\usepackage{hyperref}

\usepackage{amsmath, amsfonts}
\usepackage{epsfig,amsthm}
\usepackage{mathtools}
\usepackage{bm}
\usepackage[framemethod=TikZ]{mdframed}

\usetikzlibrary{arrows.meta, calc, decorations.pathreplacing}

\definecolor{cObsBg}{HTML}{FFE8E0}
\definecolor{cObsBorder}{HTML}{D9583F}
\newmdenv[
  linewidth=0.5pt,
  linecolor=cObsBorder,
  backgroundcolor=cObsBg,
  roundcorner=2pt,
  innerleftmargin=8pt, innerrightmargin=8pt,
  innertopmargin=3pt, innerbottommargin=3pt,
  skipabove=8pt, skipbelow=-2pt
]{obstructionbox}
\newcounter{obstruction}
\newcommand{\obstruction}[1]{\refstepcounter{obstruction}\textbf{Obstruction \theobstruction\ (#1).}}
\definecolor{cSolBg}{HTML}{E0F0FF}
\definecolor{cSolBorder}{HTML}{418DD8}
\newmdenv[
  linewidth=0.5pt,
  linecolor=cSolBorder,
  backgroundcolor=cSolBg,
  roundcorner=2pt,
  innerleftmargin=8pt, innerrightmargin=8pt,
  innertopmargin=3pt, innerbottommargin=3pt,
  skipabove=8pt, skipbelow=-2pt
]{solutionbox}
\newcounter{solution}
\newcommand{\solution}[1]{\refstepcounter{solution}\textbf{Solution \thesolution\ (#1).}}

\DeclareMathOperator{\diag}{diag}

\theoremstyle{definition} 

\theoremstyle{plain} 
\newtheorem{theorem}{Theorem}[section]
\newtheorem*{theorem*}{Theorem}
\newtheorem*{proposition*}{Proposition}
\newtheorem{lemma}{Lemma}[section]

\newtheorem{proposition}{Proposition}[section]

\usepackage{multirow}
\usepackage{multicol}

\usepackage{minted}
\usepackage{algorithm}
\usepackage{algpseudocode}

\usepackage{physics}

\let\originalleft\left
\let\originalright\right
\renewcommand{\left}{\mathopen{}\mathclose\bgroup\originalleft}
\renewcommand{\right}{\aftergroup\egroup\originalright}

\usepackage[titles]{tocloft}
\usepackage{amsmath}
\usepackage{amssymb}
\usepackage{mathtools}
\usepackage{amsthm}

\usepackage[capitalize,noabbrev]{cleveref}

\theoremstyle{plain}
\theoremstyle{definition}

\theoremstyle{remark}

\usepackage[textsize=tiny]{todonotes}

\usepackage[titles]{tocloft}
\title{AdaPreLoRA: Adafactor Preconditioned Low-Rank Adaptation}

\newcommand{\authorskip}{\hspace{2.5mm}}
\newcommand{\affiliationskip}{\hspace{2.5mm}}
\author{%
  Ziyun Liu \authorskip
  Fengmiao Bian$^*$ \authorskip
  Jian-Feng Cai$^*$ \authorskip \vspace{.2em}\\
  Department of Mathematics \\
  The Hong Kong University of Science and Technology \vspace{.2em}\\
  \tt {zliueq@connect.ust.hk} \affiliationskip \{mafmbian,jfcai\}@ust.hk
}

\begin{document}

\maketitle

\addtocontents{toc}{\protect\setcounter{tocdepth}{0}}

\begin{abstract}
Low-Rank Adaptation (LoRA) reparameterizes a weight update as a product of two low-rank factors, but the Jacobian $J_\mathcal{G}$ of the generator mapping the factors to the weight matrix is rank-deficient, so the factor-space preconditioner $J_\mathcal{G}^* \mathcal{F}_t J_\mathcal{G}$ induced by any $\bm{W}$-space preconditioner $\mathcal{F}_t$ is singular, and consequently the standard chain rule cannot be uniquely inverted to map a preconditioned $\bm{W}$-space direction back to a factor-space update. We cast existing LoRA optimizers in a unified framework parameterized by two choices: (i) which invertible surrogate for $J_\mathcal{G}^* \mathcal{F}_t J_\mathcal{G}$ to use, and (ii) which $\mathcal{F}_t$ on $\bm{W}$ to use. Existing methods occupy four families along these axes: factor-space adaptive updates, block-diagonal surrogates for $J_\mathcal{G}^* J_\mathcal{G}$, Frobenius-residual pseudoinverse methods, and Riemannian manifold constraint. Within this design space, a gradient-statistics-aware $\mathcal{F}_t$ paired with a closed-form factor-space solve at $\mathcal{O}((m+n)r)$ memory remains underexplored. We propose \textbf{AdaPreLoRA}, which fills this gap by adopting the Adafactor diagonal Kronecker preconditioner $\mathcal{H}_t$ on $\bm{W}$ and selecting from the resulting factor-space solution family the element minimizing an $\mathcal{H}_t$-weighted imbalance between the two factor contributions; by construction, the resulting factor update is the closest LoRA approximation to the preconditioned $\bm{W}$-space direction under the $\mathcal{H}_t$-weighted norm. Across GPT-2 (E2E), Mistral-7B and Qwen2-7B (GLUE, ARC, GSM8K), and diffusion-model personalization, AdaPreLoRA is competitive with or improves over a representative set of LoRA optimizers while keeping peak GPU memory at the LoRA optimizer level.
\end{abstract}

\renewcommand*{\thefootnote}{\fnsymbol{footnote}}
\setcounter{footnote}{1}
\footnotetext{Corresponding authors.}

\section{Introduction}

Fine-tuning large pretrained models~\citep{liu2024deepseek, yang2024qwen2, achiam2023gpt4} for downstream tasks is increasingly bottlenecked by the cost of full-parameter updates, motivating parameter-efficient fine-tuning (PEFT). Low-Rank Adaptation (LoRA)~\citep{hu2022lora} has become the standard PEFT method: it freezes the pretrained weight $\bm{W}_0$ and reparameterizes its update as a product $\bm{B}\bm{A}$ with $\bm{B} \in \mathbb{R}^{m \times r}$, $\bm{A} \in \mathbb{R}^{r \times n}$, $r \ll \min(m,n)$, reducing trainable parameters and optimizer state from $\mathcal{O}(mn)$ to $\mathcal{O}((m+n)r)$. A growing line of work~\citep{hayou2024lora+, zhang2024RiemannianPreconditioned, wang2024lora-ga, zhao2024galore, zhu2024imbalanceLoRA, wang2025lora-pro, mo2025loro, zhang2025lora_one} extends this template with refined optimizers in pursuit of full-fine-tuning quality at LoRA's memory budget.

Despite this progress, optimizing in the factor space $[\bm{B}, \bm{A}]$ rather than directly in $\bm{W}$ raises a fundamental obstruction (\S~\ref{sec:background}): writing $\mathcal{G}: [\bm{B}, \bm{A}] \mapsto \bm{B}\bm{A}$ for the map generating the factors to the weight matrix, its Jacobian $J_\mathcal{G}$ is rank-deficient because $\mathcal{G}$ has a built-in redundancy under the gauge reparameterization $(\bm{B}, \bm{A}) \mapsto (\bm{B}\bm{C}, \bm{C}^{-1}\bm{A})$ for any invertible $\bm{C}$. Since practical gradient-statistical preconditioners are typically approximations to the Fisher information in the parameter space being optimized, the relevant preconditioner in factor space is the Fisher information with respect to $[\bm{B}, \bm{A}]$. By the chain rule, this operator must take the form $J_\mathcal{G}^* \mathcal{F}_t J_\mathcal{G}$, where $\mathcal{F}_t$ is the corresponding gradient-statistical preconditioner in $\bm{W}$-space. Because $J_\mathcal{G}^* \mathcal{F}_t J_\mathcal{G}$ is singular, it cannot be uniquely inverted to map a $\bm{W}$-space preconditioned direction back to a factor-space update. Existing LoRA optimizers respond to this obstruction along several directions. Cheap factor-space schemes preserve the $\mathcal{O}((m+n)r)$ memory budget but discard the gradient-statistics structure on $\bm{W}$, either by sidestepping the framework altogether (vanilla LoRA~\citep{hu2022lora} and Imbalance-Reg~\citep{zhu2024imbalanceLoRA}, which apply per-coordinate adaptive updates directly on the factors) or by taking $\mathcal{F}_t = \bm{I}$ with block-diagonal approximations of $(J_\mathcal{G}^* J_\mathcal{G})$ (LoRA+~\citep{hayou2024lora+}, Riemannian Preconditioned LoRA~\citep{zhang2024RiemannianPreconditioned}). LoRA-Pro~\citep{wang2025lora-pro} stays in the affine solution set of~\eqref{eq:factor-system} by minimizing the Frobenius residual $\|J_\mathcal{G}[\bm{\Delta}_{\bm{B}_t}, \bm{\Delta}_{\bm{A}_t}] - \mathcal{F}_t^{-1}\bm{G}_t\|_F^2$; its AdamW variant pairs a non-trivial $\mathcal{F}_t$ on $\bm{W}$ with a Frobenius (rather than $\mathcal{F}_t$-weighted) residual, mismatching the preconditioner's metric, and explicitly maintains $\bm{W}$-space first/second moments at $\mathcal{O}(mn)$ memory prohibitive at LLM scale. Manifold-based methods (Riemannian Muon~\citep{bogachev2025riemannian}, RAdamW~\citep{bian2026finding}) take a Riemannian gradient step on $\mathcal{M}_r$ in the ambient $\bm{W}$-space and rely on a retraction back to the manifold, rather than a closed-form solution of~\eqref{eq:factor-system} in factor coordinates. A gradient-statistics-aware $J_\mathcal{G}^*\mathcal{F}_tJ_\mathcal{G}$ paired with $\mathcal{O}((m+n)r)$ memory in the LoRA factor space remains an underexplored design point.

We target this point by observing that, even though $J_\mathcal{G}^* \mathcal{F}_t J_\mathcal{G}$ is singular, the linear system $J_\mathcal{G}^* \mathcal{F}_t J_\mathcal{G}\,[\bm{P}, \bm{Q}] = J_\mathcal{G}^*(\bm{G}_t)$ on the factor pair $[\bm{P}, \bm{Q}]$ is always consistent: its solution set is a non-empty $r^2$-dimensional affine subspace, the directions in factor space whose induced $\bm{W}$-update equals $\mathcal{F}_t^{-1} \bm{G}_t$ projected onto $\mathbb{T}_t = \mathrm{range}(J_\mathcal{G})$, the subspace of $\bm{W}$-changes a single LoRA step can express. Designing a LoRA optimizer in this framework therefore decomposes into two coupled choices (\S~\ref{sec:proposed}): (i) which gradient-statistics-aware preconditioner $\mathcal{F}_t$ to use on $\bm{W}$, and (ii) how to select a particular element of the affine solution set. For (i), we adopt the Adafactor~\citep{shazeer2018adafactor} diagonal Kronecker form $\mathcal{F}_t = \bm{L}_t \otimes \bm{R}_t$ (with operator square root $\mathcal{H}_t = \mathcal{F}_t^{1/2}$, acting as $\mathcal{H}_t \bm{Y} = \bm{L}_t^{1/2} \bm{Y} \bm{R}_t^{1/2}$), the cheapest non-trivial $\bm{W}$-space preconditioner ($\mathcal{O}(m+n)$ memory). For (ii), we use the fact that all elements of the affine solution set induce the same $\bm{W}$-update (the $\mathcal{H}_t$-orthogonal projection of $\mathcal{H}_t^{-1} \bm{G}_t$ onto $\mathbb{T}_t$) but trace different factor trajectories, and pick the element that minimizes the $\mathcal{H}_t$-norm imbalance $\|\bm{\Delta}_{\bm{B}_t}\bm{A}_t - \bm{B}_t\bm{\Delta}_{\bm{A}_t}\|_{\mathcal{H}_t}^2$ between the two factor contributions to the $\bm{W}$-update. The resulting algorithm, \textbf{Ada}factor \textbf{Pre}conditioned \textbf{Lo}w-\textbf{R}ank \textbf{A}daptation (\textbf{AdaPreLoRA}), admits a closed-form factor update at $\mathcal{O}(r^3)$ extra cost and keeps the optimizer state at $\mathcal{O}((m+n)r)$. By construction, its $\bm{W}$-update is the closest point in $\mathbb{T}_t$ to the Adafactor-preconditioned direction $\mathcal{H}_t^{-1} \bm{G}_t$ under the $\mathcal{H}_t$-weighted norm. Cheap factor-space schemes lack this guarantee, since their updates do not arise as a $\bm{W}$-space preconditioned direction projected onto $\mathbb{T}_t$.

Empirically (\S~\ref{sec:experiment}), AdaPreLoRA matches or outperforms vanilla LoRA, Scaled AdamW, LoRA-Pro AdamW, and SOAP across GPT-2 (E2E), Mistral-7B and Qwen2-7B (GLUE, ARC, GSM8K), and diffusion-model fine-tuning, while matching Scaled AdamW's peak memory and avoiding the $\sim 2\times$ memory overhead of LoRA-Pro AdamW. Our contributions are:
\begin{itemize}
    \item A unified framework that recasts existing LoRA optimizers as instances of the consistent linear system $J_\mathcal{G}^* \mathcal{F}_t J_\mathcal{G}\,[\bm{\Delta}_{\bm{B}_t}, \bm{\Delta}_{\bm{A}_t}] = J_\mathcal{G}^*(\bm{G}_t)$, parameterized by the choice of preconditioner $\mathcal{F}_t$ and the rule that selects an element of the affine solution set (\S~\ref{subsec:lora-precond-related}).

    \item \textbf{AdaPreLoRA}, a LoRA optimizer whose $\bm{W}$-update is the closest point in $\mathbb{T}_t$ to the Adafactor-preconditioned direction $\mathcal{H}_t^{-1} \bm{G}_t$ under the $\mathcal{H}_t$-weighted norm, recovered in closed form at $\mathcal{O}((m+n)r)$ memory (\S~\ref{sec:proposed}).
    \item Experimental evidence that the resulting update direction is competitive with or improves over both cheap factor-space and pseudoinverse-based baselines, including at the 7B parameter scale.
\end{itemize}

\section{Background and Related Work}\label{sec:background}

This section sets up the LoRA optimization problem (\S~\ref{subsec:lora-setup}), identifies the singular factor-space operator $J_\mathcal{G}^* \mathcal{F}_t J_\mathcal{G}$ that any $\bm{W}$-space preconditioner $\mathcal{F}_t$ induces, unifies existing LoRA optimizers as different choices of $\mathcal{F}_t$ together with different generalized inverses of $J_\mathcal{G}^* \mathcal{F}_t J_\mathcal{G}$ (\S~\ref{subsec:lora-precond-related}), and reviews adaptive preconditioner families on $\bm{W}$ (\S~\ref{subsec:full-weight-precond}). Throughout the paper, calligraphic letters ($\mathcal{F}_t, \mathcal{H}_t, \widetilde{\mathcal{P}}_{\mathbb{T}_t}, J_\mathcal{G}$) denote linear operators on $\mathbb{R}^{m\times n}$, while bold letters ($\bm{L}_t, \bm{R}_t, \bm{B}_t, \bm{A}_t, \bm{G}_t$) denote matrices; a complete notation table is given in Appendix~\ref{sec:notation}.

\subsection{LoRA Setup and Its Singular Jacobian}\label{subsec:lora-setup}

As a representative parameter-efficient fine-tuning method, low-rank fine-tuning freezes the pretrained weight $\bm{W}_0 \in \mathbb{R}^{m \times n}$ and assumes that the weight update $\bm{W}$ admits a low-rank factorization $\bm{W} = \bm{B}\bm{A}$ with $\bm{B} \in \mathbb{R}^{m \times r}$, $\bm{A} \in \mathbb{R}^{r \times n}$, and $r \ll \min\{m, n\}$~\citep{hu2022lora}. The fine-tuning objective is
\[
    \min_{\bm{B} \in \mathbb{R}^{m \times r},\, \bm{A} \in \mathbb{R}^{r \times n}} \mathcal{L}(\bm{W}_0 + \mathcal{G}([\bm{B},\bm{A}])),
    \quad \text{where} \quad \mathcal{G}([\bm{B},\bm{A}]) = \bm{B}\bm{A}.
\]

Under this generator, the Jacobian operator $J_\mathcal{G}([\bm{B}_t, \bm{A}_t]): [\mathbb{R}^{m \times r}, \mathbb{R}^{r \times n}] \to \mathbb{R}^{m \times n}$ and its adjoint $J_\mathcal{G}^*([\bm{B}_t, \bm{A}_t]): \mathbb{R}^{m \times n} \to [\mathbb{R}^{m \times r}, \mathbb{R}^{r \times n}]$ act as
\begin{equation}\label{eq:jacobian-def}
    J_\mathcal{G}([\bm{B}_t, \bm{A}_t])[\bm{P}, \bm{Q}] = \bm{P}\bm{A}_t + \bm{B}_t\bm{Q}, \qquad
    J_\mathcal{G}^*([\bm{B}_t, \bm{A}_t])(\bm{C}) = [\bm{C}\bm{A}_t^\top,\, \bm{B}_t^\top\bm{C}],
\end{equation}
on factor-space directions $[\bm{P}, \bm{Q}] \in [\mathbb{R}^{m \times r}, \mathbb{R}^{r \times n}]$ and $\bm{W}$-space directions $\bm{C} \in \mathbb{R}^{m \times n}$, respectively. We abbreviate $J_\mathcal{G} := J_\mathcal{G}([\bm{B}_t, \bm{A}_t])$ and $J_\mathcal{G}^* := J_\mathcal{G}^*([\bm{B}_t, \bm{A}_t])$ when the base point is clear; detailed derivations appear in Proposition~\ref{prop:Jacobian-comp}. The chain rule gives the factor gradients $\bm{G}_{\bm{B}_t} = \bm{G}_t \bm{A}_t^\top, \bm{G}_{\bm{A}_t} = \bm{B}_t^\top \bm{G}_t$, with $\bm{G}_t = \nabla_{\bm{W}_t}\mathcal{L}(\bm{W}_0 + \bm{W}_t)$, equivalently
\begin{equation}\label{eq:chain-rule}
    J_\mathcal{G}^*(\bm{G}_t) = [\bm{G}_{\bm{B}_t}, \bm{G}_{\bm{A}_t}].
\end{equation}
Thus $J_\mathcal{G}$ is the central operator linking factor-space and $\bm{W}$-space updates, and its properties determine what factor-space optimizers can achieve.

Unfortunately, $J_\mathcal{G}$ has a non-trivial kernel: \emph{the Jacobian $J_\mathcal{G}$ is rank-deficient.} The Jacobian formula in~\eqref{eq:jacobian-def} immediately produces a family of factor-space directions that $J_\mathcal{G}$ maps to $\bm{0} \in \mathbb{R}^{m \times n}$: for any $\bm{X} \in \mathbb{R}^{r \times r}$,
\begin{equation}\label{eq:ker-direction}
    J_\mathcal{G}[\bm{B}_t \bm{X},\, -\bm{X}\bm{A}_t] = \bm{B}_t \bm{X} \bm{A}_t - \bm{B}_t \bm{X} \bm{A}_t = \bm{0},
\end{equation}
so $[\bm{B}_t \bm{X}, -\bm{X}\bm{A}_t] \in \ker(J_\mathcal{G})$. When $\bm{B}_t$ has column rank $r$ and $\bm{A}_t$ has row rank $r$, this family is the entire kernel: $\ker(J_\mathcal{G}) = \{[\bm{B}_t \bm{X}, -\bm{X}\bm{A}_t] : \bm{X} \in \mathbb{R}^{r \times r}\}$, an $r^2$-dimensional subspace, so $\mathrm{rank}(J_\mathcal{G}) = (m+n)r - r^2$ (Proposition~\ref{prop:gauge-kernel}, Appendix~\ref{subsec:gauge-proof}).

This rank deficiency also constrains the form of preconditioners in factor space. Since practical preconditioners are typically built as approximations to the Fisher information in the optimized parameterization, the natural preconditioner for the factors \([\bm{B}_t,\bm{A}_t]\) is the empirical Fisher formed from the per-sample factor gradients
\(
[\bm{G}_{\bm{B}_t}^{(i)}, \bm{G}_{\bm{A}_t}^{(i)}]
:=
[\nabla_{\bm{B}} \mathcal{L}_i(\bm{W}_t), \nabla_{\bm{A}} \mathcal{L}_i(\bm{W}_t)].
\)
That is,
\[
    \mathcal{E}_t
    \;:=\;
    \frac{1}{N}\sum_{i=1}^{N}\langle
    [\bm{G}_{\bm{B}_t}^{(i)}, \bm{G}_{\bm{A}_t}^{(i)}],\,\cdot
    \big\rangle
    [\bm{G}_{\bm{B}_t}^{(i)}, \bm{G}_{\bm{A}_t}^{(i)}]
    \,\big.
\]

Adaptive optimizers such as Adam~\citep{kingma2014adam}, Adafactor~\citep{shazeer2018adafactor}, Shampoo~\citep{gupta2018shampoo}, and K-FAC~\citep{martens2015K-FAC} may be viewed as structured approximations to \(\mathcal{E}_t\): diagonal, rank-1 Kronecker, full Kronecker, and layerwise Kronecker, respectively, with the explicit sum replaced in practice by a running average over mini-batches. However, as our experiments and prior work show, these optimizers often perform poorly in the factorized setting. This motivates a closer look at the structure of \(\mathcal{E}_t\).

Because each \(\mathcal{L}_i\) depends on the factors only through \(\bm{W}=\bm{B}\bm{A}\), the chain rule gives
\(
[\bm{G}_{\bm{B}_t}^{(i)}, \bm{G}_{\bm{A}_t}^{(i)}]
=
J_{\mathcal{G}}^*(\bm{G}_t^{(i)})
\),
where
\(
\bm{G}_t^{(i)} := \nabla_{\bm{W}} \mathcal{L}_i(\bm{W}_t).
\)
Hence
\begin{equation}\label{eq:factor-precond-pullback}
    \mathcal{E}_t
    \;=\;
    \frac{1}{N}\sum_{i=1}^{N} \big\langle
    J_{\mathcal{G}}^*(\bm{G}_t^{(i)}),\,\cdot
    \big\rangle
    J_{\mathcal{G}}^*(\bm{G}_t^{(i)})
    \, = J_{\mathcal{G}}^* \Big(\frac{1}{N}\sum_{i=1}^{N} \bm{G}_t^{(i)} \langle {\bm{G}_t^{(i)}}, \cdot \rangle \Big)J_{\mathcal{G}}
    \;=\;
    J_{\mathcal{G}}^* \mathcal{F}_t J_{\mathcal{G}},
\end{equation}
where $\mathcal{F}_t :=\frac{1}{N}\sum_{i=1}^{N}\bm{G}_t^{(i)} \langle \bm{G}_t^{(i)}, \cdot\rangle$ on the right is the empirical Fisher in \(\bm{W}\)-space~\citep{kunstner2019limitations}.
Equivalently, in matrix form,
\begin{equation}\label{eq:empirical-fisher-W}
    \mathcal{F}_t
    \;=\;
    \frac{1}{N}\sum_{i=1}^{N}
    \mathrm{vec}(\bm{G}_t^{(i)})\, \mathrm{vec}(\bm{G}_t^{(i)})^\top
    \;\in\; \mathbb{R}^{mn \times mn}.
\end{equation}

Since \(J_{\mathcal{G}}\) is rank-deficient, the pullback \(J_{\mathcal{G}}^* \mathcal{F}_t J_{\mathcal{G}}\) is necessarily singular. When $\mathcal{F}_t \succ 0$, $\ker(J_\mathcal{G}^* \mathcal{F}_t J_\mathcal{G}) = \ker(J_\mathcal{G})$ is non-trivial, so $J_\mathcal{G}^* \mathcal{F}_t J_\mathcal{G}$ is singular for any choice of $\mathcal{F}_t$, and the corresponding preconditioned update on $[\bm{B}_t, \bm{A}_t]$,
\begin{equation}\label{eq:factor-precond-update}
    [\bm{B}_{t+1}, \bm{A}_{t+1}] = [\bm{B}_t, \bm{A}_t] - \eta_t\, (J_\mathcal{G}^* \mathcal{F}_t J_\mathcal{G})^{-1}\, [\bm{G}_{\bm{B}_t}, \bm{G}_{\bm{A}_t}],
\end{equation}
is ill-defined.

\begin{obstructionbox}
\obstruction{non-invertibility}\label{obs:non-invertible} The operator $J_\mathcal{G}^* \mathcal{F}_t J_\mathcal{G}$ is non-invertible for any choice of $\mathcal{F}_t$, so the update~\eqref{eq:factor-precond-update} is ill-defined.
\end{obstructionbox}

One natural remedy is to replace the inverse with a generalized inverse, but different choices produce different factor updates, so the ill-definedness shifts from non-existence to non-uniqueness. A canonical choice is the Moore--Penrose pseudoinverse $(J_\mathcal{G}^* \mathcal{F}_t J_\mathcal{G})^\dag$:
\[
    [\bm{\Delta}_{\bm{B}_t}, \bm{\Delta}_{\bm{A}_t}] \;:=\; (J_\mathcal{G}^* \mathcal{F}_t J_\mathcal{G})^\dag\, [\bm{G}_{\bm{B}_t}, \bm{G}_{\bm{A}_t}] \;\in\; [\mathbb{R}^{m \times r}, \mathbb{R}^{r \times n}].
\]
By property of the Moore--Penrose pseudoinverse, $[{\bm{P}}, \bm{Q}]$ is the unique minimum-Frobenius-norm element of the affine solution set
\[
    \big\{\, [\bm{P}, \bm{Q}] \in [\mathbb{R}^{m \times r}, \mathbb{R}^{r \times n}] \;:\; J_\mathcal{G}^* \mathcal{F}_t J_\mathcal{G}\,[\bm{P}, \bm{Q}] = [\bm{G}_{\bm{B}_t}, \bm{G}_{\bm{A}_t}] \,\big\},
\]
which is consistent since $\mathrm{range}(J_\mathcal{G}^*) = \mathrm{range}(J_\mathcal{G}^* \mathcal{F}_t J_\mathcal{G})$ as $\mathcal{F}_t \succ 0$, and has dimension $\dim \ker(J_\mathcal{G}^* \mathcal{F}_t J_\mathcal{G}) = \dim \ker(J_\mathcal{G}) = r^2$. Other generalized inverses of $J_\mathcal{G}^* \mathcal{F}_t J_\mathcal{G}$ correspond to different elements of this affine solution set, differing by an element of $\ker(J_\mathcal{G})$.

\begin{obstructionbox}
\obstruction{non-uniqueness of generalized inverse}\label{obs:non-unique} Generalized inverses of $J_\mathcal{G}^* \mathcal{F}_t J_\mathcal{G}$ are not unique; the resulting factor updates from different generalized inverses can differ by any element of $\ker(J_\mathcal{G})$.
\end{obstructionbox}

Existing LoRA optimizers in \S~\ref{subsec:lora-precond-related} differ in the choice of $\mathcal{F}_t$ on $\bm{W}$ and in the rule that selects an element of this affine solution set; \S~\ref{subsec:full-weight-precond} reviews the standard families of $\mathcal{F}_t$. Our method (\S~\ref{sec:proposed}) instantiates this framework with the Adafactor diagonal Kronecker and an $\mathcal{H}_t$-balance criterion that selects a unique element of the affine solution set.

\subsection{Existing LoRA Optimizers}\label{subsec:lora-precond-related}

Although $J_\mathcal{G}^* \mathcal{F}_t J_\mathcal{G}$ is singular, the linear system
\begin{equation}\label{eq:factor-system}
    J_\mathcal{G}^* \mathcal{F}_t J_\mathcal{G}\,[\bm{\Delta}_{\bm{B}_t}, \bm{\Delta}_{\bm{A}_t}] \;=\; J_\mathcal{G}^*(\bm{G}_t)
\end{equation}
in the factor update $[\bm{\Delta}_{\bm{B}_t}, \bm{\Delta}_{\bm{A}_t}]$ is consistent, with RHS equal to the factor-gradient pair $[\bm{G}_{\bm{B}_t}, \bm{G}_{\bm{A}_t}]$ by~\eqref{eq:chain-rule}. We organize existing LoRA optimizers by (i) which invertible surrogate for $ J_\mathcal{G}^* \mathcal{F}_t J_\mathcal{G}$ they use and (ii) the choice of $\mathcal{F}_t$ on $\bm{W}$. Table~\ref{tab:lora-precond-compare} summarizes the resulting design space, and we walk through the main families below.

\textbf{Vanilla LoRA / Imbalance-Reg / LoRA-RITE  (diagonal approximation of $\mathcal{E}_t$, ignoring $J_\mathcal{G}^* \mathcal{F}_t J_\mathcal{G}$).}
Vanilla LoRA~\citep{hu2022lora} ignores $J_\mathcal{G}^* \mathcal{F}_t J_\mathcal{G}$ and approximates the empirical Fisher $\mathcal{E}_t$ on the factor space by a per-coordinate diagonal estimated directly from the factor gradients $[\bm{G}_{\bm{B}_t}, \bm{G}_{\bm{A}_t}]$ via AdamW. Imbalance-Regularized LoRA~\citep{zhu2024imbalanceLoRA} keeps the same diagonal Fisher estimate and adds a penalty $\|\bm{B}_t^\top \bm{B}_t - \bm{A}_t \bm{A}_t^\top\|_F^2$ to align the factor spectra. LoRA-RITE~\citep{yen2025lorarite} replaces the diagonal estimate with a matrix-form second moment $\bm{V}_t \in \mathbb{R}^{r \times r}$ accumulated on the polar/QR-reparameterized factor gradients, yielding a transformation-invariant factor-space update at $\mathcal{O}(r^2)$ extra memory.

\textbf{LoRA+~\citep{hayou2024lora+} and Riemannian Preconditioned LoRA~\citep{zhang2024RiemannianPreconditioned} (block-diagonal surrogates for $J_\mathcal{G}^* J_\mathcal{G}$).}
Specializing~\eqref{eq:factor-system} to $\mathcal{F}_t = \bm{I}$, the operator $J_\mathcal{G}^* J_\mathcal{G}$ (Proposition~\ref{prop:Jacobian-comp}) decomposes into block-diagonal and cross terms. LoRA+ approximates $J_\mathcal{G}^* J_\mathcal{G}$ by the block-scaling identity surrogate $\mathrm{diag}(\bm{I}_m, \lambda \bm{I}_n)$ for a fixed scalar $\lambda > 0$ and inverts it, giving asymmetric per-block scalar rescaling between the $\bm{B}_t$ and $\bm{A}_t$ updates. Riemannian Preconditioned LoRA approximates $J_\mathcal{G}^* J_\mathcal{G}$ by its block-diagonal part $\mathrm{diag}(\bm{A}_t \bm{A}_t^\top, \bm{B}_t^\top \bm{B}_t)$ and inverts it, yielding the explicit factor update $\bm{\Delta}_{\bm{B}_t} = \bm{G}_{\bm{B}_t} (\bm{A}_t \bm{A}_t^\top)^{-1}$, $\bm{\Delta}_{\bm{A}_t} = (\bm{B}_t^\top \bm{B}_t)^{-1} \bm{G}_{\bm{A}_t}$, well-defined whenever $\bm{B}_t, \bm{A}_t$ have full rank. Both updates differ from any element of the affine solution set of~\eqref{eq:factor-system}.

\textbf{LoRA-Pro~\citep{wang2025lora-pro}.}
LoRA-Pro solves a different system from~\eqref{eq:factor-system}: it minimizes the Frobenius residual $\|J_\mathcal{G}[\bm{\Delta}_{\bm{B}_t}, \bm{\Delta}_{\bm{A}_t}] - \mathcal{F}_t^{-1}\bm{G}_t\|_F^2$, whose normal equations $J_\mathcal{G}^* J_\mathcal{G}\,[\bm{\Delta}_{\bm{B}_t}, \bm{\Delta}_{\bm{A}_t}] = J_\mathcal{G}^*(\mathcal{F}_t^{-1}\bm{G}_t)$ coincide with~\eqref{eq:factor-system} only when $\mathcal{F}_t = \bm{I}$. Its AdamW variant pairs a non-trivial $\mathcal{F}_t$ on $\bm{W}$ with the Frobenius (rather than $\mathcal{F}_t$-weighted) residual, mismatching the preconditioner's metric, and explicitly maintains $\bm{W}$-space first/second moments at $\mathcal{O}(mn)$ memory prohibitive at LLM scale. In contrast, our~\eqref{prob:fit-proj} measures the residual under the $\mathcal{F}_t$-induced $\mathcal{H}_t$-norm consistent with the preconditioner.

\textbf{Manifold-based methods on $\mathcal{M}_r$.}
Rather than solving~\eqref{eq:factor-system} on the factor space, this Rather than solving~\eqref{eq:factor-system} on the factor space, this line of work performs Riemannian gradient descent on the rank-$r$ matrix manifold $\mathcal{M}_r$. Riemannian Muon~\citep{bogachev2025riemannian} uses retraction-based Muon updates on $\mathcal{M}_r$, applying Muon orthogonalization (replacing all singular values by $1$) on the tangent space; the resulting step is equivalent to a per-step spectral $\bm{W}$-space preconditioner $\mathcal{F}_t = (\bm{G}_t\bm{G}_t^\top)^{1/2}$ (no accumulation across steps). RAdaGrad / RAdamW~\citep{bian2026finding} run Riemannian gradient descent on $\mathcal{M}_r$ under a Shampoo $\bm{W}$-space preconditioner $\mathcal{F}_t = (\bm{L}_{\rm Sh} \otimes \bm{R}_{\rm Sh})^{\frac{1}{4}}$ restricted to the manifold tangent space, achieving a similar $\mathcal{F}_t$-aware behaviour to ours but via a retraction step on $\mathcal{M}_r$ instead of a closed-form solution of~\eqref{eq:factor-system} in factor coordinates.

\textbf{Other directions (LoRA-RITE / LoRA-GA).}
LoRA-RITE~\citep{yen2025lorarite} introduces transformation invariance via a polar-decomposition-based reparameterization of the factor coordinates, with $\mathcal{F}_t = \bm{I}$;
LoRA-GA~\citep{wang2024lora-ga} addresses initialization through spectral alignment with full fine-tuning gradients.

These methods reveal a recurring trade-off: cheap factor-space schemes (identity replacement, block-diagonal approximations) typically take $\mathcal{F}_t = \bm{I}$ and discard gradient statistics, while methods admitting a non-trivial $\mathcal{F}_t$ (LoRA-Pro AdamW) pay $\mathcal{O}(mn)$ memory or operate in the ambient $\bm{W}$-space. A gradient-statistics-aware $\mathcal{F}_t$ paired with $\mathcal{O}((m+n)r)$ memory in the LoRA factor space remains an underexplored design point, which our method (\S~\ref{sec:proposed}) targets via the Adafactor diagonal Kronecker $\mathcal{F}_t = \bm{L}_t \otimes \bm{R}_t$ together with a closed-form solution of~\eqref{eq:factor-system} that picks a specific element of the $r^2$-dimensional affine solution set.

\begin{table*}[t]
\centering
\caption{Existing LoRA optimizers as instances of the framework~\eqref{eq:factor-system}, grouped by how they handle the singular operator $J_\mathcal{G}^* \mathcal{F}_t J_\mathcal{G}$. Inversion strategy = how $(J_\mathcal{G}^* \mathcal{F}_t J_\mathcal{G})^{-1}$ is replaced or approximated; $\mathcal{F}_t$ = the gradient-statistics structure on $\bm{W}$ used (``\textemdash'' means the method bypasses~\eqref{eq:factor-system} and does not instantiate $\mathcal{F}_t$). Per-step cost and memory are per-layer beyond forward/backward through $\bm{B}\bm{A}$ and storing $[\bm{B}_t, \bm{A}_t]$. $\bm{g} := \mathrm{vec}(\bm{G}_t) \in \mathbb{R}^{mn}$ denotes the vectorized $\bm{W}$-space gradient.}
\label{tab:lora-precond-compare}
\small
\renewcommand{\arraystretch}{1.15}
\setlength{\tabcolsep}{4pt}
\begin{tabular}{lllll}
\toprule
\textbf{Method} & \textbf{Inversion strategy} & $\mathcal{F}_t$ \textbf{on $\bm{W}$} & \textbf{Per-step cost} & \textbf{Memory} \\
\midrule
\multicolumn{5}{l}{\textit{Bypass~\eqref{eq:factor-system} (factor-space AdamW)}} \\
Vanilla LoRA~\citep{hu2022lora}            & \textemdash              & \textemdash & $\mathcal{O}((m+n)r)$         & $\mathcal{O}((m+n)r)$ \\
LoRA-RITE~\citep{yen2025lorarite}           & \textemdash              & \textemdash & $\mathcal{O}((m+n)r^2)$       & $\mathcal{O}((m+n)r)$ \\
\midrule
\multicolumn{5}{l}{\textit{Block-diagonal approx.\ of $J_\mathcal{G}^*J_\mathcal{G}$}} \\
LoRA+~\citep{hayou2024lora+}                & Block-scaling & $\bm{I}$ & $\mathcal{O}((m+n)r)$         & $\mathcal{O}((m+n)r)$ \\
Riem.\ Precond.~\citep{zhang2024RiemannianPreconditioned} & Block-diag  & $\bm{I}$ & $\mathcal{O}((m+n)r^2)$ & $\mathcal{O}((m+n)r)$ \\
\midrule
\multicolumn{5}{l}{\textit{Pseudoinverse of $J_\mathcal{G}$}} \\
LoRA-Pro SGD~\citep{wang2025lora-pro}   & $J_\mathcal{G}^\dag$  $\mathcal{F}_t^{-1}$ & $\bm{I}$ & $\mathcal{O}((m+n)r^2)$ & N/A  \\
LoRA-Pro AdamW~\citep{wang2025lora-pro}   & $J_\mathcal{G}^\dag$  $\mathcal{F}_t^{-1}$ & $\diag(\bm{g}\odot\bm{g})^{\frac{1}{2}}$ & $\mathcal{O}(mnr)$ & $\mathcal{O}(mn)$ \\

\midrule
\multicolumn{5}{l}{\textit{Riemannian gradient descent on $\mathcal{M}_r$}} \\
Riem.\ Muon~\citep{bogachev2025riemannian} & RGD on $\mathcal{M}_r$ & $(\bm{G}_t\bm{G}_t^\top)^{1/2}$ & $\mathcal{O}((m+n)r^2)$       & $\mathcal{O}((m+n)r)$ \\
RAdamW~\citep{bian2026finding} & RGD on $\mathcal{M}_r$ & $\diag(\bm{L}_{\rm Sh}\otimes\bm{R}_{\rm Sh})^{\frac{1}{4}}$      & $\mathcal{O}((m+n)r^2)$       & $\mathcal{O}((m+n)r)$ \\
\midrule
\textbf{AdaPreLoRA (ours)}  & solve \eqref{eq:factor-system} & Adafactor diag-Kron & $\mathcal{O}((m+n)r^2)$ & $\mathcal{O}((m+n)r)$ \\
\bottomrule
\end{tabular}
\end{table*}

\subsection{\texorpdfstring{Choosing $\mathcal{F}_t$: Adaptive Preconditioner Toolkit on $\bm{W}$}{Choosing F\^{}W: Adaptive Preconditioner Toolkit on W}}\label{subsec:full-weight-precond}

The gap identified above asks for an $\mathcal{F}_t$ that is gradient-statistics-based yet cheap on $\bm{W}$. We review the standard families of $\bm{W}$-space preconditioners, organized by memory cost. All families construct a second-moment-based preconditioner $\mathcal{F}_t : \mathbb{R}^{m \times n} \to \mathbb{R}^{m \times n}$ from gradient outer-product statistics of the form $\bm{G}_t \bm{G}_t^\top$ or $\bm{G}_t \odot \bm{G}_t$, and produce the preconditioned update $\bm{W}_{t+1} = \bm{W}_t - \eta_t \mathcal{F}_t^{-1} \bm{G}_t$; they differ in the structure imposed on $\mathcal{F}_t$, which trades off expressiveness against cost. AdaGrad~\citep{duchi2011AdaGrad} and Adam~\citep{kingma2014adam} approximate $\mathcal{F}_t$ by its diagonal $\bm{h}_t \in \mathbb{R}^{mn}$ as an exponential moving average of $\bm{G}_t \odot \bm{G}_t$, yielding per-coordinate rescaling that ignores the matrix structure of $\bm{G}_t$ at memory $\mathcal{O}(mn)$. Adafactor~\citep{shazeer2018adafactor} compresses this further into a rank-1 Kronecker form by maintaining only the row sums $\bm{l}_t \in \mathbb{R}^m$ and column sums $\bm{r}_t \in \mathbb{R}^n$ of $\bm{G}_t \odot \bm{G}_t$ (the elementwise Hadamard product), dropping the memory cost to $\mathcal{O}(m+n)$. Shampoo~\citep{gupta2018shampoo} maintains $\bm{L}_{{\rm Sh},t} = \bm{L}_{{\rm Sh},t-1} + \bm{G}_t \bm{G}_t^\top \in \mathbb{R}^{m \times m}$ and $\bm{R}_{{\rm Sh},t} = \bm{R}_{{\rm Sh},t-1} + \bm{G}_t^\top \bm{G}_t \in \mathbb{R}^{n \times n}$ and updates by $\bm{W}_{t+1} = \bm{W}_t - \bm{L}_{{\rm Sh},t}^{-\frac{1}{4}} \bm{G}_t \bm{R}_{{\rm Sh},t}^{-\frac{1}{4}}$; SOAP~\citep{morwani2024new_shampoo, vyas2025soap} runs Adam in the eigenbasis of the Shampoo preconditioner; and K-FAC~\citep{martens2015K-FAC, eschenhagen2023NIPS-K-FAC} factorizes $\mathcal{F}_t$ as the Kronecker product of activation and gradient covariances. All three impose $\mathcal{O}(m^2 + n^2)$ memory and $\mathcal{O}(m^3 + n^3)$ per-step inverse cost, which dominates LoRA's budgets. Among these candidates, the Adafactor diagonal Kronecker form $\mathcal{F}_t = \bm{L}_t \otimes \bm{R}_t$ is the only one that is simultaneously gradient-statistics-based and cheap ($\mathcal{O}(m+n)$ memory). Our method (\S~\ref{sec:proposed}) adopts this candidate and pairs it with a closed-form solution of the linear system~\eqref{eq:factor-system} that respects the LoRA factorization.

\section{The Proposed Algorithms}\label{sec:proposed}

We instantiate the framework~\eqref{eq:factor-system} of \S~\ref{subsec:lora-precond-related} with two specific choices: (i) for the $\bm{W}$-space preconditioner, the Adafactor diagonal Kronecker form $\mathcal{H}_t = \bm{L}_t^{1/2} \otimes \bm{R}_t^{1/2}$ on $\bm{W}$ (\S~\ref{subsec:weighted-metric}); (ii) for the element of the affine solution set of~\eqref{eq:factor-system}, the unique minimizer of the $\mathcal{H}_t$-imbalance criterion (Solution~\ref{sol:balance}). Choice (i) avoids inverting the singular operator $J_\mathcal{G}^* \mathcal{H}_t J_\mathcal{G}$ directly (Obstruction~\ref{obs:non-invertible}); choice (ii) resolves the $r^2$-dimensional ambiguity over $\ker(J_\mathcal{G})$ (Obstruction~\ref{obs:non-unique}). The closed-form factor update is given in Theorem~\ref{thm:find_delta_B_A}. Figure~\ref{fig:schematic} contrasts the resulting $\bm{W}$-update geometry against LoRA-Pro and Riemannian Preconditioned LoRA~\citep{zhang2024RiemannianPreconditioned} under the $\mathcal{H}_t$-weighted inner product.

\subsection{\texorpdfstring{The Adafactor Preconditioner $\mathcal{H}_t = \bm{L}_t^{1/2} \otimes \bm{R}_t^{1/2}$}{The Adafactor Preconditioner H = L^{1/2} kron R^{1/2}}}\label{subsec:weighted-metric}
We adopt the diagonal Kronecker preconditioner $\mathcal{H}_t = \bm{L}_t^{1/2} \otimes \bm{R}_t^{1/2}$ on $\bm{W}$, where $\bm{L}_t, \bm{R}_t$ are the Adafactor~\citep{shazeer2018adafactor} rank-$1$ second-moment estimate of $\bm{G}_t \odot \bm{G}_t$:
\begin{gather}\label{eq:def_LR}
    \bm{L}_t = \diag(\bm{l}_t / \sqrt{\|\bm{l}_t\|_1}), \quad \bm{R}_{t} = \diag(\bm{r}_t / \sqrt{\|\bm{r}_t\|_1}), \quad \textmd{with} \notag \\
    \bm{l}_t = \beta_1\bm{l}_{t-1} + (1-\beta_1)\sum_{j=1}^{n} (\bm{G}_t \odot \bm{G}_t)_{i,j}, \quad \bm{r}_t = \beta_2 \bm{r}_{t-1} + (1-\beta_2)\sum_{i=1}^m (\bm{G}_t \odot \bm{G}_t)_{i,j},
\end{gather}
where $\odot$ denotes the Hadamard product, $\|\cdot\|_1$ denotes the $\ell_1$-norm, and $\beta_1, \beta_2 \in [0,1]$ are decay rates. The vectors $\bm{l}_t \in \mathbb{R}^m$ and $\bm{r}_t \in \mathbb{R}^n$ are the diagonals of the moving averages of $\bm{G}_t \bm{G}_t^\top$ and $\bm{G}_t^\top \bm{G}_t$, respectively, so $\bm{l}_t \bm{r}_t^\top$ is the rank-$1$ Adafactor approximation of $\bm{G}_t \odot \bm{G}_t$~\citep{shazeer2018adafactor}. The memory cost is $\mathcal{O}(m+n)$.

We treat $\mathcal{H}_t$ as an operator on $\mathbb{R}^{m \times n}$, defined by $\mathcal{H}_t \bm{Y} := \bm{L}_t^{1/2} \bm{Y} \bm{R}_t^{1/2}$ for any $\bm{Y} \in \mathbb{R}^{m \times n}$, with inverse $\mathcal{H}_t^{-1} \bm{K} = \bm{L}_t^{-1/2} \bm{K} \bm{R}_t^{-1/2}$ (so $\mathcal{H}_t = \mathcal{F}_t^{1/2}$ for the underlying second-moment operator $\mathcal{F}_t = \bm{L}_t \otimes \bm{R}_t$). The $1/2$-power form ensures that the resulting preconditioned direction $\mathcal{H}_t^{-1} \bm{G}_t = \bm{L}_t^{-1/2} \bm{G}_t \bm{R}_t^{-1/2}$ matches Adafactor's standard square root second-moment update rule~\citep{shazeer2018adafactor} and the $1/2$-power Shampoo preconditioner advocated by SOAP~\citep{vyas2025soap, morwani2024new_shampoo} as the Frobenius-optimal Kronecker approximation of the gradient outer-product matrix $\sum_t \bm{G}_t \bm{G}_t^\top$. The associated inner product on $\mathbb{R}^{m \times n}$ is
\begin{equation}\label{eq:weighted-inner-product}
    \langle \bm{Y}, \bm{Z} \rangle_{\mathcal{H}_t} := \langle \mathcal{H}_t \bm{Y}, \bm{Z} \rangle = \langle \bm{L}_t^{1/2} \bm{Y} \bm{R}_t^{1/2}, \bm{Z} \rangle,
\end{equation}
where $\langle \cdot, \cdot \rangle$ is the Frobenius inner product.

\subsection{Solving the Linear System on Factor Space}\label{subsec:ouralgorithm}

With $\mathcal{H}_t = \bm{L}_t^{1/2} \otimes \bm{R}_t^{1/2}$ from \S~\ref{subsec:weighted-metric}, the factor-space linear system~\eqref{eq:factor-system} becomes
\begin{equation}\label{eq:factor-system-instantiated}
    J_\mathcal{G}^* \mathcal{H}_t J_\mathcal{G}\,[\bm{\Delta}_{\bm{B}_t}, \bm{\Delta}_{\bm{A}_t}] \;=\; J_\mathcal{G}^*(\bm{G}_t),
\end{equation}
in the candidate factor update $[\bm{\Delta}_{\bm{B}_t}, \bm{\Delta}_{\bm{A}_t}] \in \mathbb{R}^{m \times r} \times \mathbb{R}^{r \times n}$. The operator $J_\mathcal{G}^* \mathcal{H}_t J_\mathcal{G}$ is singular (Obstruction~\ref{obs:non-invertible}), so we cannot invert it.

\begin{solutionbox}
\solution{Bypass Obstruction~\ref{obs:non-invertible}: solve the equivalent least-squares problem}\label{sol:transport}
Equation~\eqref{eq:factor-system-instantiated} is the normal equation of
\begin{equation}\label{prob:fit-proj}
    \min_{\bm{\Delta}_{\bm{B}_t}, \bm{\Delta}_{\bm{A}_t}} \big\| J_\mathcal{G}[\bm{\Delta}_{\bm{B}_t}, \bm{\Delta}_{\bm{A}_t}] - \mathcal{H}_t^{-1}\bm{G}_t \big\|_{\mathcal{H}_t}^2,
\end{equation}
so solving~\eqref{prob:fit-proj} replaces inverting $J_\mathcal{G}^* \mathcal{H}_t J_\mathcal{G}$.
\end{solutionbox}

The following theorem characterizes the solution set of~\eqref{prob:fit-proj}.

\definecolor{cVanilla}{RGB}{50,135,190}   
\definecolor{cLoRAPro}{RGB}{195,108,30}   
\definecolor{cNaLoRA}{RGB}{155,110,196}   
\definecolor{cPlane}{RGB}{120,120,120}    
\begin{figure}[!h]
\begin{minipage}[c]{0.42\textwidth}
\captionof{figure}{Geometric contrast of LoRA optimizers under the $\mathcal{F}_t$-weighted inner product on $\mathbb{R}^{m \times n}$; $\mathcal{F}_t^{-1}\bm{G}_t$ is the gradient under this inner product, and all updates land in $\mathbb{T}_t = \mathrm{range}(J_\mathcal{G})$. From $\mathcal{F}_t^{-1}\bm{G}_t$, \textcolor{cNaLoRA}{\textbf{AdaPreLoRA}} drops $\mathcal{F}_t$-orthogonally; \textcolor{cLoRAPro}{\textbf{LoRA-Pro}} does not realize an orthogonal projection under the $\mathcal{F}_t$-weighted inner product. \textcolor{cVanilla}{\textbf{Riem.\ Precond.}} lies in $\mathbb{T}_t$ but is neither a Frobenius nor an $\mathcal{F}_t$-weighted orthogonal projection. LoRA-Pro and AdaPreLoRA coincide when $\mathcal{F}_t = \bm{I}$.}
\label{fig:schematic}
\end{minipage}\hfill
\begin{minipage}[c]{0.55\textwidth}
\centering
\begin{tikzpicture}[
  xscale=0.85, yscale=1.0,
  >=Stealth,
  every node/.style={font=\small},
]
  \fill[cPlane!12] (-0.4,-0.8) -- (5.2,-0.8) -- (6.6,1.2) -- (1.0,1.2) -- cycle;
  \draw[cPlane!70, thick] (-0.4,-0.8) -- (5.2,-0.8) -- (6.6,1.2) -- (1.0,1.2) -- cycle;
  \node[cPlane!90!black] at (2.9,-1.2) {$\mathbb{T}_t \subset \mathbb{R}^{m\times n}$, $\mathcal{F}_t$-weighted view};

  \coordinate (W) at (0.9,-0.3);
  \fill[black] (W) circle (2pt);
  \node[black, left=2pt of W] {$\bm{W}_t = \bm{B}_t \bm{A}_t$};

  \coordinate (FG) at (4.87,2.4);
  \draw[->, thick, black] (W) -- (FG);
  \node[black, above right=1pt and -10pt of FG] {$\mathcal{F}_t^{-1} \bm{G}_t$};

  \coordinate (G) at (3.2,2.4);
  \draw[->, thick, black] (W) -- (G);
  \node[black, above=1pt of G] {$\bm{G}_t$};

  \coordinate (S)  at (4.87,0.0);    
  \coordinate (LP) at (4.0,0.4);     
  \coordinate (V)  at (3.6,0.4);     

  \fill[cNaLoRA] (S) circle (2.2pt);
  \draw[dashed, cNaLoRA, very thick] (FG) -- (S);
  \node[cNaLoRA, below=2pt of S, align=center, xshift=10pt] {\footnotesize \textcolor{cNaLoRA}{\textbf{AdaPreLoRA}}: $\widetilde{\mathcal{P}}_{\mathbb{T}_t}(\mathcal{F}_t^{-1}\bm{G}_t)$};
  \draw[cNaLoRA, line width=1.0pt] ($(S)+(0.0,0.22)$) -- ($(S)+(0.22,0.22)$) -- ($(S)+(0.22,0.0)$);

  \fill[cLoRAPro] (LP) circle (1.5pt);
  \draw[dashed, cLoRAPro, thick] (FG) -- (LP);
  \node[cLoRAPro, below left=2pt and -20pt of LP] {\footnotesize \textcolor{cLoRAPro}{LoRA-Pro}};

  \fill[cVanilla] (V) circle (1.5pt);
  \draw[dashed, cVanilla, thick] (G) -- (V);
  \node[cVanilla, above left=2pt and -2pt of V] {\footnotesize \textcolor{cVanilla}{Riem.\ Precond.}};
\end{tikzpicture}
\end{minipage}
\end{figure}

\begin{theorem}[Solution set of~\eqref{prob:fit-proj}]\label{thm:proj-solution}
Let $\widetilde{\bm{G}}_t := \mathcal{H}_t^{-1}\bm{G}_t$. Since $J_\mathcal{G}[\bm{\Delta}_{\bm{B}_t}, \bm{\Delta}_{\bm{A}_t}] \in \mathbb{T}_t = \mathrm{range}(J_\mathcal{G})$, the minimum of~\eqref{prob:fit-proj} is attained iff
\begin{equation}\label{eq:proj-image}
    \bm{\Delta}_{\bm{B}_t}\bm{A}_t + \bm{B}_t\bm{\Delta}_{\bm{A}_t} \;=\; \widetilde{\mathcal{P}}_{\mathbb{T}_t}(\widetilde{\bm{G}}_t),
\end{equation}
the $\mathcal{H}_t$-orthogonal projection of $\widetilde{\bm{G}}_t$ onto $\mathbb{T}_t$ (closed form in Appendix~\ref{prop:proj_tangent_weight}). The minimizers form an $r^2$-parameter family (Appendix~\ref{sec:proof_find_delta_B_A}, Lemma~\ref{lem:find_delta_B_A_with_X})
\begin{equation}\label{eq:X-family}
\begin{aligned}
    \bm{\Delta}_{\bm{B}_t}(\bm{X}_t) &= \big[\bm{L}_t^{-1/2} - \bm{B}_t (\bm{B}_t^\top \bm{L}_t^{1/2} \bm{B}_t)^{-1} \bm{B}_t^\top\big]\, \bm{G}_{\bm{B}_t} (\bm{A}_t \bm{R}_t^{1/2} \bm{A}_t^\top)^{-1} + \bm{B}_t \bm{X}_t, \\
    \bm{\Delta}_{\bm{A}_t}(\bm{X}_t) &= (\bm{B}_t^\top \bm{L}_t^{1/2} \bm{B}_t)^{-1} \bm{G}_{\bm{A}_t} \bm{R}_t^{-1/2} - \bm{X}_t \bm{A}_t,
\end{aligned}
\qquad \bm{X}_t \in \mathbb{R}^{r \times r},
\end{equation}
where the offsets $(\bm{B}_t \bm{X}_t, -\bm{X}_t \bm{A}_t)$ parameterize $\ker(J_\mathcal{G})$.
\end{theorem}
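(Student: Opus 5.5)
The plan is to treat \eqref{prob:fit-proj} as a weighted linear least-squares problem in a Hilbert space and then exhibit one explicit minimizer. Throughout I assume $\bm{L}_t,\bm{R}_t\succ 0$ (so $\mathcal{H}_t\succ 0$), $\bm{B}_t$ of full column rank $r$, and $\bm{A}_t$ of full row rank $r$. Under these assumptions the $r\times r$ Gram matrices $\bm{B}_t^\top\bm{L}_t^{1/2}\bm{B}_t$ and $\bm{A}_t\bm{R}_t^{1/2}\bm{A}_t^\top$ are invertible, so \eqref{eq:X-family} is well defined.

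\textbf{Step 1 (projection characterization).} Equip $\mathbb{R}^{m\times n}$ with the inner product $\langle\cdot,\cdot\rangle_{\mathcal{H}_t}$ of \eqref{eq:weighted-inner-product}. The objective is then the squared distance from $\widetilde{\bm{G}}_t$ to the point $J_\mathcal{G}[\bm{\Delta}_{\bm{B}_t},\bm{\Delta}_{\bm{A}_t}]$, which ranges over the subspace $\mathbb{T}_t$. By the Hilbert projection theorem, the distance is minimized exactly when this image equals the $\mathcal{H}_t$-orthogonal projection $\widetilde{\mathcal{P}}_{\mathbb{T}_t}(\widetilde{\bm{G}}_t)$; this gives \eqref{eq:proj-image}.

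Equivalently, because the objective is a convex quadratic, $[\bm{P},\bm{Q}]$ is a minimizer iff it satisfies the normal equations. Using $\mathcal{H}_t\widetilde{\bm{G}}_t=\bm{G}_t$, these are
\[
J_\mathcal{G}^*\mathcal{H}_tJ_\mathcal{G}[\bm{P},\bm{Q}]=J_\mathcal{G}^*(\bm{G}_t),
\]
which is \eqref{eq:factor-system-instantiated}. I will work with this form, since checking it needs only the adjoint formula \eqref{eq:jacobian-def}.

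\textbf{Step 2 (affine structure).} The solution set of the normal equations is a particular solution plus $\ker(J_\mathcal{G}^*\mathcal{H}_tJ_\mathcal{G})$. Since $\mathcal{H}_t\succ0$, we have $\langle \mathcal{H}_tJ_\mathcal{G}v,J_\mathcal{G}v\rangle=0$ iff $J_\mathcal{G}v=0$, so this kernel equals $\ker(J_\mathcal{G})$. By Proposition~\ref{prop:gauge-kernel}, $\ker(J_\mathcal{G})=\{[\bm{B}_t\bm{X},-\bm{X}\bm{A}_t]:\bm{X}\in\mathbb{R}^{r\times r}\}$, which is $r^2$-dimensional. This produces the offsets $(\bm{B}_t\bm{X}_t,-\bm{X}_t\bm{A}_t)$ in \eqref{eq:X-family}, so it remains to verify that the $\bm{X}_t=0$ element $[\bm{P}_0,\bm{Q}_0]$ solves the normal equations.

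\textbf{Step 3 (verify the particular solution).} Set $\bm{Z}=\bm{P}_0\bm{A}_t+\bm{B}_t\bm{Q}_0$. By \eqref{eq:jacobian-def}, the normal equations read
\[
\bm{L}_t^{1/2}\bm{Z}\bm{R}_t^{1/2}\bm{A}_t^\top=\bm{G}_{\bm{B}_t},\qquad \bm{B}_t^\top\bm{L}_t^{1/2}\bm{Z}\bm{R}_t^{1/2}=\bm{G}_{\bm{A}_t}.
\]

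For the $\bm{A}$-component, the key identity is $\bm{B}_t^\top\bm{L}_t^{1/2}\bm{P}_0=0$. It holds because $\bm{B}_t^\top\bm{L}_t^{1/2}\big[\bm{L}_t^{-1/2}-\bm{B}_t(\bm{B}_t^\top\bm{L}_t^{1/2}\bm{B}_t)^{-1}\bm{B}_t^\top\big]=\bm{B}_t^\top-\bm{B}_t^\top=0$. Hence the $\bm{A}$-component reduces to $\bm{B}_t^\top\bm{L}_t^{1/2}\bm{B}_t\,\bm{Q}_0\bm{R}_t^{1/2}=\bm{G}_{\bm{A}_t}$, which holds by the definition of $\bm{Q}_0$.

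For the $\bm{B}$-component there are two terms.
\begin{itemize}
\item The $\bm{P}_0$ term: the factor $\bm{A}_t\bm{R}_t^{1/2}\bm{A}_t^\top$ cancels its inverse, leaving $\big[\bm{I}-\bm{L}_t^{1/2}\bm{B}_t(\bm{B}_t^\top\bm{L}_t^{1/2}\bm{B}_t)^{-1}\bm{B}_t^\top\big]\bm{G}_{\bm{B}_t}$.
\item The $\bm{Q}_0$ term: it gives $\bm{L}_t^{1/2}\bm{B}_t(\bm{B}_t^\top\bm{L}_t^{1/2}\bm{B}_t)^{-1}\bm{B}_t^\top\bm{G}_t\bm{A}_t^\top=\bm{L}_t^{1/2}\bm{B}_t(\bm{B}_t^\top\bm{L}_t^{1/2}\bm{B}_t)^{-1}\bm{B}_t^\top\bm{G}_{\bm{B}_t}$.
\end{itemize}
The two terms sum to $\bm{G}_{\bm{B}_t}$. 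Combining Steps 1--3 then yields both \eqref{eq:proj-image} and the full family \eqref{eq:X-family}.

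\textbf{Main obstacle.} I expect the delicate point to be the $\bm{B}$-component in Step 3. It works only because the bracket in $\bm{P}_0$ is an $\mathcal{H}_t$-oblique complement of $\operatorname{span}(\bm{B}_t)$, which simultaneously kills the cross term in the $\bm{A}$-equation and cancels the $\bm{Q}_0$ contribution in the $\bm{B}$-equation. I would carry out this bookkeeping carefully, keeping the left and right weights separate so that $\bm{L}_t^{1/2}$ never needs to commute with $\bm{B}_t$ (even though $\bm{L}_t$ is diagonal).
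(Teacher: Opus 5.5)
Your proof is correct, but it reaches the $\bm{X}_t$-family by a different route than the paper. Your Step 1 is essentially Proposition~\ref{prop:transport-optimality}, supplemented by the projection theorem for the converse direction.

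\textbf{The paper's route.} In Lemma~\ref{lem:find_delta_B_A_with_X} the paper \emph{derives} the family by block elimination:
\begin{itemize}
\item It writes the two stationarity equations and solves the $\bm{\Delta}_{\bm{B}_t}$-equation for $\bm{\Delta}_{\bm{B}_t}$ in terms of $\bm{\Delta}_{\bm{A}_t}$.
\item Substituting into the $\bm{\Delta}_{\bm{A}_t}$-equation gives $\bm{\Delta}_{\bm{A}_t}(\bm{I}-\widetilde{\bm{Q}}_{A_t}) = (\bm{B}_t^\top\bm{L}_t^{1/2}\bm{B}_t)^{-1}\bm{G}_{\bm{A}_t}\bm{R}_t^{-1/2}(\bm{I}-\widetilde{\bm{Q}}_{A_t})$.
\item It reads off the free term from the solution set of $\bm{Y}(\bm{I}-\widetilde{\bm{Q}}_{A_t})=\bm{0}$, namely $\{\bm{X}_t\bm{A}_t\}$.
\item Back-substitution then produces the $-\bm{B}_t\bm{X}_t$ offset.
\end{itemize}

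\textbf{Your route.} You instead \emph{verify} the $\bm{X}_t=\bm{0}$ element against the normal equations. You then get the rest from ``particular solution plus $\ker(J_\mathcal{G}^*\mathcal{H}_tJ_\mathcal{G})=\ker(J_\mathcal{G})$'' together with Proposition~\ref{prop:gauge-kernel}.

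\textbf{What your route buys.}
\begin{itemize}
\item It is shorter.
\item Completeness of the family becomes a one-line linear-algebra fact, rather than resting on each elimination step being reversible, which the paper leaves unstated.
\item It directly justifies the statement's claim that the offsets parameterize $\ker(J_\mathcal{G})$. The paper's elimination yields this only implicitly, and its lemma uses the opposite sign convention ($-\bm{B}_t\bm{X}_t$, $+\bm{X}_t\bm{A}_t$).
\item Working with $\widetilde{\bm{G}}_t$ directly avoids the paper's unremarked use of $J_\mathcal{G}^*\mathcal{H}_t\widetilde{\mathcal{P}}_{\mathbb{T}_t}(\widetilde{\bm{G}}_t)=J_\mathcal{G}^*\bm{G}_t$. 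The paper needs that identity because its objective $\Gamma$ targets the projection.
\end{itemize}

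\textbf{What the paper's route buys.} It is constructive: it finds the closed form without knowing it in advance, and it does not need the kernel proposition.

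Both arguments rely on the same standing assumptions: $\bm{L}_t,\bm{R}_t\succ 0$, $\bm{B}_t$ of full column rank, and $\bm{A}_t$ of full row rank. You state these explicitly; the paper leaves them implicit.
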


By Theorem~\ref{thm:proj-solution}, every factor pair in~\eqref{eq:X-family} induces the common $\bm{W}$-update $\widetilde{\mathcal{P}}_{\mathbb{T}_t}(\widetilde{\bm{G}}_t)$. The following solution selects a specific $\bm{X}_t$ to resolve this $r^2$-dimensional ambiguity (Obstruction~\ref{obs:non-unique}).

\begin{solutionbox}
\solution{$\mathcal{H}_t$-balance fixes the $\ker(J_\mathcal{G})$ ambiguity}\label{sol:balance}
Among the $\bm{X}_t$-family~\eqref{eq:X-family}, we select the unique element by choosing $\bm{X}_t$ to minimize the $\mathcal{H}_t$-imbalance $\| \bm{\Delta}_{\bm{B}_t}\bm{A}_t - \bm{B}_t\bm{\Delta}_{\bm{A}_t} \|_{\mathcal{H}_t}^2$ between the two factor contributions to the $\bm{W}$-update.
\end{solutionbox}

This criterion balances the magnitudes of the two factor contributions, in the same spirit as the regularizer in Imbalance-Regularized LoRA~\citep{zhu2024imbalanceLoRA} and the standard balance term used in nonconvex low-rank matrix recovery~\citep{tu2016low}. Combining Theorem~\ref{thm:proj-solution} with Solutions~\ref{sol:transport} and~\ref{sol:balance} fixes $\bm{X}_t$ in closed form and yields the full AdaPreLoRA update.
\begin{theorem}[AdaPreLoRA closed-form factor update]\label{thm:find_delta_B_A}
The unique factor update solving~\eqref{prob:fit-proj} together with the $\mathcal{H}_t$-balance criterion (Solution~\ref{sol:balance}) is
\[
    \bm{\Delta}_{\bm{B}_t}^{\rm opt} = \big(\bm{I} - \tfrac{1}{2}\widetilde{\bm{P}}_{B_t}\big) \bm{L}_t^{-1/2} \bm{G}_{\bm{B}_t} (\bm{A}_t \bm{R}_t^{1/2} \bm{A}_t^\top)^{-1}, \quad
    \bm{\Delta}_{\bm{A}_t}^{\rm opt} = (\bm{B}_t^\top \bm{L}_t^{1/2} \bm{B}_t)^{-1} \bm{G}_{\bm{A}_t} \bm{R}_t^{-1/2} \big(\bm{I} - \tfrac{1}{2}\widetilde{\bm{Q}}_{A_t}\big),
\]
where the $\mathcal{H}_t$-weighted projector matrices are
\begin{equation}\label{eq:proj-mats}
    \widetilde{\bm{P}}_{B_t} := \bm{B}_t (\bm{B}_t^\top \bm{L}_t^{1/2} \bm{B}_t)^{-1} \bm{B}_t^\top \bm{L}_t^{1/2}, \qquad
    \widetilde{\bm{Q}}_{A_t} := \bm{R}_t^{1/2} \bm{A}_t^\top (\bm{A}_t \bm{R}_t^{1/2} \bm{A}_t^\top)^{-1} \bm{A}_t.
\end{equation}
\end{theorem}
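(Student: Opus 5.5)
The plan is to start from the $r^2$-parameter family~\eqref{eq:X-family} of Theorem~\ref{thm:proj-solution} and treat the $\mathcal{H}_t$-balance criterion as a weighted least-squares problem in $\bm{X}_t$. We assume, as Theorem~\ref{thm:proj-solution} already does, that $\bm{B}_t$ has full column rank, $\bm{A}_t$ has full row rank, and $\bm{L}_t,\bm{R}_t\succ 0$. We abbreviate $\bm{M}:=(\bm{B}_t^\top\bm{L}_t^{1/2}\bm{B}_t)^{-1}$ and $\bm{N}:=(\bm{A}_t\bm{R}_t^{1/2}\bm{A}_t^\top)^{-1}$, and write $\bm{\Delta}_{\bm{B}}^0:=\bm{\Delta}_{\bm{B}_t}(\bm{0})$ and $\bm{\Delta}_{\bm{A}}^0:=\bm{\Delta}_{\bm{A}_t}(\bm{0})$.

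\textbf{Reduction to least squares.} Substituting the family gives
\[
\bm{\Delta}_{\bm{B}_t}(\bm{X})\bm{A}_t-\bm{B}_t\bm{\Delta}_{\bm{A}_t}(\bm{X})=\bm{D}_0+2\bm{B}_t\bm{X}\bm{A}_t,
\qquad \bm{D}_0:=\bm{\Delta}_{\bm{B}}^0\bm{A}_t-\bm{B}_t\bm{\Delta}_{\bm{A}}^0 .
\]
The objective $\|\bm{D}_0+2\bm{B}_t\bm{X}\bm{A}_t\|_{\mathcal{H}_t}^2$ is therefore a convex quadratic in $\bm{X}$. It is strictly convex because the map $\bm{X}\mapsto\bm{B}_t\bm{X}\bm{A}_t$ is injective under the rank assumptions and $\mathcal{H}_t\succ0$. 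This gives uniqueness of the minimizer.

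\textbf{Normal equation.} Setting the gradient to zero gives
\[
\bm{B}_t^\top\bm{L}_t^{1/2}(\bm{D}_0+2\bm{B}_t\bm{X}\bm{A}_t)\bm{R}_t^{1/2}\bm{A}_t^\top=\bm{0},
\]
so
\[
\bm{X}^\star=-\tfrac12\,\bm{M}\,\bm{B}_t^\top\bm{L}_t^{1/2}\bm{D}_0\bm{R}_t^{1/2}\bm{A}_t^\top\,\bm{N}.
\]
The key simplification is to evaluate $\bm{B}_t^\top\bm{L}_t^{1/2}\bm{D}_0\bm{R}_t^{1/2}\bm{A}_t^\top$ using $\bm{G}_{\bm{B}_t}=\bm{G}_t\bm{A}_t^\top$ and $\bm{G}_{\bm{A}_t}=\bm{B}_t^\top\bm{G}_t$.

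For the $\bm{\Delta}_{\bm{B}}^0$ term, the trailing $\bm{N}\bm{A}_t\bm{R}_t^{1/2}\bm{A}_t^\top$ collapses to $\bm{I}$. What remains is $\bm{B}_t^\top\bm{L}_t^{1/2}(\bm{L}_t^{-1/2}-\bm{B}_t\bm{M}\bm{B}_t^\top)\bm{G}_{\bm{B}_t}=\bm{B}_t^\top\bm{G}_{\bm{B}_t}-\bm{B}_t^\top\bm{G}_{\bm{B}_t}=\bm{0}$.

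For the $\bm{\Delta}_{\bm{A}}^0$ term, the leading $\bm{B}_t^\top\bm{L}_t^{1/2}\bm{B}_t\bm{M}$ collapses to $\bm{I}$. What remains is $\bm{G}_{\bm{A}_t}\bm{R}_t^{-1/2}\bm{R}_t^{1/2}\bm{A}_t^\top=\bm{B}_t^\top\bm{G}_t\bm{A}_t^\top$.

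Since $\bm{D}_0$ enters with a minus sign on this term, we obtain
\[
\bm{X}^\star=\tfrac12\,\bm{M}\,\bm{B}_t^\top\bm{G}_t\bm{A}_t^\top\,\bm{N}.
\]

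\textbf{Substitution back.} For $\bm{\Delta}_{\bm{B}_t}$, the term $\bm{B}_t\bm{X}^\star=\tfrac12\bm{B}_t\bm{M}\bm{B}_t^\top\bm{G}_{\bm{B}_t}\bm{N}$ cancels half of $-\bm{B}_t\bm{M}\bm{B}_t^\top\bm{G}_{\bm{B}_t}\bm{N}$. Using $\widetilde{\bm{P}}_{B_t}\bm{L}_t^{-1/2}=\bm{B}_t\bm{M}\bm{B}_t^\top$, the result is $(\bm{I}-\tfrac12\widetilde{\bm{P}}_{B_t})\bm{L}_t^{-1/2}\bm{G}_{\bm{B}_t}\bm{N}$.

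For $\bm{\Delta}_{\bm{A}_t}$, we have $-\bm{X}^\star\bm{A}_t=-\tfrac12\bm{M}\bm{G}_{\bm{A}_t}\bm{A}_t^\top\bm{N}\bm{A}_t$. Using $\bm{R}_t^{-1/2}\widetilde{\bm{Q}}_{A_t}=\bm{A}_t^\top\bm{N}\bm{A}_t$, this equals $-\tfrac12\bm{M}\bm{G}_{\bm{A}_t}\bm{R}_t^{-1/2}\widetilde{\bm{Q}}_{A_t}$, which yields the stated $\bm{\Delta}_{\bm{A}_t}^{\rm opt}$.

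\textbf{Main obstacle.} The step needing most care is the evaluation of $\bm{B}_t^\top\bm{L}_t^{1/2}\bm{D}_0\bm{R}_t^{1/2}\bm{A}_t^\top$. The argument hinges on the exact cancellation of the $\bm{\Delta}_{\bm{B}}^0$ contribution, which in turn depends on the precise form of~\eqref{eq:X-family}. Everything else is bookkeeping with the identities $\widetilde{\bm{P}}_{B_t}\bm{L}_t^{-1/2}=\bm{B}_t\bm{M}\bm{B}_t^\top$ and $\bm{R}_t^{-1/2}\widetilde{\bm{Q}}_{A_t}=\bm{A}_t^\top\bm{N}\bm{A}_t$.
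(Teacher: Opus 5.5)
Your proposal is correct and follows the paper's proof (Lemmas~\ref{lem:find_delta_B_A_with_X} and~\ref{lem:find_X}) essentially step for step: you substitute the $\bm{X}_t$-family into the imbalance, set the $\bm{X}_t$-gradient to zero, use $\bm{B}_t^\top\bm{L}_t^{1/2}(\bm{I}-\widetilde{\bm{P}}_{B_t})=\bm{0}$ to kill the $\bm{\Delta}_{\bm{B}}^0$ contribution, and substitute back. Your $\bm{X}^\star$ differs in sign from the paper's $\bm{X}_t^{\rm opt}$ only because the appendix lemma writes the kernel offset as $(-\bm{B}_t\bm{X}_t,\,\bm{X}_t\bm{A}_t)$ rather than $(\bm{B}_t\bm{X}_t,\,-\bm{X}_t\bm{A}_t)$, and your strict-convexity argument supplies the uniqueness claim that the paper leaves implicit.
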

Under the $\mathcal{H}_t$-balance criterion (Solution~\ref{sol:balance}), two features are worth highlighting: (i) the update depends on the gradient only through the low-rank factor gradients $\bm{G}_{\bm{A}_t}, \bm{G}_{\bm{B}_t}$, keeping per-step memory at $\mathcal{O}((m+n)r)$; (ii) the $\frac{1}{2}$ coefficients on the projectors are the signature of the $\mathcal{H}_t$-balance choice. The full procedure, which we refer to as AdaPreLoRA throughout, is summarized as Algorithm~\ref{alg:RAdaGrad} in the appendix; its computational complexity is analyzed in Appendix~\ref{sec:alg-anal}. For practical use we also provide an Adam variant, given as Algorithm~\ref{alg:RAdaGradM}.

\section{Experimental Results}\label{sec:experiment}

We evaluate AdaPreLoRA against representatives of the design families identified in Table~\ref{tab:lora-precond-compare}: vanilla LoRA / AdamW (identity replacement), Scaled GD / Scaled AdamW (Riemannian Preconditioned LoRA~\citep{zhang2024RiemannianPreconditioned} with SGD / AdamW; block-diagonal $J_\mathcal{G}^* J_\mathcal{G}$), LoRA-Pro SGD / AdamW~\citep{wang2025lora-pro} (Moore--Penrose $J_\mathcal{G}^\dag$), and SOAP~\citep{vyas2025soap} (direct on $\bm{W}$). Three axes test the trade-off identified in \S~\ref{subsec:lora-precond-related}: model scale (124M--355M GPT-2 vs.\ 7B Mistral/Qwen2), task family (NLU, reasoning, math, generation, image), and resource cost (peak GPU memory and per-step time). Following~\citep{zhang2024RiemannianPreconditioned, wang2025lora-pro}, learning rates are independently tuned per optimizer via grid search; full hyperparameters are in Appendix~\ref{sec:gpt2-setting}. All runs use PyTorch~\citep{paszke2019pytorch} on NVIDIA A100 GPUs.

\subsection{Controlled study: GPT-2}\label{sec:GPT2}

We start with controlled fine-tuning of GPT-2~\citep{radford2019gpt2} (small, 124M; medium, 355M) on the E2E natural language generation challenge~\citep{novikova2017e2e}, sweeping rank $r \in \{4, 16, 64\}$ to probe the conditioning-vs.\ overparameterization trade-off and isolate the effect of $\mathcal{F}_t$ at small scale.

\begin{table*}[t]
\caption{GPT-2 fine-tuning on E2E at $r=4$, across model size (small / medium). Bold/underline = best/second-best per metric per (model, optimizer family). Cross-rank ablation ($r \in \{16, 64\}$) and DART results are reported in Appendix~\ref{app:GPT2}.}\label{tab:score_GPT2}
\centering
\small
\begin{tabular}{cccccccc}
\toprule
\multirow{2}{*}{Model} & \multirow{2}{*}{$r$} & \multirow{2}{*}{Method} & \multicolumn{5}{c}{E2E} \\
\cmidrule(lr){4-8}
& & & BLEU & NIST & MET & ROUGE-L & CIDEr \\
\midrule
\multirow{8}{*}{\shortstack{GPT-2\\small}} & \multirow{8}{*}{4} & SGD & 54.8 & 4.56 & 34.0 & 63.3 & 1.29 \\
& & Scaled GD & \underline{68.5} & \underline{8.72} & \underline{45.5} & 69.4 & 2.40 \\
& & LoRA-Pro SGD & 68.4 & \underline{8.72} & \underline{45.5} & \underline{69.6} & \underline{2.43} \\
& & \textbf{AdaPreLoRA SGD (ours)} & \textbf{69.5} & \textbf{8.77} & \textbf{46.5} & \textbf{71.5} & \textbf{2.50} \\
\cmidrule{3-8}
& & AdamW & 69.1 & 8.75 & 46.0 & 70.5 & 2.47 \\
& & Scaled AdamW & \underline{69.5} & \underline{8.80} & \underline{46.2} & \underline{70.9} & \underline{2.48} \\
& & LoRA-Pro AdamW & 69.2 & 8.73 & 45.9 & 70.8 & 2.47 \\
& & \textbf{AdaPreLoRA AdamW (ours)} & \textbf{70.0} & \textbf{8.84} & \textbf{46.3} & \textbf{71.3} & \textbf{2.50} \\
\cmidrule{1-8}
\multirow{8}{*}{\shortstack{GPT-2\\medium}} & \multirow{8}{*}{4} & SGD & 66.6 & 8.54 & 44.2 & 68.2 & 2.32 \\
& & Scaled GD & 69.2 & 8.71 & 46.3 & \underline{70.9} & 2.48 \\
& & LoRA-Pro SGD & \underline{69.7} & \underline{8.77} & \underline{46.5} & \underline{70.9} & \underline{2.50} \\
& & \textbf{AdaPreLoRA SGD (ours)} & \textbf{70.3} & \textbf{8.84} & \textbf{46.9} & \textbf{71.7} & \textbf{2.54} \\
\cmidrule{3-8}
& & AdamW & 68.9 & 8.69 & 46.5 & 71.3 & 2.51 \\
& & Scaled AdamW & 69.6 & 8.77 & \underline{46.6} & \textbf{71.8} & \underline{2.52} \\
& & LoRA-Pro AdamW & \underline{69.8} & \underline{8.78} & 46.5 & \underline{71.7} & \underline{2.52} \\
& & \textbf{AdaPreLoRA AdamW (ours)} & \textbf{70.3} & \textbf{8.84} & \textbf{46.7} & \textbf{71.8} & \textbf{2.53} \\
\bottomrule
\end{tabular}
\end{table*}

Table~\ref{tab:score_GPT2} reports E2E scores at $r=4$ on both GPT-2 small and medium. AdaPreLoRA achieves the best or tied-best score on every metric across both SGD-based and AdamW-based families and both model sizes; the gain is largest in the SGD-based group, where vanilla LoRA's identity replacement is most exposed, and persists for AdamW-based methods despite the smaller absolute headroom. Adding gradient statistics through Scaled GD or LoRA-Pro narrows but does not close this gap: AdaPreLoRA's $\mathcal{H}_t$-orthogonal projection of the Adafactor-preconditioned direction $\mathcal{H}_t^{-1} \bm{G}_t$ exploits curvature information that block-diagonal and Euclidean-projection schemes leave on the table. To further validate the effectiveness of AdaPreLoRA, we also conducted GPT-2 fine-tuning experiments on the DART~\citep{nan2021dart} dataset, with the results reported in Table~\ref{tab:gpt2-dart}, the results show that the gains transfer to a different generation benchmark. Cross-rank ablations at $r \in \{16, 64\}$ (Appendix~\ref{sec:gpt2-rank-ablation}, Table~\ref{tab:score_GPT2_rank_ablation}) confirm the same ordering.

\begin{table}[htbp]
\centering
\caption{Scores of GPT-2 small model (rank=4) fine-tuned using different optimizers. Evaluation is conducted on DART dataset.}
\label{tab:gpt2-dart}
\begin{tabular}{lccccc}
\toprule
Methods
 & BLEU$\uparrow$  & METEOR$\uparrow$  & chrF++$\uparrow$  & TER$\downarrow$ & BLEURT$\uparrow$ \\
\midrule
SGD     & 41.2  &   0.63 &  0.59 & 0.52 & 0.33 \\
Scaled GD    &  43.8 & \textbf{0.66}  & 0.61 &  0.50 & 0.38 \\
LoRA-Pro SGD    &  44.1 &  \textbf{0.66} & 0.61  & 0.50 & 0.38 \\
AdaPreLoRA SGD (ours)  & \textbf{44.6}  &  \textbf{0.66}  & \textbf{0.62}  & \textbf{0.49} & \textbf{0.39} \\
\midrule
AdamW     & 43.9  & 0.66 &  0.60 & 0.50 & 0.38 \\
Scaled AdamW    & 44.8  & \textbf{0.67} & \textbf{0.62} & 0.49 & \textbf{0.40 } \\ 
LoRA-Pro AdamW   & 44.9 & 0.66 & \textbf{0.62} & 0.50 & 0.39 \\
AdaPreLoRA AdamW (ours)   & \textbf{45.4}  & \textbf{0.67}  & 0.60  &  \textbf{0.49} & \textbf{0.40} \\
\bottomrule
\end{tabular}
\end{table}

\subsection{Extension to 7B-scale LLMs: Mistral-7B and Qwen2-7B}\label{sec:llm-7b}

We next test whether the controlled-setting gains carry over to the 7B parameter scale, using Mistral-7B~\citep{jiang2023mistral} and Qwen2-7B~\citep{yang2024qwen2}. For Mistral-7B we fine-tune on the GLUE~\citep{wang2018glue} tasks RTE, CoLA, and MRPC, with three random seeds on RTE; for Qwen2-7B we additionally evaluate on the reasoning benchmark ARC~\citep{clark2018arc} and the math benchmark GSM8K~\citep{cobbe2021gsm8k}.

\begin{table*}[t]
\caption{Mistral-7B and Qwen2-7B fine-tuning accuracy (\%) with rank $r=8$. Bold/underline = best/second-best.}\label{tab:llm-7b}
\centering
\small
\begin{tabular}{lccc@{\hskip 16pt}cccc}
\toprule
 & \multicolumn{3}{c}{\textbf{Mistral-7B}} & \multicolumn{4}{c}{\textbf{Qwen2-7B}} \\
\cmidrule(lr){2-4}\cmidrule(lr){5-8}
Method & RTE & CoLA & MRPC & RTE & MRPC & ARC & GSM8K \\
\midrule
AdamW                & \underline{89.4} & 69.4             & 89.5             & \underline{90.6} & \underline{90.0} & 84.3             & 75.1 \\
Scaled AdamW         & 89.1             & \textbf{71.5}    & \underline{89.7} & \underline{90.6} & 87.0             & \underline{85.3} & 74.2 \\
LoRA-Pro AdamW       & 88.8             & 68.5             & 84.8             & 88.1             & 89.2             & 80.9             & \underline{75.7} \\
SOAP                 & 88.9             & 68.3             & 86.3             & 86.6             & 89.2             & 80.9             & 73.2 \\
\midrule
\textbf{AdaPreLoRA AdamW (ours)} & \textbf{89.5} & \underline{71.4} & \textbf{90.0} & \textbf{91.0} & \textbf{90.4} & \textbf{85.6} & \textbf{76.4} \\
\bottomrule
\end{tabular}
\end{table*}


\begin{table*}[htbp]
\centering
\small
\begin{minipage}[c]{0.42\textwidth}
\centering
\caption{Per-step GPU time and peak GPU memory on Mistral-7B. AdaPreLoRA matches the LoRA-level memory footprint of vanilla AdamW, while LoRA-Pro AdamW pays roughly $2\times$ peak memory for its full-weight gradient and moments.}\label{tab:mem-time}
\setlength{\tabcolsep}{4pt}
\begin{tabular}{@{}lcc@{}}
\toprule
Method & s/step & Mem (GB) \\
\midrule
AdamW               & 0.24 & 25.8 \\
Scaled AdamW        & 0.36 & 26.0 \\
SOAP                & 0.65 & 26.4 \\
LoRA-Pro AdamW      & 1.35 & 50.4 \\
\textbf{AdaPreLoRA SGD}   & 0.46 & 21.5 \\
\textbf{AdaPreLoRA AdamW} & 0.99 & 26.0 \\
\bottomrule
\end{tabular}
\end{minipage}
\hfill
\begin{minipage}[c]{0.56\textwidth}
\centering
\caption{CLIP and FID scores on Mix-of-Show diffusion fine-tuning. Bold/underline = best/second-best.}\label{tab:fid_clip}
\setlength{\tabcolsep}{4pt}
\begin{tabular}{lcccc}
\toprule
\multirow{2}{*}{Method}  & \multicolumn{2}{c}{scaling=0.7}  & \multicolumn{2}{c}{scaling=1} \\
\cmidrule(lr){2-3} \cmidrule(lr){4-5}
 & CLIP$\uparrow$  & FID$\downarrow$  & CLIP$\uparrow$  & FID$\downarrow$ \\
\midrule
SGD              & 27.79              & 69.90              & \underline{31.40} & 40.95 \\
Scaled GD        & \underline{31.23} & 35.86              & 30.60              & 29.62 \\
LoRA-Pro SGD     & \textbf{31.47}    & \underline{34.30} & 30.48              & \underline{29.19} \\
\textbf{AdaPreLoRA SGD}   & \textbf{31.47} & \textbf{30.17} & \textbf{31.58} & \textbf{28.18} \\
\midrule
AdamW            & \textbf{31.47}    & 34.15              & \underline{30.68} & \underline{27.80} \\
Scaled AdamW     & 24.21              & 48.23              & 24.51              & 34.18 \\
LoRA-Pro AdamW   & \underline{31.04} & \underline{29.18} & 30.60              & 28.18 \\
\textbf{AdaPreLoRA AdamW} & \textbf{31.47} & \textbf{29.01} & \textbf{30.73} & \textbf{27.13} \\
\bottomrule
\end{tabular}
\end{minipage}
\end{table*}

Table~\ref{tab:llm-7b} shows that AdaPreLoRA achieves the best accuracy on six of the seven settings (second-best on Mistral-7B CoLA, $0.1$ point behind Scaled AdamW), and the lowest variance on Mistral-7B RTE among methods reporting std. Notably, gradient-statistics-aware baselines that pay $\mathcal{O}(mn)$ memory, LoRA-Pro AdamW (full-weight gradient + AdamW moments) and SOAP (full Shampoo on $\bm{W}$) do not translate that extra cost into accuracy at 7B; both trail vanilla AdamW on multiple Qwen2-7B tasks (Qwen2-7B RTE: $88.1$, $86.6$ vs.\ AdamW $90.6$). Table~\ref{tab:mem-time} contrasts per-step GPU time and peak memory on Mistral-7B GLUE-RTE: AdaPreLoRA-AdamW matches Scaled AdamW's peak memory ($26.0$ GB) while LoRA-Pro AdamW requires $50.4$ GB ($\sim 2\times$) due to materializing the full-weight gradient and its first/second moments; AdaPreLoRA-SGD has the lowest peak memory of all methods ($21.5$ GB).

\subsection{Diffusion model personalization (Mix-of-Show)}\label{sec:main_part_diffusion}

To test transfer beyond NLP, we evaluate AdaPreLoRA on diffusion-model personalization with the Mix-of-Show framework~\citep{gu2023mix-of-show}, which uses Embedding Decomposed LoRA (EDLoRA) on the text encoder and U-Net of a Stable Diffusion backbone. Setup follows~\citep{zhang2024RiemannianPreconditioned, gu2023mix-of-show} with embedding tuning disabled. We report CLIP score~\citep{hessel2021clipscore} (alignment with prompt; higher is better) and FID~\citep{heusel2017fid-distance} (distributional similarity to reference images; lower is better) at LoRA scaling factors $0.7$ and $1.0$.

Table~\ref{tab:fid_clip} shows that AdaPreLoRA achieves the lowest FID at every scaling-optimizer combination and the best CLIP at scaling $1.0$ in both SGD- and AdamW-based families, while remaining competitive with the best baseline at scaling $0.7$. This confirms that the gains from gradient-statistics-aware preconditioning carry over from text generation to image generation. Qualitative samples (Harry Potter / Hermione Granger) and per-prompt grids across LoRA scaling factors are in Appendix~\ref{sec:diffusion}.

\section{Conclusion}
We organized existing LoRA optimizers along two axes: (i) which invertible surrogate is used for the singular operator $J_\mathcal{G}^* \mathcal{F}_t J_\mathcal{G}$, and (ii) the choice of $\bm{W}$-space preconditioner $\mathcal{F}_t$ (\S~\ref{subsec:lora-precond-related}, Table~\ref{tab:lora-precond-compare}). This framework exposes a previously underexplored design point: pairing a non-trivial gradient-statistics-aware $\mathcal{F}_t$ with LoRA's $\mathcal{O}((m+n)r)$ memory budget. We instantiate this point as \textbf{AdaPreLoRA}, combining an Adafactor diagonal Kronecker preconditioner $\mathcal{H}_t$ on $\bm{W}$ with an $\mathcal{H}_t$-balance criterion that selects a unique factor update from the affine solution set of~\eqref{eq:factor-system}; by construction, the resulting factor update is the closest LoRA approximation to the preconditioned $\bm{W}$-space direction under the $\mathcal{H}_t$-weighted norm, and admits a closed-form expression. Across GPT-2, Mistral-7B, Qwen2-7B, and Mix-of-Show diffusion personalization, AdaPreLoRA is competitive with or improves over a representative set of LoRA optimizers while keeping peak GPU memory at the LoRA optimizer level. 

The two-axis framework is not specific to AdaPreLoRA: alternative choices for either axis remain open. Two natural extensions of AdaPreLoRA itself are (i) Mixture-of-Experts adapters, where each expert carries its own pair of low-rank factors and the framework applies per expert, and (ii) quantized backbones (QLoRA), where the $\bm{W}$-space gradient must be reconstructed from quantized weights and dequantization-aware preconditioner statistics. Beyond language models, extending AdaPreLoRA to diffusion transformers (DiT) requires handling the cross-attention adapters and time-conditioning structure, where the assumption that a single $\mathcal{H}_t$ summarizes per-step gradient statistics may need to be relaxed. We leave a systematic study of these directions to future work.


\newpage
{
\bibliographystyle{plainnat}
\bibliography{main}

\begin{thebibliography}{40}
\providecommand{\natexlab}[1]{#1}
\providecommand{\url}[1]{\texttt{#1}}
\expandafter\ifx\csname urlstyle\endcsname\relax
  \providecommand{\doi}[1]{doi: #1}\else
  \providecommand{\doi}{doi: \begingroup \urlstyle{rm}\Url}\fi

\bibitem[Absil et~al.(2009)Absil, Mahony, and Sepulchre]{absil2009optimization}
P-A Absil, Robert Mahony, and Rodolphe Sepulchre.
\newblock Optimization algorithms on matrix manifolds.
\newblock In \emph{Optimization Algorithms on Matrix Manifolds}. Princeton University Press, 2009.

\bibitem[Achiam et~al.(2023)Achiam, Adler, Agarwal, Ahmad, Akkaya, Aleman, Almeida, Altenschmidt, Altman, Anadkat, et~al.]{achiam2023gpt4}
Josh Achiam, Steven Adler, Sandhini Agarwal, Lama Ahmad, Ilge Akkaya, Florencia~Leoni Aleman, Diogo Almeida, Janko Altenschmidt, Sam Altman, Shyamal Anadkat, et~al.
\newblock Gpt-4 technical report.
\newblock \emph{arXiv preprint arXiv:2303.08774}, 2023.

\bibitem[Bian et~al.(2024)Bian, Cai, and Zhang]{bian2024preconditioned}
Fengmiao Bian, Jian-Feng Cai, and Rui Zhang.
\newblock A preconditioned riemannian gradient descent algorithm for low-rank matrix recovery.
\newblock \emph{SIAM Journal on Matrix Analysis and Applications}, 45\penalty0 (4):\penalty0 2075--2103, 2024.

\bibitem[Bian et~al.(2026)Bian, ZHENG, Liu, Luo, and Cai]{bian2026finding}
Fengmiao Bian, Jinyang ZHENG, Ziyun Liu, Jianzhou Luo, and Jian-Feng Cai.
\newblock Finding low-rank matrix weights in {DNN}s via riemannian optimization: {RA}dagrad and {RA}damw.
\newblock In \emph{Advances in neural information processing systems (NeurIPS)}, 2026.
\newblock URL \url{https://openreview.net/forum?id=tiGFiCrmKm}.

\bibitem[Bogachev et~al.(2025)Bogachev, Aletov, Molozhavenko, Bobkov, Soboleva, Alanov, and Rakhuba]{bogachev2025riemannian}
Vladimir Bogachev, Vladimir Aletov, Alexander Molozhavenko, Denis Bobkov, Vera Soboleva, Aibek Alanov, and Maxim Rakhuba.
\newblock Lora meets riemannion: Muon optimizer for parametrization-independent low-rank adapters.
\newblock \emph{arXiv preprint arXiv:2507.12142}, 2025.

\bibitem[Clark et~al.(2018)Clark, Cowhey, Etzioni, Khot, Sabharwal, Schoenick, and Tafjord]{clark2018arc}
Peter Clark, Isaac Cowhey, Oren Etzioni, Tushar Khot, Ashish Sabharwal, Carissa Schoenick, and Oyvind Tafjord.
\newblock Think you have solved question answering? try {ARC}, the {AI2} reasoning challenge, 2018.
\newblock URL \url{https://arxiv.org/abs/1803.05457}.

\bibitem[Cobbe et~al.(2021)Cobbe, Kosaraju, Bavarian, Chen, Jun, Kaiser, Plappert, Tworek, Hilton, Nakano, Hesse, and Schulman]{cobbe2021gsm8k}
Karl Cobbe, Vineet Kosaraju, Mohammad Bavarian, Mark Chen, Heewoo Jun, Lukasz Kaiser, Matthias Plappert, Jerry Tworek, Jacob Hilton, Reiichiro Nakano, Christopher Hesse, and John Schulman.
\newblock Training verifiers to solve math word problems, 2021.
\newblock URL \url{https://arxiv.org/abs/2110.14168}.

\bibitem[Duchi et~al.(2011)Duchi, Hazan, and Singer]{duchi2011AdaGrad}
John Duchi, Elad Hazan, and Yoram Singer.
\newblock Adaptive subgradient methods for online learning and stochastic optimization.
\newblock \emph{Journal of Machine Learning Research}, 12\penalty0 (7), 2011.

\bibitem[Gu et~al.(2023)Gu, Wang, Wu, Shi, Chen, Fan, Xiao, Zhao, Chang, Wu, et~al.]{gu2023mix-of-show}
Yuchao Gu, Xintao Wang, Jay~Zhangjie Wu, Yujun Shi, Yunpeng Chen, Zihan Fan, Wuyou Xiao, Rui Zhao, Shuning Chang, Weijia Wu, et~al.
\newblock Mix-of-show: Decentralized low-rank adaptation for multi-concept customization of diffusion models.
\newblock In \emph{Advances in Neural Information Processing Systems (NeurIPS)}, volume~36, pages 15890--15902, 2023.

\bibitem[Gupta et~al.(2018)Gupta, Koren, and Singer]{gupta2018shampoo}
Vineet Gupta, Tomer Koren, and Yoram Singer.
\newblock Shampoo: Preconditioned stochastic tensor optimization.
\newblock In \emph{International Conference on Machine Learning (ICML)}, pages 1842--1850. PMLR, 2018.

\bibitem[Hayou et~al.(2024)Hayou, Ghosh, and Yu]{hayou2024lora+}
Soufiane Hayou, Nikhil Ghosh, and Bin Yu.
\newblock Lora+: Efficient low rank adaptation of large models.
\newblock In \emph{International Conference on Machine Learning (ICML)}, pages 17783--17806. PMLR, 2024.

\bibitem[Hessel et~al.(2021)Hessel, Holtzman, Forbes, Le~Bras, and Choi]{hessel2021clipscore}
Jack Hessel, Ari Holtzman, Maxwell Forbes, Ronan Le~Bras, and Yejin Choi.
\newblock Clipscore: A reference-free evaluation metric for image captioning.
\newblock In \emph{EMNLP (1)}, 2021.

\bibitem[Heusel et~al.(2017)Heusel, Ramsauer, Unterthiner, Nessler, and Hochreiter]{heusel2017fid-distance}
Martin Heusel, Hubert Ramsauer, Thomas Unterthiner, Bernhard Nessler, and Sepp Hochreiter.
\newblock Gans trained by a two time-scale update rule converge to a local nash equilibrium.
\newblock In \emph{Advances in neural information processing systems (NeurIPS)}, volume~30, 2017.

\bibitem[Hu et~al.(2022)Hu, Shen, Wallis, Allen-Zhu, Li, Wang, Wang, Chen, et~al.]{hu2022lora}
Edward~J Hu, Yelong Shen, Phillip Wallis, Zeyuan Allen-Zhu, Yuanzhi Li, Shean Wang, Lu~Wang, Weizhu Chen, et~al.
\newblock Lora: Low-rank adaptation of large language models.
\newblock In \emph{International Conference on Learning Representations (ICLR)}, page~3, 2022.

\bibitem[Jiang et~al.(2023)Jiang, Sablayrolles, Mensch, Bamford, Chaplot, de~las Casas, Bressand, Lengyel, Lample, Saulnier, Lavaud, Lachaux, Stock, Scao, Lavril, Wang, Lacroix, and Sayed]{jiang2023mistral}
Albert~Q. Jiang, Alexandre Sablayrolles, Arthur Mensch, Chris Bamford, Devendra~Singh Chaplot, Diego de~las Casas, Florian Bressand, Gianna Lengyel, Guillaume Lample, Lucile Saulnier, Lélio~Renard Lavaud, Marie-Anne Lachaux, Pierre Stock, Teven~Le Scao, Thibaut Lavril, Thomas Wang, Timothée Lacroix, and William~El Sayed.
\newblock Mistral 7b, 2023.
\newblock URL \url{https://arxiv.org/abs/2310.06825}.

\bibitem[Kingma and Ba(2014)]{kingma2014adam}
Diederik~P Kingma and Jimmy Ba.
\newblock Adam: A method for stochastic optimization.
\newblock \emph{arXiv preprint arXiv:1412.6980}, 2014.

\bibitem[Kunstner et~al.(2019)Kunstner, Hennig, and Balles]{kunstner2019limitations}
Frederik Kunstner, Philipp Hennig, and Lukas Balles.
\newblock Limitations of the empirical fisher approximation for natural gradient descent.
\newblock \emph{Advances in neural information processing systems}, 32, 2019.

\bibitem[Liu et~al.(2024)Liu, Feng, Xue, Wang, Wu, Lu, Zhao, Deng, Zhang, Ruan, et~al.]{liu2024deepseek}
Aixin Liu, Bei Feng, Bing Xue, Bingxuan Wang, Bochao Wu, Chengda Lu, Chenggang Zhao, Chengqi Deng, Chenyu Zhang, Chong Ruan, et~al.
\newblock Deepseek-v3 technical report.
\newblock \emph{arXiv preprint arXiv:2412.19437}, 2024.

\bibitem[Martens and Grosse(2015{\natexlab{a}})]{eschenhagen2023NIPS-K-FAC}
James Martens and Roger Grosse.
\newblock Optimizing neural networks with kronecker-factored approximate curvature.
\newblock In \emph{International conference on machine learning}, pages 2408--2417. PMLR, 2015{\natexlab{a}}.

\bibitem[Martens and Grosse(2015{\natexlab{b}})]{martens2015K-FAC}
James Martens and Roger Grosse.
\newblock Optimizing neural networks with kronecker-factored approximate curvature.
\newblock In \emph{International conference on machine learning (ICML)}, pages 2408--2417. PMLR, 2015{\natexlab{b}}.

\bibitem[Mo et~al.(2025)Mo, Huang, and Pan]{mo2025loro}
Zhanfeng Mo, Long-Kai Huang, and Sinno~Jialin Pan.
\newblock Parameter and memory efficient pretraining via low-rank riemannian optimization.
\newblock In \emph{International Conference on Learning Representations (ICLR)}, 2025.

\bibitem[Morwani et~al.(2024)Morwani, Shapira, Vyas, Kakade, Janson, et~al.]{morwani2024new_shampoo}
Depen Morwani, Itai Shapira, Nikhil Vyas, Sham~M Kakade, Lucas Janson, et~al.
\newblock A new perspective on shampoo's preconditioner.
\newblock In \emph{International Conference on Learning Representations (ICLR)}, 2024.

\bibitem[Nan et~al.(2021)Nan, Radev, Zhang, Rau, Sivaprasad, Hsieh, Tang, Vyas, Verma, Krishna, et~al.]{nan2021dart}
Linyong Nan, Dragomir Radev, Rui Zhang, Amrit Rau, Abhinand Sivaprasad, Chiachun Hsieh, Xiangru Tang, Aadit Vyas, Neha Verma, Pranav Krishna, et~al.
\newblock Dart: Open-domain structured data record to text generation.
\newblock In \emph{Proceedings of the 2021 Conference of the North American Chapter of the Association for Computational Linguistics: Human Language Technologies}, pages 432--447, 2021.

\bibitem[Novikova et~al.(2017)Novikova, Du{\v{s}}ek, and Rieser]{novikova2017e2e}
Jekaterina Novikova, Ond{\v{r}}ej Du{\v{s}}ek, and Verena Rieser.
\newblock The e2e dataset: New challenges for end-to-end generation.
\newblock \emph{arXiv preprint arXiv:1706.09254}, 2017.

\bibitem[Paszke et~al.(2019)Paszke, Gross, Massa, Lerer, Bradbury, Chanan, Killeen, Lin, Gimelshein, Antiga, et~al.]{paszke2019pytorch}
Adam Paszke, Sam Gross, Francisco Massa, Adam Lerer, James Bradbury, Gregory Chanan, Trevor Killeen, Zeming Lin, Natalia Gimelshein, Luca Antiga, et~al.
\newblock Pytorch: An imperative style, high-performance deep learning library.
\newblock In \emph{Advances in neural information processing systems (NeurIPS)}, volume~32, 2019.

\bibitem[Radford et~al.(2019)Radford, Wu, Child, Luan, Amodei, Sutskever, et~al.]{radford2019gpt2}
Alec Radford, Jeffrey Wu, Rewon Child, David Luan, Dario Amodei, Ilya Sutskever, et~al.
\newblock Language models are unsupervised multitask learners.
\newblock \emph{OpenAI blog}, 1\penalty0 (8):\penalty0 9, 2019.

\bibitem[Radford et~al.(2021)Radford, Kim, Hallacy, Ramesh, Goh, Agarwal, Sastry, Askell, Mishkin, Clark, et~al.]{radford2021clip-model}
Alec Radford, Jong~Wook Kim, Chris Hallacy, Aditya Ramesh, Gabriel Goh, Sandhini Agarwal, Girish Sastry, Amanda Askell, Pamela Mishkin, Jack Clark, et~al.
\newblock Learning transferable visual models from natural language supervision.
\newblock In \emph{International conference on machine learning (ICML)}, pages 8748--8763. PMLR, 2021.

\bibitem[Shazeer and Stern(2018)]{shazeer2018adafactor}
Noam Shazeer and Mitchell Stern.
\newblock Adafactor: Adaptive learning rates with sublinear memory cost.
\newblock In \emph{International Conference on Machine Learning (ICML)}, pages 4596--4604. PMLR, 2018.

\bibitem[Tu et~al.(2016)Tu, Boczar, Simchowitz, Soltanolkotabi, and Recht]{tu2016low}
Stephen Tu, Ross Boczar, Max Simchowitz, Mahdi Soltanolkotabi, and Ben Recht.
\newblock Low-rank solutions of linear matrix equations via procrustes flow.
\newblock In \emph{International conference on machine learning}, pages 964--973. PMLR, 2016.

\bibitem[Vyas et~al.(2025)Vyas, Morwani, Zhao, Shapira, Brandfonbrener, Janson, and Kakade]{vyas2025soap}
Nikhil Vyas, Depen Morwani, Rosie Zhao, Itai Shapira, David Brandfonbrener, Lucas Janson, and Sham~M Kakade.
\newblock Soap: Improving and stabilizing shampoo using adam for language modeling.
\newblock In \emph{International Conference on Learning Representations (ICLR)}, 2025.

\bibitem[Wang et~al.(2019)Wang, Singh, Michael, Hill, Levy, and Bowman]{wang2018glue}
Alex Wang, Amanpreet Singh, Julian Michael, Felix Hill, Omer Levy, and Samuel~R. Bowman.
\newblock Glue: A multi-task benchmark and analysis platform for natural language understanding, 2019.
\newblock URL \url{https://arxiv.org/abs/1804.07461}.

\bibitem[Wang et~al.(2024)Wang, Yu, and Li]{wang2024lora-ga}
Shaowen Wang, Linxi Yu, and Jian Li.
\newblock Lora-ga: Low-rank adaptation with gradient approximation.
\newblock In \emph{Advances in Neural Information Processing Systems (NeurIPS)}, volume~37, pages 54905--54931, 2024.

\bibitem[Wang et~al.(2025)Wang, Liang, He, Wang, and Tan]{wang2025lora-pro}
Zhengbo Wang, Jian Liang, Ran He, Zilei Wang, and Tieniu Tan.
\newblock Lora-pro: Are low-rank adapters properly optimized?
\newblock In \emph{International Conference on Learning Representations(ICLR)}, 2025.

\bibitem[Wei et~al.(2016)Wei, Cai, Chan, and Leung]{wei2016RGD_recovery}
Ke~Wei, Jian-Feng Cai, Tony~F Chan, and Shingyu Leung.
\newblock Guarantees of riemannian optimization for low rank matrix recovery.
\newblock \emph{SIAM Journal on Matrix Analysis and Applications}, 37\penalty0 (3):\penalty0 1198--1222, 2016.

\bibitem[Yang et~al.(2024)Yang, Yang, Zhang, Hui, Zheng, Yu, Li, Liu, Huang, Wei, et~al.]{yang2024qwen2}
An~Yang, Baosong Yang, Beichen Zhang, Binyuan Hui, Bo~Zheng, Bowen Yu, Chengyuan Li, Dayiheng Liu, Fei Huang, Haoran Wei, et~al.
\newblock Qwen2.5 technical report.
\newblock \emph{arXiv preprint arXiv:2412.15115}, 2024.

\bibitem[Yen et~al.(2025)Yen, Si, Meng, Yu, Duvvuri, Dhillon, Hsieh, and Kumar]{yen2025lorarite}
Jui-Nan Yen, Si~Si, Zhao Meng, Felix Yu, Sai~Surya Duvvuri, Inderjit~S Dhillon, Cho-Jui Hsieh, and Sanjiv Kumar.
\newblock Lora done rite: Robust invariant transformation equilibration for lora optimization.
\newblock In \emph{The Thirteenth International Conference on Learning Representations}, 2025.

\bibitem[Zhang and Pilanci(2024)]{zhang2024RiemannianPreconditioned}
Fangzhao Zhang and Mert Pilanci.
\newblock Riemannian preconditioned lora for fine-tuning foundation models.
\newblock In \emph{International Conference on Machine Learning (ICML)}, 2024.

\bibitem[Zhang et~al.(2025)Zhang, Liu, and Chen]{zhang2025lora_one}
Yuanhe Zhang, Fanghui Liu, and Yudong Chen.
\newblock Lora-one: One-step full gradient could suffice for fine-tuning large language models, provably and efficiently.
\newblock In \emph{International Conference on Machine Learning (ICML)}, 2025.

\bibitem[Zhao et~al.(2024)Zhao, Zhang, Chen, Wang, Anandkumar, and Tian]{zhao2024galore}
Jiawei Zhao, Zhenyu Zhang, Beidi Chen, Zhangyang Wang, Anima Anandkumar, and Yuandong Tian.
\newblock Galore: Memory-efficient llm training by gradient low-rank projection.
\newblock In \emph{International Conference on Machine Learning (ICML)}, pages 61121--61143. PMLR, 2024.

\bibitem[Zhu et~al.(2024)Zhu, Wu, Gu, and Cevher]{zhu2024imbalanceLoRA}
Zhenyu Zhu, Yongtao Wu, Quanquan Gu, and Volkan Cevher.
\newblock Imbalance-regularized lora: A plug-and-play method for improving fine-tuning of foundation models.
\newblock In \emph{Adaptive Foundation Models: Evolving AI for Personalized and Efficient Learning}, 2024.

\end{thebibliography}
}


\newpage
\appendix

\addtocontents{toc}{\protect\setcounter{tocdepth}{2}}
{
  \hypersetup{linkcolor=black}
  \tableofcontents
}




\section{Notation}\label{sec:notation}

Table~\ref{tab:notation} summarizes the notation used throughout the paper. Calligraphic letters denote linear operators on $\mathbb{R}^{m\times n}$ (e.g., $\mathcal{F}_t, \mathcal{H}_t$), while bold letters denote matrices (e.g., $\bm{B}_t, \bm{A}_t, \bm{L}_t, \bm{R}_t, \bm{G}_t$).

\begin{table}[h]
\centering
\caption{Notation used throughout the paper.}
\label{tab:notation}
\small
\begin{tabular}{ll}
\toprule
\textbf{Symbol} & \textbf{Meaning} \\
\midrule
\multicolumn{2}{l}{\emph{Dimensions and indices}} \\
$m, n$ & Output and input dimensions of the adapted weight $\bm{W} \in \mathbb{R}^{m\times n}$ \\
$r$ & LoRA rank, $r \ll \min(m,n)$ \\
$t$ & Optimization step index \\
\midrule
\multicolumn{2}{l}{\emph{Matrices (in bold)}} \\
$\bm{W}_0, \bm{W}_t$ & Frozen pretrained weight; LoRA increment at step $t$, $\bm{W}_t = \bm{B}_t \bm{A}_t$ \\
$\bm{B}_t, \bm{A}_t$ & LoRA factors, $\bm{B}_t \in \mathbb{R}^{m\times r}, \bm{A}_t \in \mathbb{R}^{r\times n}$ \\
$\bm{G}_t$ & Stochastic gradient $\nabla_{\bm{W}} \mathcal{L}(\bm{W}_0 + \bm{W}_t)$ on $\bm{W}$ \\
$\bm{G}_{\bm{B}_t}, \bm{G}_{\bm{A}_t}$ & Factor gradients $\bm{G}_t \bm{A}_t^\top$ and $\bm{B}_t^\top \bm{G}_t$ \\
$\bm{\Delta}_{\bm{B}_t}, \bm{\Delta}_{\bm{A}_t}$ & Factor updates: $\bm{B}_{t+1} = \bm{B}_t - \eta_t \bm{\Delta}_{\bm{B}_t}$, similarly for $\bm{A}$ \\
$\bm{\Delta}_t$ & Induced $\bm{W}$-update $\bm{\Delta}_{\bm{B}_t} \bm{A}_t + \bm{B}_t \bm{\Delta}_{\bm{A}_t}$ \\
$\bm{l}_t, \bm{r}_t$ & Adafactor row/column statistics, $\bm{l}_t \in \mathbb{R}^m, \bm{r}_t \in \mathbb{R}^n$ \\
$\bm{L}_t, \bm{R}_t$ & Adafactor diagonal preconditioner factors (Eq.~\eqref{eq:def_LR}) \\
$\bm{L}_{{\rm Sh},t}, \bm{R}_{{\rm Sh},t}$ & Shampoo full-covariance factors $\sum_s \bm{G}_s \bm{G}_s^\top$, $\sum_s \bm{G}_s^\top \bm{G}_s$ \\
$\bm{X}_t$ & Free $r\times r$ matrix parameterizing the $\ker(J_\mathcal{G})$ orbit (Lemma~\ref{lem:find_delta_B_A_with_X}) \\
$\widetilde{\bm{P}}_{B_t}, \widetilde{\bm{Q}}_{A_t}$ & Auxiliary $\bm{L}_t^{1/2}/\bm{R}_t^{1/2}$-weighted projector matrices in Theorem~\ref{thm:find_delta_B_A} \\
\midrule
\multicolumn{2}{l}{\emph{Operators (in calligraphic)}} \\
$\mathcal{F}_t$ & Empirical Fisher operator on $\bm{W}$, $\mathcal{F}_t : \mathbb{R}^{m\times n} \to \mathbb{R}^{m\times n}$ \\
$\mathcal{H}_t$ & Operator square root, $\mathcal{H}_t = \mathcal{F}_t^{1/2}$; on Adafactor: $\mathcal{H}_t \bm{Y} = \bm{L}_t^{1/2} \bm{Y} \bm{R}_t^{1/2}$ \\
$\mathcal{E}_t$ & Empirical Fisher on the factor space $[\bm{B}_t, \bm{A}_t]$, related to $\mathcal{F}_t$ via $\mathcal{E}_t = J_\mathcal{G}^* \mathcal{F}_t J_\mathcal{G}$ (Eq.~\eqref{eq:factor-precond-pullback}) \\
$\widetilde{\mathcal{P}}_{\mathbb{T}_t}, \widetilde{\mathcal{P}}_{\mathbb{T}_t}^\perp$ & $\mathcal{H}_t$-orthogonal projector onto $\mathbb{T}_t$ and its complement \\
$\mathcal{P}_{\mathbb{T}_t}$ & Frobenius (Euclidean) orthogonal projector onto $\mathbb{T}_t$ \\
$J_\mathcal{G}, J_\mathcal{G}^*, J_\mathcal{G}^\dag$ & Jacobian of $\mathcal{G}: [\bm{B}, \bm{A}] \mapsto \bm{B}\bm{A}$, its adjoint, and Moore--Penrose pseudoinverse \\
\midrule
\multicolumn{2}{l}{\emph{Subspaces and inner products}} \\
$\mathbb{T}_t$ & $\mathrm{Im}(J_\mathcal{G}) = \{\bm{P}\bm{A}_t + \bm{B}_t \bm{Q}\} \subset \mathbb{R}^{m\times n}$, range of one LoRA step \\
$\mathcal{M}_r$ & Manifold of rank-$r$ matrices in $\mathbb{R}^{m\times n}$ \\
$\langle \cdot, \cdot \rangle$ & Frobenius inner product on $\mathbb{R}^{m\times n}$ \\
$\langle \cdot, \cdot \rangle_{\mathcal{H}_t}, \|\cdot\|_{\mathcal{H}_t}$ & $\mathcal{H}_t$-weighted inner product and norm (Eq.~\eqref{eq:weighted-inner-product}) \\
$\widetilde{\bm{G}}_t$ & Adafactor-preconditioned direction $\mathcal{H}_t^{-1} \bm{G}_t = \bm{L}_t^{-1/2} \bm{G}_t \bm{R}_t^{-1/2}$ \\
\midrule
\multicolumn{2}{l}{\emph{Other}} \\
$\eta_t$ & Learning rate at step $t$ \\
$\beta_1, \beta_2$ & EMA decay rates for Adafactor statistics $\bm{l}_t, \bm{r}_t$ \\
$\odot$ & Hadamard (elementwise) product \\
$\diag(\bm{v})$ & Diagonal matrix with $\bm{v}$ on the diagonal \\
$\mathcal{L}$ & Loss function \\
\bottomrule
\end{tabular}
\end{table}

\section{Proof of Theoretical Results}

\subsection{Computation of Jacobian}
\begin{proposition}[Computation of $J_\mathcal{G}$ and $J_\mathcal{G}^*$]\label{prop:Jacobian-comp}
Let $[\bm{B}, \bm{A}]$ be a pair of low-rank factors with $\bm{B} \in \mathbb{R}^{m \times r}, \bm{A} \in \mathbb{R}^{r \times n}$. Define the generator $\mathcal{G}: [\mathbb{R}^{m \times r}, \mathbb{R}^{r \times n}] \rightarrow \mathbb{R}^{m \times n}$ by $\mathcal{G}([\bm{B},\bm{A}]) = \bm{B}\bm{A}$. Denote the Jacobian of $\mathcal{G}$ by $J_{\mathcal{G}}$ and its adjoint by $J_{\mathcal{G}}^*$. Then, for any $[\bm{P}, \bm{Q}] \in [\mathbb{R}^{m \times r}, \mathbb{R}^{r \times n}]$ and any $\bm{C} \in \mathbb{R}^{m \times n}$,
\begin{itemize}
\item $J_\mathcal{G}([\bm{B},\bm{A}])[\bm{P}, \bm{Q}]= \bm{PA}+ \bm{BQ}$,
\item $J_\mathcal{G}^*([\bm{B},\bm{A}])(\bm{C})=[\bm{C} \bm{A}^\top, \bm{B}^\top \bm{C}]$,
\item $J_\mathcal{G}([\bm{B},\bm{A}]) J_\mathcal{G}^*([\bm{B},\bm{A}])(\bm{C}) = \bm{C} \bm{A}^\top \bm{A} + \bm{B} \bm{B}^\top \bm{C}$.
\item $J_\mathcal{G}^*([\bm{B},\bm{A}]) J_\mathcal{G}([\bm{B},\bm{A}])[\bm{P}, \bm{Q}] = [\bm{P}\bm{A}\bm{A}^\top + \bm{B}\bm{Q}\bm{A}^\top,\; \bm{B}^\top\bm{P}\bm{A} + \bm{B}^\top\bm{B}\bm{Q}]$.
\end{itemize}
\end{proposition}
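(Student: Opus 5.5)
The plan is to compute everything directly from the definition of the derivative and of the adjoint under the Frobenius inner product. On the factor space $[\mathbb{R}^{m\times r},\mathbb{R}^{r\times n}]$ we use the product inner product $\langle [\bm{P}_1,\bm{Q}_1],[\bm{P}_2,\bm{Q}_2]\rangle = \langle \bm{P}_1,\bm{P}_2\rangle + \langle \bm{Q}_1,\bm{Q}_2\rangle$.

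For the Jacobian, I would expand $\mathcal{G}([\bm{B}+\epsilon\bm{P},\bm{A}+\epsilon\bm{Q}]) = \bm{B}\bm{A} + \epsilon(\bm{P}\bm{A}+\bm{B}\bm{Q}) + \epsilon^2\bm{P}\bm{Q}$. Since $\mathcal{G}$ is bilinear, the term linear in $\epsilon$ is the Fréchet derivative, and the remainder is $O(\epsilon^2)$. This gives the first bullet.

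For the adjoint, I would take $\langle J_\mathcal{G}[\bm{P},\bm{Q}],\bm{C}\rangle = \mathrm{tr}(\bm{A}^\top\bm{P}^\top\bm{C}) + \mathrm{tr}(\bm{Q}^\top\bm{B}^\top\bm{C})$. Using cyclicity of the trace, this equals $\langle \bm{P},\bm{C}\bm{A}^\top\rangle + \langle \bm{Q},\bm{B}^\top\bm{C}\rangle$. Because the adjoint is unique, this identifies $J_\mathcal{G}^*(\bm{C}) = [\bm{C}\bm{A}^\top,\bm{B}^\top\bm{C}]$, which is the second bullet.

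The two compositions then follow by direct substitution. Applying $J_\mathcal{G}$ to $[\bm{C}\bm{A}^\top,\bm{B}^\top\bm{C}]$ gives $\bm{C}\bm{A}^\top\bm{A}+\bm{B}\bm{B}^\top\bm{C}$. Applying $J_\mathcal{G}^*$ to $\bm{P}\bm{A}+\bm{B}\bm{Q}$ gives $[(\bm{P}\bm{A}+\bm{B}\bm{Q})\bm{A}^\top,\ \bm{B}^\top(\bm{P}\bm{A}+\bm{B}\bm{Q})]$, which expands to the stated blocks.

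None of these steps is a real obstacle. The only point needing care is to fix the product Frobenius inner product on the factor space, since the adjoint depends on that choice.
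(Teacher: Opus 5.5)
Your proposal is correct and follows essentially the same route as the paper's proof. The paper obtains the derivative by differentiating along curves $\bm{B}(t),\bm{A}(t)$ rather than through your $\epsilon$-expansion, but it likewise identifies the adjoint by matching $\langle J_\mathcal{G}[\bm{P},\bm{Q}],\bm{C}\rangle$ against the product Frobenius inner product and gets both compositions by direct substitution.
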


\begin{proof}
The Jacobian operator $J_{\mathcal{G}}([\bm{B}, \bm{A}])[\bm{P}, \bm{Q}]: [\mathbb{R}^{m \times r}, \mathbb{R}^{r \times n}] \rightarrow \mathbb{R}^{m \times n}$ represents the derivative of $\mathcal{G}$ at $[\bm{B}, \bm{A}]$ along the direction $[\bm{P}, \bm{Q}]$. Similarly, $J_{\mathcal{G}}^*([\bm{B}, \bm{A}])(\bm{C}): \mathbb{R}^{m \times n} \rightarrow [\mathbb{R}^{m \times r}, \mathbb{R}^{r \times n}]$ is the adjoint  of $J_{\mathcal{G}}$ at $[\bm{B}, \bm{A}]$ along the direction $\bm{C}$. For more details, see~\citep[\S~6.1]{absil2009optimization}.

\begin{itemize}
\item[(i)] The computation of $J_{\mathcal{G}}$. Let $\bm{B}(t): \mathbb{R} \rightarrow \mathbb{R}^{m \times r}$  and $\bm{A}(t): \mathbb{R} \rightarrow \mathbb{R}^{r \times n}$ be differentiable curves with $\bm{B}(0) = \bm{B}$ and $\bm{A}(0) = \bm{A}$. By the chain rule, the Jacobian of $\mathcal{G}$ at $[\bm{B}, \bm{A}]$ along these curves is
\[
\begin{split}
    \eval{J_{\mathcal{G}}([\bm{B}(t),\bm{A}(t)])[\dot{\bm{B}}(t), \dot{\bm{A}}(t)]}_{t=0} & = \eval{\left[\frac{\mathrm{d} \mathcal{G}([\bm{B},\bm{A}])}{\mathrm{d}\bm{B}} \right] \dot{\bm{B}}(t)}_{t=0} + \eval{\left[\frac{\mathrm{d} \mathcal{G}([\bm{B},\bm{A}])}{\mathrm{d}\bm{A}} \right] \dot{\bm{A}}(t)}_{t=0} \\
    & = \eval{\dot{\bm{B}}(t) \bm{A}(t)}_{t=0} + \eval{\bm{B}(t) \dot{\bm{A}}(t)}_{t=0} \\
    & = \dot{\bm{B}}(0) \bm{A} + \bm{B} \dot{\bm{A}}(0),
\end{split}
\]
where $\dot{\bm{B}}(t)$ and $\dot{\bm{A}}(t)$ denote the derivatives of $\bm{B}(t)$ and $\bm{A}(t)$ with respect to $t$. The second line follows because $\mathcal{G}([\bm{B}, \bm{A}])=\bm{B}\bm{A}$, hence $\frac{\mathrm{d} \mathcal{G}([\bm{B},\bm{A}])}{\mathrm{d}\bm{B}}$ and $\frac{\mathrm{d} \mathcal{G}([\bm{B},\bm{A})]}{\mathrm{d}\bm{A}}$ are both linear operators.

Since $\dot{\bm{B}}(0)$ and $\dot{\bm{A}}(0)$ are arbitrary, for any $[\bm{P}, \bm{Q}] \in [\mathbb{R}^{m \times r}, \mathbb{R}^{r \times n}]$, we obtain
\[
    J_{\mathcal{G}}([\bm{B},\bm{A}])[\bm{P}, \bm{Q}] = \bm{PA}+\bm{BQ}.
\]
\item[(ii)] The computation of $J_{\mathcal{G}}^*$. For brevity, write $J_{\mathcal{G}}[\bm{P}, \bm{Q}]$ for $J_{\mathcal{G}}([\bm{B},\bm{A}])[\bm{P}, \bm{Q}]$ and $J_{\mathcal{G}}^*(\bm{C})$ for $J_{\mathcal{G}}^*([\bm{B}, \bm{A}])(\bm{C})$. By definition of the adjoint (with respect to the Frobenius inner product), for any $[\bm{P}, \bm{Q}] \in (\mathbb{R}^{m \times r}, \mathbb{R}^{r \times n})$ and $\bm{C} \in \mathbb{R}^{m \times n} $,
\[
    \big\langle J_\mathcal{G}[\bm{P}, \bm{Q}], \bm{C} \big\rangle = \big\langle [\bm{P}, \bm{Q}], J_\mathcal{G}^*(\bm{C}) \big\rangle.
\]
For the left-hand side,
\[
\begin{split}
    \big\langle J_\mathcal{G}[\bm{P}, \bm{Q}], \bm{C} \big\rangle 
    & = \big\langle \bm{PA}+ \bm{BQ}, \bm{C} \big\rangle \\
    & = \big\langle \bm{PA}, \bm{C} \big\rangle + \big\langle \bm{BQ}, \bm{C} \big\rangle \\
    & = \big\langle \bm{P}, \bm{C}\bm{A}^\top \big\rangle + \big\langle \bm{Q}, \bm{B}^\top\bm{C} \big\rangle.
\end{split}
\]
For the right-hand side,  writing $J_\mathcal{G}^*(\bm{C}) = [\bm{C}_1, \bm{C}_2]$, then
\[
\begin{split}
    \big\langle [\bm{P}, \bm{Q}], J_\mathcal{G}^*(\bm{C}) \big\rangle 
    & = \big\langle [\bm{P}, \bm{Q}], [\bm{C}_1, \bm{C}_2] \big\rangle \\
    & = \big\langle \bm{P}, \bm{C}_1 \big\rangle + \big\langle \bm{Q}, \bm{C}_2 \big\rangle.
\end{split}
\]
Hence $\bm{C}_1 = \bm{C}\bm{A}^\top$ and $\bm{C}_2 = \bm{B}^\top\bm{C}$, and therefore $J_\mathcal{G}^*([\bm{B},\bm{A}])(\bm{C})=[\bm{C} \bm{A}^\top, \bm{B}^\top \bm{C}]$.

\item[(iii)] By (i)--(ii), $J_\mathcal{G}([\bm{B},\bm{A}]) J_\mathcal{G}^*([\bm{B},\bm{A}])(\bm{C})=J_\mathcal{G}([\bm{B},\bm{A}])[\bm{C} \bm{A}^\top, \bm{B}^\top \bm{C}] = \bm{C} \bm{A}^\top \bm{A} + \bm{B} \bm{B}^\top \bm{C}$ as claimed.

\item[(iv)] By (i)--(ii), $J_\mathcal{G}^*([\bm{B},\bm{A}]) J_\mathcal{G}([\bm{B},\bm{A}])[\bm{P}, \bm{Q}] = J_\mathcal{G}^*([\bm{B},\bm{A}])(\bm{P}\bm{A} + \bm{B}\bm{Q}) = [(\bm{P}\bm{A} + \bm{B}\bm{Q})\bm{A}^\top,\; \bm{B}^\top(\bm{P}\bm{A} + \bm{B}\bm{Q})] = [\bm{P}\bm{A}\bm{A}^\top + \bm{B}\bm{Q}\bm{A}^\top,\; \bm{B}^\top\bm{P}\bm{A} + \bm{B}^\top\bm{B}\bm{Q}]$ as claimed.
\end{itemize}
\end{proof}

\subsection{\texorpdfstring{Proof of Proposition~\ref{prop:gauge-kernel} (Kernel of $J_\mathcal{G}$)}{Proof of Proposition (Kernel of J\_G from factorization redundancy)}}\label{subsec:gauge-proof}

\begin{proposition}[Kernel of $J_\mathcal{G}$ from factorization redundancy]\label{prop:gauge-kernel}
For $\bm{B}_t \in \mathbb{R}^{m \times r}$ of column rank $r$ and $\bm{A}_t \in \mathbb{R}^{r \times n}$ of row rank $r$, the kernel of the Jacobian operator $J_\mathcal{G} = J_\mathcal{G}([\bm{B}_t, \bm{A}_t])$ is
\[
    \ker(J_\mathcal{G}) = \big\{\, [\bm{B}_t \bm{X},\, -\bm{X}\bm{A}_t]\,:\, \bm{X} \in \mathbb{R}^{r \times r}\,\big\},
\]
which is an $r^2$-dimensional linear subspace of $[\mathbb{R}^{m \times r}, \mathbb{R}^{r \times n}]$. Consequently, $\mathrm{rank}(J_\mathcal{G}) = (m+n)r - r^2$.
\end{proposition}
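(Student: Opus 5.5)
The plan is a double inclusion followed by a dimension count, then rank--nullity.

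\emph{Inclusion $\supseteq$.} This is already recorded in~\eqref{eq:ker-direction}: for any $\bm{X}\in\mathbb{R}^{r\times r}$, Proposition~\ref{prop:Jacobian-comp} gives
\[
J_\mathcal{G}[\bm{B}_t\bm{X},-\bm{X}\bm{A}_t]=\bm{B}_t\bm{X}\bm{A}_t-\bm{B}_t\bm{X}\bm{A}_t=\bm{0}.
\]

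\emph{Inclusion $\subseteq$.} This is the only substantive step. Suppose $\bm{P}\bm{A}_t+\bm{B}_t\bm{Q}=\bm{0}$. Since $\bm{A}_t$ has full row rank, $\bm{A}_t\bm{A}_t^\top$ is invertible, so $\bm{A}_t^\dagger:=\bm{A}_t^\top(\bm{A}_t\bm{A}_t^\top)^{-1}$ is a right inverse with $\bm{A}_t\bm{A}_t^\dagger=\bm{I}_r$. Right-multiplying the equation by $\bm{A}_t^\dagger$ gives $\bm{P}=-\bm{B}_t\bm{Q}\bm{A}_t^\dagger$. Define $\bm{X}:=-\bm{Q}\bm{A}_t^\dagger$, so that $\bm{P}=\bm{B}_t\bm{X}$.

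Substituting back, $\bm{B}_t(\bm{X}\bm{A}_t+\bm{Q})=\bm{0}$. Because $\bm{B}_t$ has full column rank, it has a left inverse $(\bm{B}_t^\top\bm{B}_t)^{-1}\bm{B}_t^\top$, which forces $\bm{Q}=-\bm{X}\bm{A}_t$. Hence $[\bm{P},\bm{Q}]=[\bm{B}_t\bm{X},-\bm{X}\bm{A}_t]$. Note that full row rank of $\bm{A}_t$ is used once and full column rank of $\bm{B}_t$ is used once.

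\emph{Dimension count.} Consider the linear map $\bm{X}\mapsto[\bm{B}_t\bm{X},-\bm{X}\bm{A}_t]$. It is injective, since $\bm{B}_t\bm{X}=\bm{0}$ already forces $\bm{X}=\bm{0}$ by the full column rank of $\bm{B}_t$. Its image is therefore $r^2$-dimensional, so $\dim\ker(J_\mathcal{G})=r^2$.

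\emph{Rank.} The domain $[\mathbb{R}^{m\times r},\mathbb{R}^{r\times n}]$ has dimension $(m+n)r$. Rank--nullity then gives $\mathrm{rank}(J_\mathcal{G})=(m+n)r-r^2$.

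All steps are routine linear algebra.
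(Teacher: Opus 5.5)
Your proof is correct and follows the paper's argument: both inclusions, then injectivity of $\bm{X}\mapsto[\bm{B}_t\bm{X},-\bm{X}\bm{A}_t]$ from the full column rank of $\bm{B}_t$, then rank--nullity. The only difference is cosmetic and occurs in the reverse inclusion. The paper sets $\bm{X}:=(\bm{B}_t^\top\bm{B}_t)^{-1}\bm{B}_t^\top\bm{P}$ and applies the left inverse of $\bm{B}_t$ first, whereas you set $\bm{X}:=-\bm{Q}\bm{A}_t^\dagger$ and apply the right inverse of $\bm{A}_t$ first, which is the mirror image of the same step.
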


\begin{proof}
We prove the three statements in turn: (i) the indicated set is contained in $\ker(J_\mathcal{G})$; (ii) every kernel element has this form; (iii) the dimension equals $r^2$. The rank statement then follows by the rank--nullity theorem applied to $J_\mathcal{G} : [\mathbb{R}^{m \times r}, \mathbb{R}^{r \times n}] \to \mathbb{R}^{m \times n}$, whose domain has dimension $(m+n)r$.

\textbf{(i) Inclusion.} For any $\bm{X} \in \mathbb{R}^{r \times r}$, by the definition of $J_\mathcal{G}$ in~\eqref{eq:jacobian-def},
\[
    J_\mathcal{G}\big[\bm{B}_t \bm{X},\, -\bm{X}\bm{A}_t\big]
    \;=\; (\bm{B}_t \bm{X}) \bm{A}_t \;+\; \bm{B}_t (-\bm{X}\bm{A}_t)
    \;=\; \bm{B}_t \bm{X} \bm{A}_t \;-\; \bm{B}_t \bm{X} \bm{A}_t
    \;=\; \bm{0}.
\]
Hence $[\bm{B}_t \bm{X},\, -\bm{X}\bm{A}_t] \in \ker(J_\mathcal{G})$ for every $\bm{X}$.

\textbf{(ii) Reverse inclusion.} Suppose $[\bm{P},\bm{Q}] \in \ker(J_\mathcal{G})$, i.e., $\bm{P}\bm{A}_t + \bm{B}_t \bm{Q} = \bm{0}$, equivalently
\begin{equation}\label{eq:kernel-eq}
    \bm{P}\bm{A}_t \;=\; -\bm{B}_t \bm{Q}.
\end{equation}
Since $\bm{B}_t$ has column rank $r$ and $\bm{A}_t$ has row rank $r$, define their one-sided pseudoinverses
\[
    \bm{B}_t^{+} := (\bm{B}_t^\top \bm{B}_t)^{-1} \bm{B}_t^\top \in \mathbb{R}^{r \times m}, \qquad
    \bm{A}_t^{+} := \bm{A}_t^\top (\bm{A}_t \bm{A}_t^\top)^{-1} \in \mathbb{R}^{n \times r},
\]
satisfying $\bm{B}_t^{+} \bm{B}_t = \bm{I}_r$ and $\bm{A}_t \bm{A}_t^{+} = \bm{I}_r$. Set $\bm{X} := \bm{B}_t^{+} \bm{P} \in \mathbb{R}^{r \times r}$.

Left-multiplying~\eqref{eq:kernel-eq} by $\bm{B}_t^{+}$ gives $\bm{X} \bm{A}_t = -\bm{Q}$, i.e.,
\[
    \bm{Q} \;=\; -\bm{X}\bm{A}_t.
\]
Substituting back into~\eqref{eq:kernel-eq} and right-multiplying by $\bm{A}_t^{+}$ gives
\[
    \bm{P} \;=\; \bm{P}\bm{A}_t \bm{A}_t^{+} \;=\; -\bm{B}_t \bm{Q} \bm{A}_t^{+} \;=\; \bm{B}_t \bm{X} \bm{A}_t \bm{A}_t^{+} \;=\; \bm{B}_t \bm{X}.
\]
Therefore $[\bm{P},\bm{Q}] = [\bm{B}_t \bm{X},\, -\bm{X}\bm{A}_t]$, as claimed.

\textbf{(iii) Dimension.} The map $\Phi : \mathbb{R}^{r \times r} \to [\mathbb{R}^{m \times r}, \mathbb{R}^{r \times n}]$ defined by $\Phi(\bm{X}) = [\bm{B}_t \bm{X},\, -\bm{X}\bm{A}_t]$ is linear. It is injective: if $\Phi(\bm{X}) = \bm{0}$ then $\bm{B}_t \bm{X} = \bm{0}$, and the column-rank condition on $\bm{B}_t$ gives $\bm{X} = \bm{0}$. Hence $\dim(\mathrm{im}\,\Phi) = \dim(\mathbb{R}^{r \times r}) = r^2$. By (i)--(ii), $\mathrm{im}\,\Phi = \ker(J_\mathcal{G})$, so $\dim \ker(J_\mathcal{G}) = r^2$.

\textbf{Rank.} By the rank--nullity theorem,
\[
    \mathrm{rank}(J_\mathcal{G}) \;=\; \dim\big([\mathbb{R}^{m \times r}, \mathbb{R}^{r \times n}]\big) - \dim \ker(J_\mathcal{G}) \;=\; (m+n)r - r^2.
\]
Equivalently, $\mathrm{rank}(J_\mathcal{G}) = \dim \mathcal{M}_r$, the dimension of the rank-$r$ manifold at $\bm{W}_t = \bm{B}_t \bm{A}_t$, consistent with $\mathrm{im}(J_\mathcal{G}) = \mathbb{T}_t$ (Proposition~\ref{prop:delta_W_on_tangent}).
\end{proof}

\subsection{\texorpdfstring{Optimality of $\mathcal{H}_t$-Projection (Solution~\ref{sol:transport})}{Optimality of H-Projection (Solution 1)}}\label{sec:proof_transport_optimality}

\begin{proposition}[Common $\bm{W}$-update across the affine solution set]\label{prop:transport-optimality}
Let $\mathcal{H}_t : \mathbb{R}^{m\times n} \to \mathbb{R}^{m\times n}$ be a symmetric positive-definite operator, inducing the inner product $\langle \bm{X}, \bm{Y}\rangle_{\mathcal{H}_t} := \langle \mathcal{H}_t \bm{X}, \bm{Y}\rangle$ on $\mathbb{R}^{m\times n}$. Let $\mathbb{T}_t := \mathrm{range}(J_\mathcal{G}) \subset \mathbb{R}^{m\times n}$ and $\widetilde{\bm{G}}_t := \mathcal{H}_t^{-1} \bm{G}_t$. Then every $[\bm{\Delta}_{\bm{B}_t}, \bm{\Delta}_{\bm{A}_t}]$ in the affine solution set of~\eqref{eq:factor-system-instantiated} satisfies
\[
    J_\mathcal{G}[\bm{\Delta}_{\bm{B}_t}, \bm{\Delta}_{\bm{A}_t}] = \widetilde{\mathcal{P}}_{\mathbb{T}_t}(\widetilde{\bm{G}}_t),
\]
where $\widetilde{\mathcal{P}}_{\mathbb{T}_t}$ is the $\mathcal{H}_t$-orthogonal projector onto $\mathbb{T}_t$, characterized by $\langle \widetilde{\mathcal{P}}_{\mathbb{T}_t}(\bm{X}) - \bm{X}, \bm{Y}\rangle_{\mathcal{H}_t} = 0$ for all $\bm{X} \in \mathbb{R}^{m\times n}, \bm{Y} \in \mathbb{T}_t$.
\end{proposition}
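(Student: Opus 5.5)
The plan is to rewrite the factor-space system~\eqref{eq:factor-system-instantiated} as an orthogonality condition in $\bm{W}$-space under the $\mathcal{H}_t$-inner product. Then uniqueness of orthogonal projections onto the subspace $\mathbb{T}_t$ forces the induced $\bm{W}$-update. Note that $\bm{G}_t = \mathcal{H}_t \widetilde{\bm{G}}_t$, so the right-hand side of~\eqref{eq:factor-system-instantiated} is $J_\mathcal{G}^* \mathcal{H}_t \widetilde{\bm{G}}_t$. Hence, writing $\bm{\Delta}_t := J_\mathcal{G}[\bm{\Delta}_{\bm{B}_t}, \bm{\Delta}_{\bm{A}_t}] \in \mathbb{T}_t$, the system is equivalent to
\[
    J_\mathcal{G}^* \mathcal{H}_t (\bm{\Delta}_t - \widetilde{\bm{G}}_t) = \bm{0}.
\]

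Pair this identity with an arbitrary factor direction $[\bm{P}, \bm{Q}]$ and use the defining property of the adjoint (Proposition~\ref{prop:Jacobian-comp}). This gives
\[
    \langle \mathcal{H}_t(\bm{\Delta}_t - \widetilde{\bm{G}}_t), J_\mathcal{G}[\bm{P},\bm{Q}]\rangle = 0 \quad \text{for all } [\bm{P},\bm{Q}].
\]
Since $J_\mathcal{G}[\bm{P},\bm{Q}]$ ranges over all of $\mathbb{T}_t$, this says exactly that $\langle \bm{\Delta}_t - \widetilde{\bm{G}}_t, \bm{Y}\rangle_{\mathcal{H}_t} = 0$ for every $\bm{Y} \in \mathbb{T}_t$. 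The symmetry of $\mathcal{H}_t$ lets us place it on either argument.

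Thus $\bm{\Delta}_t$ is an element of $\mathbb{T}_t$ whose residual against $\widetilde{\bm{G}}_t$ is $\mathcal{H}_t$-orthogonal to $\mathbb{T}_t$, which is the characterization of $\widetilde{\mathcal{P}}_{\mathbb{T}_t}(\widetilde{\bm{G}}_t)$ stated in the proposition. For uniqueness, suppose $\bm{Z}_1, \bm{Z}_2 \in \mathbb{T}_t$ both have this property. Then $\bm{Z}_1 - \bm{Z}_2 \in \mathbb{T}_t$ is $\mathcal{H}_t$-orthogonal to itself, so $\|\bm{Z}_1-\bm{Z}_2\|_{\mathcal{H}_t}^2 = 0$. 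Positive definiteness of $\mathcal{H}_t$ then gives $\bm{Z}_1 = \bm{Z}_2$. Hence $\bm{\Delta}_t = \widetilde{\mathcal{P}}_{\mathbb{T}_t}(\widetilde{\bm{G}}_t)$ for every solution of the system.

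Existence of the projection, i.e., that the solution set is nonempty, is not strictly needed for the "every solution satisfies" claim. If desired, it follows from finite-dimensionality: the $\mathcal{H}_t$-orthogonal decomposition $\mathbb{R}^{m\times n} = \mathbb{T}_t \oplus \mathbb{T}_t^{\perp_{\mathcal{H}_t}}$ exists. Any preimage under $J_\mathcal{G}$ of $\widetilde{\mathcal{P}}_{\mathbb{T}_t}(\widetilde{\bm{G}}_t)$ then solves the system, by reversing the argument above.

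The only delicate point is the equivalence step. The converse direction, from the orthogonality condition back to $J_\mathcal{G}^*\mathcal{H}_t(\cdot)=0$, uses that the orthogonality is tested against all of $\mathrm{range}(J_\mathcal{G})$. This is immediate here, and no full-rank assumption on $\bm{B}_t, \bm{A}_t$ is needed.
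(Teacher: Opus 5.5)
Your proposal is correct and follows essentially the same route as the paper. Both rewrite the system as $J_\mathcal{G}^*\mathcal{H}_t\big(J_\mathcal{G}[\bm{\Delta}_{\bm{B}_t},\bm{\Delta}_{\bm{A}_t}] - \widetilde{\bm{G}}_t\big) = \bm{0}$, read this via adjointness as $\mathcal{H}_t$-orthogonality of the residual to $\mathbb{T}_t$, and conclude by uniqueness of the $\mathcal{H}_t$-orthogonal decomposition; the only cosmetic difference is that you prove uniqueness directly (two candidates differ by an element $\mathcal{H}_t$-orthogonal to itself), whereas the paper decomposes $\widetilde{\bm{G}}_t$ and invokes $\mathbb{T}_t \cap \mathbb{T}_t^{\perp_{\mathcal{H}_t}} = \{\bm{0}\}$.
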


\begin{proof}
Equip $V := \mathbb{R}^{m\times n}$ with the inner product $\langle\cdot,\cdot\rangle_{\mathcal{H}_t}$. Since $\mathcal{H}_t$ is SPD, this is a genuine inner product, and $V$ is a finite-dimensional Hilbert space. The subspace $\mathbb{T}_t$ is finite-dimensional, hence closed in $V$, and admits a unique $\mathcal{H}_t$-orthogonal decomposition $V = \mathbb{T}_t \oplus \mathbb{T}_t^{\perp_{\mathcal{H}_t}}$, defining $\widetilde{\mathcal{P}}_{\mathbb{T}_t}$.

Let $[\bm{\Delta}_{\bm{B}_t}, \bm{\Delta}_{\bm{A}_t}]$ be any solution of~\eqref{eq:factor-system-instantiated}. Using $J_\mathcal{G}^*(\bm{G}_t) = J_\mathcal{G}^* \mathcal{H}_t \widetilde{\bm{G}}_t$, the equation rewrites as $J_\mathcal{G}^* \mathcal{H}_t (J_\mathcal{G}[\bm{\Delta}_{\bm{B}_t}, \bm{\Delta}_{\bm{A}_t}] - \widetilde{\bm{G}}_t) = 0$, i.e.\ $J_\mathcal{G}[\bm{\Delta}_{\bm{B}_t}, \bm{\Delta}_{\bm{A}_t}] - \widetilde{\bm{G}}_t \in \mathbb{T}_t^{\perp_{\mathcal{H}_t}}$ (since $\ker(J_\mathcal{G}^* \mathcal{H}_t) = \mathbb{T}_t^{\perp_{\mathcal{H}_t}}$ by adjointness). Decomposing $\widetilde{\bm{G}}_t = \widetilde{\mathcal{P}}_{\mathbb{T}_t}(\widetilde{\bm{G}}_t) + \widetilde{\mathcal{P}}_{\mathbb{T}_t}^\perp(\widetilde{\bm{G}}_t)$, this gives $J_\mathcal{G}[\bm{\Delta}_{\bm{B}_t}, \bm{\Delta}_{\bm{A}_t}] - \widetilde{\mathcal{P}}_{\mathbb{T}_t}(\widetilde{\bm{G}}_t) \in \mathbb{T}_t^{\perp_{\mathcal{H}_t}}$. But the left-hand side also lies in $\mathbb{T}_t$ (both summands do), and $\mathbb{T}_t \cap \mathbb{T}_t^{\perp_{\mathcal{H}_t}} = \{0\}$, so $J_\mathcal{G}[\bm{\Delta}_{\bm{B}_t}, \bm{\Delta}_{\bm{A}_t}] = \widetilde{\mathcal{P}}_{\mathbb{T}_t}(\widetilde{\bm{G}}_t)$.
\end{proof}

\textbf{Comparison with LoRA-Pro.} When $\mathcal{H}_t = \bm{I}$, the $\mathcal{H}_t$-orthogonal projector $\widetilde{\mathcal{P}}_{\mathbb{T}_t}$ collapses to the Frobenius projector $\mathcal{P}_{\mathbb{T}_t}$, and the common $\bm{W}$-update reduces to $\mathcal{P}_{\mathbb{T}_t}(\bm{G}_t)$ -- the LoRA-Pro update. For non-trivial $\mathcal{H}_t$, the two projections operate in different geometries (residuals lie in $\mathbb{T}_t^{\perp_{\mathcal{H}_t}}$ vs.\ $\mathbb{T}_t^{\perp}$).

\subsection{Orthogonal Projection to Tangent Space}

\textit{Relation to the main-text subspace $\mathbb{T}_t$.} In the main text, $\mathbb{T}_t = \mathrm{Im}(J_\mathcal{G})$ is treated as a linear subspace of $\mathbb{R}^{m\times n}$ (\S~\ref{subsec:lora-precond-related}), with no manifold structure assumed. Whenever $\bm{W}_t = \bm{B}_t \bm{A}_t$ has rank exactly $r$ (i.e., $\bm{W}_t \in \mathcal{M}_r$), $\mathrm{Im}(J_\mathcal{G})$ coincides with the tangent space $\mathbb{T}_{\bm{W}_t}$ of the rank-$r$ manifold $\mathcal{M}_r$ at $\bm{W}_t$ (Proposition~\ref{prop:delta_W_on_tangent}). The propositions in this subsection adopt the manifold viewpoint $\mathbb{T}_{\bm{W}}$ since the SVD-based proofs are most natural under it; the resulting projection formulas apply directly to the main-text subspace $\mathbb{T}_t$.

In this subsection, we derive the orthogonal projection onto the tangent space under both the standard metric and the weighted metric. The specific forms of $\bm{L}_t$ and $\bm{R}_t$ are presented here and will not be repeated in subsequent propositions and proofs. For the sake of simplicity, the subscript $t$ will be omitted in this subsection.
\begin{equation}
    \begin{split}
    \bm{L}_t &= \diag(\bm{l}_t / \sqrt{\| \bm{l}_t\|_1} ) ~~\textmd{with}~~ \bm{l}_t  = \beta_2\bm{l}_{t-1} + (1-\beta_2)\sum_{j=1}^{n} (\bm{G}_t \odot \bm{G}_t)_{i,j},\\
    \bm{R}_{t} &= \diag(\bm{r}_{t}/ \sqrt{\| \bm{r}_t \|_1} ) ~~\textmd{with}~~ \bm{r}_t  = \beta_3 \bm{r}_{t-1} + (1-\beta_3)\sum_{i=1}^m (\bm{G}_t \odot \bm{G}_t)_{i,j},
\end{split}
\end{equation}
where $\odot$ denotes the Hadamard (elementwise) product and $\bm{G}_t = \nabla \mathcal{L}(\bm{W}_0 + \bm{W}_t)$.

\begin{proposition}[Orthogonal Projection to Tangent Space Under the Standard Metric]\label{prop:proj_tangent_stand}
Let $\bm{W} \in \mathcal{M}_r$ be a rank-$r$ matrix with a low-rank decomposition $\bm{W} = \bm{B} \bm{A}$, where $\bm{B} \in \mathbb{R}^{m \times r}, \bm{A} \in \mathbb{R}^{r \times n}$. Denote by $\mathbb{T}_{\bm{W}}$ the tangent space of the smooth manifold $\mathcal{M}_r$ at the point $\bm{W}$. Then, the orthogonal projection of any matrix $\bm{Z}\in \mathbb{R}^{m \times n}$ onto $\mathbb{T}_{\bm{W}}$ is given by
\[
\mathcal{P}_{\mathbb{T}_{\bm{W}}}(\bm{Z})= \bm{B} (\bm{B}^\top \bm{B})^{-1} \bm{B}^\top \bm{Z} + \bm{Z}\bm{A}^{\top} ( \bm{A}  \bm{A}^\top )^{-1} \bm{A} - \bm{B} (\bm{B}^\top \bm{B})^{-1} \bm{B}^\top\bm{Z}\bm{A}^{\top} ( \bm{A}  \bm{A}^\top )^{-1} \bm{A}.
\]
\end{proposition}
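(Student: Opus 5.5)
To prove Proposition~\ref{prop:proj_tangent_stand}, I would first identify the tangent space concretely and then verify the projection characterization directly. By Proposition~\ref{prop:gauge-kernel} and the rank-$r$ assumption, $\bm{B}$ has full column rank and $\bm{A}$ full row rank. The tangent space is $\mathbb{T}_{\bm{W}} = \{\bm{P}\bm{A} + \bm{B}\bm{Q}\}$; I would take this from Proposition~\ref{prop:delta_W_on_tangent}, or argue it from the standard SVD description $\{\bm{U}\bm{M}\bm{V}^\top + \bm{U}_p\bm{V}^\top + \bm{U}\bm{V}_p^\top\}$, using that $\mathrm{col}(\bm{B}) = \mathrm{col}(\bm{U})$ and $\mathrm{row}(\bm{A}) = \mathrm{row}(\bm{V}^\top)$.

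Next, set $\bm{P}_B := \bm{B}(\bm{B}^\top\bm{B})^{-1}\bm{B}^\top$ and $\bm{P}_A := \bm{A}^\top(\bm{A}\bm{A}^\top)^{-1}\bm{A}$. These are the Frobenius-orthogonal (symmetric, idempotent) projectors onto $\mathrm{col}(\bm{B})$ and onto $\mathrm{row}(\bm{A})$ viewed in $\mathbb{R}^n$. The claimed operator is
\[
\mathcal{P}(\bm{Z}) = \bm{P}_B\bm{Z} + \bm{Z}\bm{P}_A - \bm{P}_B\bm{Z}\bm{P}_A = \bm{Z} - (\bm{I}-\bm{P}_B)\bm{Z}(\bm{I}-\bm{P}_A).
\]
It then suffices to check two things: (a) $\mathcal{P}(\bm{Z}) \in \mathbb{T}_{\bm{W}}$, and (b) $\bm{Z} - \mathcal{P}(\bm{Z}) \perp \mathbb{T}_{\bm{W}}$ in the Frobenius inner product. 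Uniqueness of orthogonal projection then gives the result.

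For (a), I would write
\[
\mathcal{P}(\bm{Z}) = \bm{B}\big[(\bm{B}^\top\bm{B})^{-1}\bm{B}^\top\bm{Z}\big] + \big[(\bm{I}-\bm{P}_B)\bm{Z}\bm{A}^\top(\bm{A}\bm{A}^\top)^{-1}\big]\bm{A},
\]
which is visibly of the form $\bm{B}\bm{Q} + \bm{P}\bm{A}$.

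For (b), I would take a generic tangent element $\bm{P}\bm{A} + \bm{B}\bm{Q}$ and compute $\langle (\bm{I}-\bm{P}_B)\bm{Z}(\bm{I}-\bm{P}_A), \bm{P}\bm{A}+\bm{B}\bm{Q}\rangle$. Both terms vanish:
\begin{itemize}
\item $(\bm{I}-\bm{P}_A)\bm{A}^\top = 0$, which kills the $\bm{P}\bm{A}$ term after moving factors across the trace;
\item $\bm{B}^\top(\bm{I}-\bm{P}_B) = 0$, which kills the $\bm{B}\bm{Q}$ term.
\end{itemize}

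The only step needing care is the identification of $\mathbb{T}_{\bm{W}}$ with $\mathrm{Im}(J_\mathcal{G})$ when it is phrased via the SVD. Once that is granted, the rest is mechanical.
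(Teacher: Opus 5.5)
Your proof is correct, but it is organized differently from the paper's. The paper works in an SVD basis. For $\bm{W}=\bm{U}\bm{\Sigma}\bm{V}^\top$ it quotes the known formula $\mathcal{P}_{\mathbb{T}_{\bm{W}}}(\bm{Z})=\bm{U}\bm{U}^\top\bm{Z}+\bm{Z}\bm{V}\bm{V}^\top-\bm{U}\bm{U}^\top\bm{Z}\bm{V}\bm{V}^\top$ from the Riemannian low-rank recovery literature. It then writes $\bm{B}=\bm{U}\bm{S}$ and $\bm{A}=\bm{Q}\bm{V}^\top$ with invertible $r\times r$ matrices $\bm{S},\bm{Q}$. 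From $\bm{B}^\top\bm{B}=\bm{S}^\top\bm{S}$ and $\bm{A}\bm{A}^\top=\bm{Q}\bm{Q}^\top$ it rewrites $\bm{U}\bm{U}^\top=\bm{B}(\bm{B}^\top\bm{B})^{-1}\bm{B}^\top$ and $\bm{V}\bm{V}^\top=\bm{A}^\top(\bm{A}\bm{A}^\top)^{-1}\bm{A}$, after which the claim is pure substitution.

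You never invoke that formula. You rewrite the candidate as $\bm{Z}-(\bm{I}-\bm{P}_B)\bm{Z}(\bm{I}-\bm{P}_A)$ and verify the two defining properties of an orthogonal projection directly, using only $\bm{B}^\top(\bm{I}-\bm{P}_B)=\bm{0}$ and $(\bm{I}-\bm{P}_A)\bm{A}^\top=\bm{0}$. The paper's route is shorter once the external formula is granted. Yours is self-contained and touches $\bm{U},\bm{V}$ only to identify the tangent space. It also carries over verbatim to the weighted metric of Proposition~\ref{prop:proj_tangent_weight}: replace $\bm{P}_B,\bm{P}_A$ by $\widetilde{\bm{P}}_{B_t},\widetilde{\bm{Q}}_{A_t}$ from \eqref{eq:proj-mats}. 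The identities $\bm{B}_t^\top\bm{L}_t^{1/2}(\bm{I}-\widetilde{\bm{P}}_{B_t})=\bm{0}$ and $(\bm{I}-\widetilde{\bm{Q}}_{A_t})\bm{R}_t^{1/2}\bm{A}_t^\top=\bm{0}$ then give $\mathcal{H}_t$-orthogonality of the residual in exactly the same way.

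One point to tighten concerns the tangent space. Proposition~\ref{prop:delta_W_on_tangent} only gives $\{\bm{P}\bm{A}+\bm{B}\bm{Q}\}\subseteq\mathbb{T}_{\bm{W}}$, which is what step (a) needs. Step (b) tests orthogonality only against elements of that form, so it also needs the reverse inclusion. Your SVD remark supplies it: every $\bm{U}\bm{K}_1^\top+\bm{K}_2\bm{V}^\top$ has the form $\bm{B}(\cdot)+(\cdot)\bm{A}$, because $\bm{U}$ and $\bm{B}$, resp.\ $\bm{V}^\top$ and $\bm{A}$, differ by invertible $r\times r$ factors. The dimension count $\dim\mathrm{Im}(J_\mathcal{G})=(m+n)r-r^2=\dim\mathcal{M}_r$ from Proposition~\ref{prop:gauge-kernel} would also do it. Citing Proposition~\ref{prop:delta_W_on_tangent} alone does not.

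Also, full column rank of $\bm{B}$ and full row rank of $\bm{A}$ follow directly from $r=\mathrm{rank}(\bm{B}\bm{A})\le\min\{\mathrm{rank}\,\bm{B},\mathrm{rank}\,\bm{A}\}\le r$. Proposition~\ref{prop:gauge-kernel} assumes this rather than proving it.
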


\begin{proof}
    Suppose $\bm{W}$ has a compact singular value decomposition, given by $\bm{W} = \bm{U} \bm{\Sigma} \bm{V}^\top$, where $\bm{U} \in \mathbb{R}^{m \times r}, \bm{\Sigma} \in \mathbb{R}^{r \times r}, \bm{V} \in \mathbb{R}^{n \times r}$. Then the tangent space $\mathbb{T}_{\bm{W}}$ at $\bm{W}$ is characterized as
    \[
        \mathbb{T}_{\bm{W}} = \{ \bm{U} \bm{M}^\top + \bm{N} \bm{V}^\top, \text{for } \bm{M} \in \mathbb{R}^{m \times r}, \bm{N} \in \mathbb{R}^{n \times r} \}.
    \]
    Therefore, the orthogonal projection of $\bm{Z}$ onto $\mathbb{T}_{\bm{W}}$ is known to be~\citep{wei2016RGD_recovery} 
    \begin{equation}\label{eq:standard_orth_proj}
        \mathcal{P}_{\mathbb{T}_{\bm{W}}}(\bm{Z})= \bm{U} \bm{U}^\top \bm{Z} + \bm{Z}\bm{V}^{\top} \bm{V} - \bm{U} \bm{U}^\top\bm{Z}\bm{V}^{\top} \bm{V}.
    \end{equation}
    Since the columns of $\bm{B}$ and $\bm{U}$  span the same column space (i.e., the column space of $\bm{W}$), then there exists an invertible matrix $\bm{S} \in \mathbb{R}^{r \times r}$ such that $\bm{B} = \bm{US}$ and $\bm{U} = \bm{B} \bm{S}^{-1}$. Using this relation, we have
    \[
        \bm{U}^\top \bm{U} = (\bm{B} \bm{S}^{-1})^\top \bm{B} \bm{S}^{-1} = \bm{S}^{-\top} (\bm{B}^\top \bm{B}) \bm{S}^{-1}.
    \]
   Since $\bm{U}^\top \bm{U} = \bm{I}_{r}$, it follows that
    \[
    \bm{S}^{-\top} (\bm{B}^\top \bm{B}) \bm{S}^{-1} = \bm{I}_r \implies \bm{B}^\top \bm{B} = \bm{S}^\top \bm{S}.
    \]
    Using this, we compute $\bm{U} \bm{U}^\top$
    \begin{equation}\label{eq:standard_UU_T}
        \bm{U} \bm{U}^\top = \bm{B} \bm{S}^{-1} \bm{S}^{-\top} \bm{B} 
        = \bm{B}  (  \bm{S}^\top  \bm{S}  )^{-1} \bm{B}^\top  = \bm{B} (\bm{B}^\top \bm{B})^{-1} \bm{B}^\top
    \end{equation}        
    Similarly, since the rows of $\bm{A}$ and the columns of $\bm{V}$ span the same row space (i.e., the row space of $\bm{W}$), there exists an invertible matrix $\bm{Q} \in \mathbb{R}^{r \times r}$ such that $\bm{A} = \bm{Q} \bm{V}^\top$ and $\bm{V}^\top = \bm{Q}^{-1} \bm{A}$. Further, using $\bm{V}^\top \bm{V} = \bm{I}_r$, we obtain
        \[
        \bm{V}^\top \bm{V} = \bm{Q}^{-1} (\bm{A} \bm{A}^\top) \bm{Q}^{-\top} = \bm{I}_r,
    \]
    hence $\bm{A} \bm{A}^\top = \bm{Q} \bm{Q}^{\top}$ and 
    \begin{equation}\label{eq:standard_VV_T}
        \bm{V} \bm{V}^\top =  \bm{A}^{\top}  \bm{Q}^{-\top} \bm{Q}^{-1} \bm{A} = \bm{A}^{\top} ( \bm{Q} \bm{Q}^\top )^{-1} \bm{A}= \bm{A}^{\top} ( \bm{A}  \bm{A}^\top )^{-1} \bm{A}
    \end{equation}
Substituting (\ref{eq:standard_UU_T}) and (\ref{eq:standard_VV_T}) into (\ref{eq:standard_orth_proj}) yields
    \[
        \mathcal{P}_{\mathbb{T}_{\bm{W}}}(\bm{Z}) = \bm{B} (\bm{B}^\top \bm{B})^{-1} \bm{B}^\top \bm{Z} + \bm{Z}\bm{A}^{\top} ( \bm{A}  \bm{A}^\top )^{-1} \bm{A} - \bm{B} (\bm{B}^\top \bm{B})^{-1} \bm{B}^\top\bm{Z}\bm{A}^{\top} ( \bm{A}  \bm{A}^\top )^{-1} \bm{A}.
    \]
\end{proof}

\begin{proposition}[Orthogonal Projection onto the Tangent Space Under the Weighted Metric]\label{prop:proj_tangent_weight}
Let $\bm{W} \in \mathcal{M}_r$  has a low-rank decomposition  $\bm{W} = \bm{B} \bm{A}$, where $\bm{B} \in \mathbb{R}^{m \times r}, \bm{A} \in \mathbb{R}^{r \times n}$. Denote the tangent space of the Riemannian manifold $\mathcal{M}_r$ at the point $\bm{W}$ as $\mathbb{T}_{W}$. The weighted metric is  defined as $\langle \bm{Y}, \bm{Z} \rangle_{\bm{H}} = \langle \bm{L}^{\frac{1}{2}} \bm{Y}\bm{R}^{\frac{1}{2}}, \bm{Z}\rangle$ for any $\bm{Y}, \bm{Z} \in \mathbb{R}^{m \times n}$. Then, the orthogonal projection of any matrix $\bm{Z}\in \mathbb{R}^{m \times n}$ onto $\mathbb{T}_{W}$ under the weighed metric is given by
\[ 
\begin{split}
\mathcal{P}_{\mathbb{T}_{\bm{W}}}(\bm{Z})&= \bm{B} (\bm{B}^\top \bm{L}^{\frac{1}{2}}\bm{B})^{-1} \bm{B}^\top \bm{L}^{\frac{1}{2}}\bm{Z} + \bm{Z}\bm{R}^{\frac{1}{2}}\bm{A}^{\top} ( \bm{A} \bm{R}^{\frac{1}{2}} \bm{A}^\top )^{-1} \bm{A}\\
&- \bm{B} (\bm{B}^\top \bm{B})^{-1} \bm{B}^\top \bm{L}^{\frac{1}{2}} \bm{Z}\bm{R}^{\frac{1}{2}} \bm{A}^{\top} ( \bm{A}  \bm{A}^\top )^{-1} \bm{A}.
\end{split}
\]
\end{proposition}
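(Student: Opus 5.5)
The plan is to characterize the weighted projector $\mathcal{P}_{\mathbb{T}_{\bm W}}$ by its two defining properties and check them directly for the displayed formula. The two properties are (a) $\mathcal{P}_{\mathbb{T}_{\bm W}}(\bm Z)\in\mathbb{T}_{\bm W}$, and (b) $\langle \bm Z-\mathcal{P}_{\mathbb{T}_{\bm W}}(\bm Z),\bm Y\rangle_{\bm H}=0$ for every $\bm Y\in\mathbb{T}_{\bm W}$. Uniqueness of the $\bm H$-orthogonal decomposition, as in the proof of Proposition~\ref{prop:transport-optimality}, then identifies the formula with the projector.

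\textbf{Step 1 (range).} I will use the factor form $\mathbb{T}_{\bm W}=\{\bm P\bm A+\bm B\bm Q\}$ (Proposition~\ref{prop:delta_W_on_tangent}). Each of the three terms is visibly of the form $\bm B(\cdot)$ or $(\cdot)\bm A$, so property (a) is immediate.

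\textbf{Step 2 (orthogonality, via an isometry).} I will use the map $\Phi(\bm Y)=\bm L^{1/4}\bm Y\bm R^{1/4}$. It is an isometry from $(\mathbb{R}^{m\times n},\langle\cdot,\cdot\rangle_{\bm H})$ onto Frobenius space. It carries $\mathbb{T}_{\bm W}$ onto the tangent space at $\bm B'\bm A'$, where $\bm B'=\bm L^{1/4}\bm B$ and $\bm A'=\bm A\bm R^{1/4}$. Hence
\[
\mathcal{P}_{\mathbb{T}_{\bm W}}(\bm Z)=\Phi^{-1}\big(\mathcal{P}^{\rm std}(\Phi(\bm Z))\big),
\]
where $\mathcal{P}^{\rm std}$ is the standard-metric projector onto that tangent space, given by Proposition~\ref{prop:proj_tangent_stand}.

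Expanding this, the first two terms come out as $\bm B(\bm B^\top\bm L^{1/2}\bm B)^{-1}\bm B^\top\bm L^{1/2}\bm Z$ and $\bm Z\bm R^{1/2}\bm A^\top(\bm A\bm R^{1/2}\bm A^\top)^{-1}\bm A$. These match the statement exactly. The cross term comes out as
\[
\bm B(\bm B^\top\bm L^{1/2}\bm B)^{-1}\bm B^\top\bm L^{1/2}\,\bm Z\,\bm R^{1/2}\bm A^\top(\bm A\bm R^{1/2}\bm A^\top)^{-1}\bm A .
\]

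\textbf{Step 3 (reconciling the cross term).} This is the main obstacle. The statement instead writes the cross term as $\bm B(\bm B^\top\bm B)^{-1}\bm B^\top\,\bm L^{1/2}\bm Z\bm R^{1/2}\,\bm A^\top(\bm A\bm A^\top)^{-1}\bm A$. To establish the statement as worded, I must show these two expressions agree for all $\bm Z$.

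I would first try to show that $\bm B(\bm B^\top\bm B)^{-1}\bm B^\top\bm L^{1/2}$ and $\bm B(\bm B^\top\bm L^{1/2}\bm B)^{-1}\bm B^\top\bm L^{1/2}$ act identically, and similarly on the $\bm A$ side. Note that the stated term carries an extra factor $\bm L^{1/2}$ that the weighted-projector version absorbs into its left factor. Because of this, the identity holds when $\mathrm{col}(\bm B)$ is $\bm L^{1/2}$-invariant, $\mathrm{row}(\bm A)$ is $\bm R^{1/2}$-invariant, and additionally $\bm L^{1/2}$ and $\bm R^{1/2}$ act as scalars there. In particular it holds in the unweighted case $\bm L=\bm I,\bm R=\bm I$, where the statement reduces to Proposition~\ref{prop:proj_tangent_stand}.

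In general, I expect this step to fail. A small test with $r=1$, diagonal $\bm L\neq\bm I$ and generic $\bm B$ should show that the two cross terms, and hence property (b), disagree. My plan is therefore to carry out Steps 1–2 in full, run this explicit check on the cross term, and report precisely the hypotheses (the invariance/scalar conditions above) under which the displayed formula is the $\bm H$-orthogonal projection. Outside those hypotheses, the valid general formula is the one obtained in Step 2.
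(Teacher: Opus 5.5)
Your diagnosis is correct, and the paper's own proof agrees with you. The final display of that proof has cross term $\bm B(\bm B^\top\bm L^{1/2}\bm B)^{-1}\bm B^\top\bm L^{1/2}\bm Z\bm R^{1/2}\bm A^\top(\bm A\bm R^{1/2}\bm A^\top)^{-1}\bm A$, which is exactly your Step~2 output. So the unweighted Gram inverses $(\bm B^\top\bm B)^{-1}$ and $(\bm A\bm A^\top)^{-1}$ in the displayed statement are a typo. In the notation of \eqref{eq:proj-mats}, the correct projector is $\widetilde{\bm P}_{B}\bm Z+\bm Z\widetilde{\bm Q}_{A}-\widetilde{\bm P}_{B}\bm Z\widetilde{\bm Q}_{A}$, and that is the statement to prove.

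Your route differs from the paper's.
\begin{itemize}
\item The paper normalizes the singular vectors to $\widetilde{\bm U}=\bm U(\bm U^\top\bm L^{1/2}\bm U)^{-1/2}$ and $\widetilde{\bm V}=\bm V(\bm V^\top\bm R^{1/2}\bm V)^{-1/2}$, and extends them to full weighted-orthonormal bases. It writes the projection as $\sum_{\min\{i,j\}\le r}\langle\bm Z,\tilde{\bm u}_i\tilde{\bm v}_j^\top\rangle_{\bm H}\,\tilde{\bm u}_i\tilde{\bm v}_j^\top$, then rewrites $\widetilde{\bm U}\widetilde{\bm U}^\top$ and $\widetilde{\bm V}\widetilde{\bm V}^\top$ through $\bm B$ and $\bm A$.
\item Your isometry $\bm Y\mapsto\bm L^{1/4}\bm Y\bm R^{1/4}$ conjugates the weighted problem to the Frobenius one and reuses Proposition~\ref{prop:proj_tangent_stand} unchanged.
\item Your route is shorter and avoids a step the paper leaves implicit: collapsing the double sum needs $\sum_{j=1}^{n}\tilde{\bm v}_j\tilde{\bm v}_j^\top=\bm R^{-1/2}$ and its $\bm L$ analogue. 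The paper's route, in exchange, produces an explicit $\bm H$-orthonormal basis of $\mathbb{T}_{\bm W}$.
\end{itemize}
One small point: to identify $\Phi(\mathbb{T}_{\bm W})$ with the tangent space at $\bm B'\bm A'$, you need $\mathbb{T}_{\bm W}=\{\bm P\bm A+\bm B\bm Q\}$ as an equality. Proposition~\ref{prop:delta_W_on_tangent} only gives $\subseteq$. The reverse inclusion follows at once from $\bm B=\bm U\bm S$, $\bm A=\bm T\bm V^\top$ with $\bm S,\bm T$ invertible.

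What is wrong is the list of hypotheses in Step~3 under which you say the displayed formula holds. Invariance plus scalar action is not sufficient. Take $\bm L=4\bm I$ and $\bm R=\bm I$, so every subspace is invariant. Then $\langle\cdot,\cdot\rangle_{\bm H}=2\langle\cdot,\cdot\rangle$, and the true projector is the Frobenius one. With $\bm{\Pi}_B=\bm B(\bm B^\top\bm B)^{-1}\bm B^\top$ and $\bm{\Pi}_A=\bm A^\top(\bm A\bm A^\top)^{-1}\bm A$, the displayed formula instead equals $\bm{\Pi}_B\bm Z+\bm Z\bm{\Pi}_A-2\bm{\Pi}_B\bm Z\bm{\Pi}_A$. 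This sends $\bm W=\bm B\bm A\in\mathbb{T}_{\bm W}$ to $\bm 0$. Scalar $\bm L,\bm R$ with, generically, $\bm L^{1/2}\bm R^{1/2}\neq\bm I$ already arise from the Adafactor statistics of a gradient whose entries have constant magnitude.

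Conversely, invariance is not needed. The exact criterion is short to derive:
\begin{itemize}
\item The first two terms already agree, so only the cross terms matter.
\item $\bm Z\mapsto\bm B^\top\bm L^{1/2}\bm Z\bm R^{1/2}\bm A^\top$ maps onto $\mathbb{R}^{r\times r}$, and $\bm B$ is left-invertible, $\bm A$ right-invertible.
\item Hence the cross terms agree for all $\bm Z$ iff $(\bm B^\top\bm B)^{-1}\bm N(\bm A\bm A^\top)^{-1}=(\bm B^\top\bm L^{1/2}\bm B)^{-1}\bm N(\bm A\bm R^{1/2}\bm A^\top)^{-1}$ for all $\bm N\in\mathbb{R}^{r\times r}$.
\item This holds iff $\bm B^\top\bm L^{1/2}\bm B=c\,\bm B^\top\bm B$ and $\bm A\bm R^{1/2}\bm A^\top=c^{-1}\bm A\bm A^\top$ for some $c>0$.
\end{itemize}
In particular, in the scalar case the two scalars must multiply to $1$.

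With this counterexample you can drop the exploratory $r=1$ test. State that the displayed formula is false in general, and prove the corrected formula via your Steps~1--2.
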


\begin{proof} 
This proof is inspired by~\citep{bian2024preconditioned}. Here, we briefly provide a sketch of the proof.
\begin{itemize}
\item [(i)] {\it The new orthonormal basis under the weighted metric.} Let $\bm{W} = \bm{U} \bm{\Sigma} \bm{V}^\top$ be a be a compact SVD with $\bm{U} =[\bm{u}_1, \bm{u}_2, \cdots, \bm{u}_r]\in \mathbb{R}^{m \times r}, \bm{V}=[\bm{v}_1, \bm{v}_2, \cdots, \bm{v}_r] \in \mathbb{R}^{n \times r}$. Normalize the singular vectors under the weighted vector 
    \[
        \langle \bm{x},\bm{y} \rangle_{\bm{L}^{\frac{1}{2}}} = \langle \bm{L}^{\frac{1}{2}}\bm{x},\bm{y} \rangle \ \ \text{in} \ \ \mathbb{R}^{m} \quad \text{and} \quad \langle \bm{x},\bm{y} \rangle_{\bm{R}^{\frac{1}{2}}} = \langle \bm{R}^{\frac{1}{2}}\bm{x},  \bm{y} \rangle \ \ \text{in} \ \ \mathbb{R}^{n}
    \]
to obtain 
    \[
    \begin{split}
        \widetilde{\bm{U}} & =\bm{U}(\bm{U}^\top \bm{L}^{\frac{1}{2}} \bm{U})^{-\frac{1}{2}} \coloneq [\tilde{\bm{u}}_1, \tilde{\bm{u}}_2, \cdots, \tilde{\bm{u}}_r] \in \mathbb{R}^{m \times r}, \\
        \widetilde{\bm{V}} & =\bm{V}(\bm{V}^\top \bm{R}^{\frac{1}{2}} \bm{V})^{-\frac{1}{2}} \coloneq [\tilde{\bm{v}}_1, \tilde{\bm{v}}_2, \cdots, \tilde{\bm{v}}_r] \in \mathbb{R}^{n \times r}.
    \end{split}
    \]
Next, we extend $\widetilde{\bm{U}}$ and $\widetilde{\bm{V}}$ to full orthonormal basis of $(\mathbb{R}^m, \langle \cdot,\cdot \rangle_{\bm{L}^{\frac{1}{2}}})$ and $(\mathbb{R}^n, \langle \cdot,\cdot \rangle_{\bm{R}^{\frac{1}{2}}})$ respectively. Then, an orthonormal basis of $\mathbb{T}_{\bm{W}}$ with respect to $\langle \cdot, \cdot \rangle_{\bm{H}_t}$ is $\{ \tilde{\bm{u}}_i \tilde{\bm{v}}_j^\top\}_{\min\{i,j\} \leq r}$.

\item[(ii)]{\it Orthogonal projection represented by the new orthonormal basis.} Using the orthonormal bases $\widetilde{\bm{U}}$ and $\widetilde{\bm{V}}$, the projection of $\bm{Z}$ onto $\mathbb{T}_{\bm{W}}$ is expressed as:
    \[
    \begin{split}
        \widetilde{\mathcal{P}}_{\mathbb{T}_{\bm{W}}}(\bm{Z}) & = \sum_{(i,j):\min\{i,j\} \leq r} \langle  \bm{Z}, \tilde{\bm{u}}_i \tilde{\bm{v}}_j^\top \rangle_{\bm{H}_t} \cdot \tilde{\bm{u}}_i \tilde{\bm{v}}_j^\top = \sum_{(i,j):\min\{i,j\} \leq r} \langle \bm{L}^{\frac{1}{2}} \bm{Z}  \bm{R}^{\frac{1}{2}}, \tilde{\bm{u}}_i \tilde{\bm{v}}_j^\top \rangle \cdot \tilde{\bm{u}}_i \tilde{\bm{v}}_j^\top \\
        & = \sum_{(i,j):\min\{i,j\} \leq r} \tilde{\bm{u}}_i^\top \bm{L}^{\frac{1}{2}} \bm{Z} \bm{R}^{\frac{1}{2}} \tilde{\bm{v}}_j \cdot \tilde{\bm{u}}_i \tilde{\bm{v}}_j^\top \\
        & = \widetilde{\bm{U}}\widetilde{\bm{U}}^\top \bm{L}^{\frac{1}{2}} \bm{Z} + \bm{Z} \bm{R}^{\frac{1}{2}} \widetilde{\bm{V}}\widetilde{\bm{V}}^\top - \widetilde{\bm{U}}\widetilde{\bm{U}}^\top \bm{L}^{\frac{1}{2}} \bm{Z} \bm{R}^{\frac{1}{2}} \widetilde{\bm{V}}\widetilde{\bm{V}}^\top.
    \end{split}
    \]
    \item[(iii)] {\it Express the basis projectors via factors $\bm{B}$ and $\bm{A}$.} Since $\bm{B}$ and $\bm{A}$ span the same spaces as $\bm{U}$ and $\bm{V}$, we derive
\[
 \widetilde{\bm{U}} \widetilde{\bm{U}}^\top = \bm{B} \left(  \bm{B}^\top \bm{L}^{\frac{1}{2}} \bm{B}  \right)^{-1} \bm{B}^\top, ~~~ \widetilde{\bm{V}} \widetilde{\bm{V}}^\top = \bm{A}^{\top} \left( \bm{A} \bm{R}^{\frac{1}{2}} \bm{A}^\top  \right)^{-1} \bm{A}.
\]
Substituting these expressions into the formula for   $\widetilde{\mathcal{P}}_{\mathbb{T}_{\bm{W}}}$, we obtain
    \[
    \begin{split}
        \widetilde{\mathcal{P}}_{\mathbb{T}_{\bm{W}}}(\bm{Z}) & = \bm{B} ( \bm{B}^\top \bm{L}^{\frac{1}{2}}\bm{B} )^{-1} \bm{B}^\top \bm{L}^{\frac{1}{2}} \bm{Z} + \bm{Z} \bm{R}^{\frac{1}{2}} \bm{A}^\top (\bm{A} \bm{R}^{\frac{1}{2}} \bm{A}^\top)^{-1} \bm{A} \\
        & \quad-\bm{B} ( \bm{B}^\top \bm{L}^{\frac{1}{2}}\bm{B} )^{-1} \bm{B}^\top \bm{L}^{\frac{1}{2}} \bm{Z} \bm{R}^{\frac{1}{2}} \bm{A}^\top (\bm{A} \bm{R}^{\frac{1}{2}} \bm{A}^\top)^{-1} \bm{A}.
    \end{split}
    \]
    \end{itemize}

\end{proof}

\begin{proposition}\label{prop:delta_W_on_tangent}
    Suppose $\bm{W} \in \mathcal{M}_r$ has a low-rank decomposition $\bm{W} = \bm{B} \bm{A}$, where $\bm{B} \in \mathbb{R}^{m \times r}$ and $\bm{A} \in \mathbb{R}^{r \times n}$. For any matrix $\bm{M} \in \mathbb{R}^{m \times r}, \bm{N} \in \mathbb{R}^{r \times n}$, the matrix $\bm{M} \bm{A} + \bm{B} \bm{N}$  lies in the tangent space $\mathbb{T}_{\bm{W}}$ at $\bm{W}$  of $\mathcal{M}_r$ at the point $\bm{W}$.
    \begin{proof}
    Let $\bm{W} \in \mathcal{M}_r$ has a compact singular value decomposition $\bm{W} = \bm{U} \bm{\Sigma} \bm{V}^\top$, where $\bm{U} \in \mathbb{R}^{m \times r}, \bm{\Sigma} \in \mathbb{R}^{r \times r}$, and $ \bm{V} \in \mathbb{R}^{n \times r}$.  By definition, the tangent space $\mathbb{T}_{\bm{W}}$ at $\bm{W}$ is given by
    \[
    \mathbb{T}_{\bm{W}} = \{\bm{U}\bm{K}_1^{\top} + \bm{K}_2 \bm{V}^{\top} |  \bm{K}_1 \in \mathbb{R}^{n \times r}, \bm{K}_2 \in \mathbb{R}^{m \times r}\}.
    \]
    Since $\bm{B}$ and  $\bm{A}$ are low-rank factors of $\bm{W}$, there exist invertible matrices $\bm{S} \in \mathbb{R}^{r \times r}$ and $\bm{Q} \in \mathbb{R}^{r \times r}$ such that
    \[
    \bm{B} = \bm{U}\bm{S}, ~~ \bm{A} = \bm{Q} \bm{V}^\top.
    \]
Substituting these expressions, the matrix $\bm{M} \bm{A} + \bm{B} \bm{N}$   can be rewritten as
        \[
       \bm{M} \bm{A} + \bm{B} \bm{N} =     \bm{M} \bm{Q} \bm{V}^\top + \bm{U}\bm{S} \bm{N}.
        \]
The first term, $\bm{M} \bm{Q} \bm{V}^\top $, lies in $\textmd{span}(\bm{V}^\top )$, and the second term, $\bm{U} \bm{S} \bm{N}$, lies in $\textmd{span}(\bm{U} )$. Thus, the sum $\bm{M} \bm{Q} \bm{V}^\top + \bm{U}\bm{S} \bm{N}$ lies in the tangent space  $\mathbb{T}_{\bm{W}}$ by the definition of the tangent space. Then, it follows that  $\bm{M} \bm{A} + \bm{B} \bm{N}$ is on the tangent space $\mathbb{T}_{\bm{W}}$. This completes the proof.
    \end{proof}
\end{proposition}

\subsection{Proof of Theorem \ref{thm:find_delta_B_A} (closed-form factor update)}\label{sec:proof_find_delta_B_A}

The proof has two stages: (i) solve~\eqref{prob:fit-proj} for $(\bm{\Delta}_{\bm{B}_t}, \bm{\Delta}_{\bm{A}_t})$ in terms of a free $\bm{X}_t \in \mathbb{R}^{r \times r}$ (Lemma~\ref{lem:find_delta_B_A_with_X}); (ii) determine $\bm{X}_t$ by minimizing the $\mathcal{H}_t$-imbalance from Solution~\ref{sol:balance} (Lemma~\ref{lem:find_X}). Substituting (ii) into (i) yields the closed form in Theorem~\ref{thm:find_delta_B_A}.

\begin{lemma}[$\bm{X}_t$-parameterized factor solution]\label{lem:find_delta_B_A_with_X}
For problem~\eqref{prob:fit-proj}, the optimal $(\bm{\Delta}_{\bm{B}_t}, \bm{\Delta}_{\bm{A}_t})$ are
\[
    \bm{\Delta}_{\bm{B}_t}^{\rm opt} = (\bm{I} - \widetilde{\bm{P}}_{B_t}) \bm{L}_t^{-1/2} \bm{G}_{\bm{B}_t} (\bm{A}_t \bm{R}_t^{1/2} \bm{A}_t^\top)^{-1} - \bm{B}_t \bm{X}_t, \quad
    \bm{\Delta}_{\bm{A}_t}^{\rm opt} = (\bm{B}_t^\top \bm{L}_t^{1/2} \bm{B}_t)^{-1} \bm{G}_{\bm{A}_t} \bm{R}_t^{-1/2} + \bm{X}_t \bm{A}_t,
\]
where $\bm{X}_t \in \mathbb{R}^{r \times r}$ is arbitrary.
\end{lemma}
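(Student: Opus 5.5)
The plan is to reduce problem~\eqref{prob:fit-proj} to the projection identity~\eqref{eq:proj-image}, exhibit one particular solution, and then add the kernel. Proposition~\ref{prop:transport-optimality} shows that a pair $[\bm{\Delta}_{\bm{B}_t},\bm{\Delta}_{\bm{A}_t}]$ minimizes~\eqref{prob:fit-proj} iff $\bm{\Delta}_{\bm{B}_t}\bm{A}_t+\bm{B}_t\bm{\Delta}_{\bm{A}_t}=\widetilde{\mathcal{P}}_{\mathbb{T}_t}(\widetilde{\bm{G}}_t)$. Conversely, any pair satisfying this identity is a minimizer, because the residual then lies in $\mathbb{T}_t^{\perp_{\mathcal{H}_t}}$. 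So the minimizer set is a particular solution plus $\ker(J_\mathcal{G})$. By Proposition~\ref{prop:gauge-kernel}, $\ker(J_\mathcal{G})=\{[\bm{B}_t\bm{X},-\bm{X}\bm{A}_t]\}$. Writing $\bm{X}=-\bm{X}_t$ gives exactly the offsets $-\bm{B}_t\bm{X}_t$ and $+\bm{X}_t\bm{A}_t$ in the lemma. Throughout I assume $\bm{B}_t$ has full column rank, $\bm{A}_t$ has full row rank, and $\bm{L}_t,\bm{R}_t\succ 0$, so that $\bm{B}_t^\top\bm{L}_t^{1/2}\bm{B}_t$ and $\bm{A}_t\bm{R}_t^{1/2}\bm{A}_t^\top$ are invertible.

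For the particular solution, I substitute $\bm{Z}=\widetilde{\bm{G}}_t=\bm{L}_t^{-1/2}\bm{G}_t\bm{R}_t^{-1/2}$ into the weighted projection formula of Proposition~\ref{prop:proj_tangent_weight}. I use the corrected form in which the third term reads $\bm{B}(\bm{B}^\top\bm{L}^{1/2}\bm{B})^{-1}\bm{B}^\top\bm{L}^{1/2}\bm{Z}\bm{R}^{1/2}\bm{A}^\top(\bm{A}\bm{R}^{1/2}\bm{A}^\top)^{-1}\bm{A}$, as in step (iii) of its proof. The factors $\bm{L}_t^{\pm1/2}$ cancel, and with $\bm{G}_{\bm{B}_t}=\bm{G}_t\bm{A}_t^\top$ and $\bm{G}_{\bm{A}_t}=\bm{B}_t^\top\bm{G}_t$ the three terms become
\[
\bm{B}_t(\bm{B}_t^\top\bm{L}_t^{1/2}\bm{B}_t)^{-1}\bm{G}_{\bm{A}_t}\bm{R}_t^{-1/2},\qquad
\bm{L}_t^{-1/2}\bm{G}_{\bm{B}_t}(\bm{A}_t\bm{R}_t^{1/2}\bm{A}_t^\top)^{-1}\bm{A}_t,
\]
\[
\bm{B}_t(\bm{B}_t^\top\bm{L}_t^{1/2}\bm{B}_t)^{-1}\bm{B}_t^\top\bm{G}_{\bm{B}_t}(\bm{A}_t\bm{R}_t^{1/2}\bm{A}_t^\top)^{-1}\bm{A}_t.
\]

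Next I group the terms according to their factor structure. The first term already has the form $\bm{B}_t\bm{\Delta}_{\bm{A}_t}$ with $\bm{\Delta}_{\bm{A}_t}=(\bm{B}_t^\top\bm{L}_t^{1/2}\bm{B}_t)^{-1}\bm{G}_{\bm{A}_t}\bm{R}_t^{-1/2}$. The second and third terms combine as $\bm{\Delta}_{\bm{B}_t}\bm{A}_t$, where
\[
\bm{\Delta}_{\bm{B}_t}=\big[\bm{L}_t^{-1/2}-\bm{B}_t(\bm{B}_t^\top\bm{L}_t^{1/2}\bm{B}_t)^{-1}\bm{B}_t^\top\big]\bm{G}_{\bm{B}_t}(\bm{A}_t\bm{R}_t^{1/2}\bm{A}_t^\top)^{-1}.
\]
This bracket equals $(\bm{I}-\widetilde{\bm{P}}_{B_t})\bm{L}_t^{-1/2}$, since $\widetilde{\bm{P}}_{B_t}\bm{L}_t^{-1/2}=\bm{B}_t(\bm{B}_t^\top\bm{L}_t^{1/2}\bm{B}_t)^{-1}\bm{B}_t^\top$ by~\eqref{eq:proj-mats}. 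The resulting particular solution matches~\eqref{eq:X-family} at $\bm{X}_t=0$.

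The main obstacle is bookkeeping rather than concept. The printed projection formula in Proposition~\ref{prop:proj_tangent_weight} carries unweighted Gram inverses in its cross term, and I must use the weighted version. I would double-check that version directly by verifying two properties for $\bm{Z}=\widetilde{\bm{G}}_t$. First, the claimed $\bm{W}$-update $\bm{\Delta}=\bm{\Delta}_{\bm{B}_t}\bm{A}_t+\bm{B}_t\bm{\Delta}_{\bm{A}_t}$ lies in $\mathbb{T}_t$, which is immediate from its form. Second, the normal equations $J_\mathcal{G}^*\mathcal{H}_t(\bm{\Delta}-\widetilde{\bm{G}}_t)=0$ hold, i.e.\ $\bm{L}_t^{1/2}\bm{\Delta}\bm{R}_t^{1/2}\bm{A}_t^\top=\bm{G}_{\bm{B}_t}$ and $\bm{B}_t^\top\bm{L}_t^{1/2}\bm{\Delta}\bm{R}_t^{1/2}=\bm{G}_{\bm{A}_t}$. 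Both reduce by direct multiplication: the cross terms cancel using $\bm{B}_t^\top\bm{L}_t^{1/2}\bm{B}_t(\bm{B}_t^\top\bm{L}_t^{1/2}\bm{B}_t)^{-1}=\bm{I}$ and the analogous identity for $\bm{A}_t$. This check avoids relying on the SVD sketch at all.
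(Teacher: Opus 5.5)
Your proof is correct, but it follows a different route from the paper's.

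\textbf{The paper's route.} The paper never uses the projector formula or the kernel proposition. It writes down the two first-order conditions $\nabla_{\bm{\Delta}_{\bm{B}_t}}\Gamma=\bm{0}$ and $\nabla_{\bm{\Delta}_{\bm{A}_t}}\Gamma=\bm{0}$, which are the normal equations. It solves the first for $\bm{\Delta}_{\bm{B}_t}$ in terms of $\bm{\Delta}_{\bm{A}_t}$ and substitutes into the second, obtaining $\bm{\Delta}_{\bm{A}_t}(\bm{I}-\widetilde{\bm{Q}}_{A_t})=(\bm{B}_t^\top\bm{L}_t^{1/2}\bm{B}_t)^{-1}\bm{G}_{\bm{A}_t}\bm{R}_t^{-1/2}(\bm{I}-\widetilde{\bm{Q}}_{A_t})$. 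It then reads off the general solution, with its $+\bm{X}_t\bm{A}_t$ term, from the null space of right-multiplication by $\bm{I}-\widetilde{\bm{Q}}_{A_t}$, and back-substitutes to get $\bm{\Delta}_{\bm{B}_t}$. So the paper \emph{derives} the particular solution by elimination.

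\textbf{Your route.} You instead \emph{read off} the particular solution from the $\mathcal{H}_t$-orthogonal projector. You correctly use the weighted cross term from step (iii) of that proposition's proof, not the misprinted unweighted one in its statement. You then obtain the whole family from the fact that the minimizer set is an affine translate of $\ker(J_\mathcal{G})$, together with Proposition~\ref{prop:gauge-kernel}.

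\textbf{What each buys.} Your route makes two things explicit: the gauge origin of the $r^2$-dimensional freedom, and the common $\bm{W}$-update. Your final check can also stand alone. Directly verify $J_\mathcal{G}^*\mathcal{H}_t(\bm{\Delta}-\widetilde{\bm{G}}_t)=\bm{0}$, and note that $\ker(J_\mathcal{G}^*\mathcal{H}_tJ_\mathcal{G})=\ker(J_\mathcal{G})$ whenever $\mathcal{H}_t\succ 0$. That gives a self-contained proof, independent of both the SVD sketch and the misprint. The paper's route is more constructive, since no closed form needs to be known in advance. Its completeness step identifies $\{\bm{Y}:\bm{Y}(\bm{I}-\widetilde{\bm{Q}}_{A_t})=\bm{0}\}=\{\bm{X}\bm{A}_t\}$. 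That step rests on the same full-row-rank property of $\bm{A}_t$ that underlies the kernel characterization you cite, so both arguments need the same rank assumptions you state.
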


\textbf{Proof of Lemma~\ref{lem:find_delta_B_A_with_X}.} Define
\[
    \Gamma(\bm{\Delta}_{\bm{B}_t}, \bm{\Delta}_{\bm{A}_t}) :=  \frac{1}{2} \|  \bm{\Delta}_{\bm{B}_t}\bm{A}_t + \bm{B}_t\bm{\Delta}_{\bm{A}_t} -\widetilde{\mathcal{P}}_{\mathbb{T}_t}(\bm{L}_t^{-\frac{1}{2}}  \bm{G}_t\bm{R}_t^{-\frac{1}{2}}) \|_{\mathcal{H}_t}^2 .
\]
Differentiating $\Gamma(\bm{\Delta}_{\bm{B}_t}, \bm{\Delta}_{\bm{A}_t})$ with respect to $\bm{\Delta}_{\bm{B}_t}$ and $\bm{\Delta}_{\bm{A}_t}$ yields
\begin{equation}
\begin{split}
    \nabla_{\bm{\Delta}_{\bm{B}_t}} \Gamma (\bm{\Delta}_{\bm{B}_t}, \bm{\Delta}_{\bm{A}_t})  
    & = \bm{L}_t^{\frac{1}{2}} \bm{\Delta}_{\bm{B}_t}(\bm{A}_t \bm{R}_t^{\frac{1}{2}} \bm{A}_t^\top) + \bm{L}_t^{\frac{1}{2}} \bm{B}_t \bm{\Delta}_{\bm{A}_t} \bm{R}_t^{\frac{1}{2}} \bm{A}_t^\top - \bm{G}_t \bm{A}_t^\top,
\end{split}
\end{equation}
and 
\begin{equation}\label{eq:deriva_delta_A}
    \begin{split}
    \nabla_{\bm{\Delta}_{\bm{A}_t}} \Gamma (\bm{\Delta}_{\bm{B}_t}, \bm{\Delta}_{\bm{A}_t}) 
    & = \bm{B}_t^\top \bm{L}_t^{\frac{1}{2}} \bm{\Delta}_{\bm{B}_t}\bm{A}_t \bm{R}_t^{\frac{1}{2}}  + \bm{B}_t^\top \bm{L}_t^{\frac{1}{2}} \bm{B}_t \bm{\Delta}_{\bm{A}_t} \bm{R}_t^{\frac{1}{2}}  - \bm{B}_t^\top \bm{G}_t  .
\end{split}
\end{equation}

Setting  $\nabla_{\bm{\Delta}_{\bm{B}_t}} \Gamma(\bm{\Delta}_{\bm{B}_t}, \bm{\Delta}_{\bm{A}_t})= \bm{0}$ and using the invertibility of  $(\bm{A}_t \bm{R}_t^{\frac{1}{2}} \bm{A}_t^\top)$ and $\bm{L}_t^{\frac{1}{2}}$ gives
\begin{equation}\label{eq:delta_B}
   \bm{\Delta}_{\bm{B}_t} = \bm{L}_t^{-\frac{1}{2}} \bm{G}_t \bm{A}_t^\top (\bm{A}_t \bm{R}_t^{\frac{1}{2}} \bm{A}_t^\top)^{-1} - \bm{B}_t \bm{\Delta}_{\bm{A}_t} \bm{R}_t^{\frac{1}{2}} \bm{A}_t^\top (\bm{A}_t \bm{R}_t^{\frac{1}{2}} \bm{A}_t^\top)^{-1}.
\end{equation}
Substituting \eqref{eq:deriva_delta_A}  into $ \nabla_{\bm{\Delta}_{\bm{A}_t}} \Gamma (\bm{\Delta}_{\bm{B}_t}, \bm{\Delta}_{\bm{A}_t}) = \bm{0}$ and using the invertibility of $\bm{B}_t^\top \bm{L}_t^{\frac{1}{2}} \bm{B}_t$ and $\bm{R}_t^{\frac{1}{2}}$ yields
\[
   \bm{\Delta}_{\bm{A}_t}[\bm{I} - \widetilde{\bm{Q}}_{A_t}] =  (\bm{B}_t^\top \bm{L}_t^{\frac{1}{2}} \bm{B}_t)^{-1} \bm{B}_t^\top \bm{G}_t \bm{R}_t^{-\frac{1}{2}} [\bm{I} - \widetilde{\bm{Q}}_{A_t}],
\]
where $\widetilde{\bm{Q}}_{A_t}=\bm{R}_t^{\frac{1}{2}} \bm{A}_t^\top (\bm{A}_t \bm{R}_t^{\frac{1}{2}} \bm{A}_t^\top)^{-1} \bm{A}_t$, which is the projection matrix onto the row space of $\bm{A}_t$. Since $\bm{I} -\widetilde{\bm{Q}}_{A_t}$ is the residual maker matrix, then a general solution is 
\[
    \bm{\Delta}_{\bm{A}_t}^{\textmd{opt}} =  (\bm{B}_t^\top \bm{L}_t^{\frac{1}{2}} \bm{B}_t)^{-1} \bm{B}_t^\top \bm{G}_t \bm{R}_t^{-\frac{1}{2}} + \bm{X}_t \bm{A}_t,
\]
with arbitrary  matrix $\bm{X}_t \in \mathbb{R}^{r\times r}$. Plugging this $\bm{\Delta}_{\bm{A}_t}$ back into \eqref{eq:delta_B} gives
\[
    \bm{\Delta}_{\bm{B}_t}^{\textmd{opt}} = [\bm{I} - \widetilde{\bm{P}}_{B_t}] \bm{L}_t^{-\frac{1}{2}} \bm{G}_t \bm{A}_t^\top (\bm{A}_t \bm{R}_t^{\frac{1}{2}} \bm{A}_t^\top )^{-1} - \bm{B}_t \bm{X}_t ,
\]
where $\widetilde{\bm{P}}_{B_t} = \bm{B}_t (\bm{B}_t^\top \bm{L}_t^{\frac{1}{2}} \bm{B}_t )^{-1} \bm{B}_t^\top \bm{L}_t^{\frac{1}{2}}$, which is the projection matrix onto the column space of $\bm{B}_t$.
\hfill $\square$

This proves Lemma~\ref{lem:find_delta_B_A_with_X}. \hfill$\square$

\begin{lemma}[$\mathcal{H}_t$-balance closed form]\label{lem:find_X}
The minimizer of $\frac{1}{2}\| \bm{\Delta}_{\bm{B}_t}\bm{A}_t - \bm{B}_t\bm{\Delta}_{\bm{A}_t} \|_{\mathcal{H}_t}^2$ over $\bm{X}_t \in \mathbb{R}^{r\times r}$ (with $\bm{\Delta}_{\bm{B}_t}, \bm{\Delta}_{\bm{A}_t}$ given by Lemma~\ref{lem:find_delta_B_A_with_X}) is
\[
    \bm{X}_t^{\rm opt} = -\frac{1}{2} (\bm{B}_t^\top \bm{L}_t^{1/2} \bm{B}_t)^{-1} \bm{B}_t^\top \bm{G}_t \bm{A}_t^\top (\bm{A}_t \bm{R}_t^{1/2} \bm{A}_t^\top)^{-1}.
\]
Substituting $\bm{X}_t^{\rm opt}$ into Lemma~\ref{lem:find_delta_B_A_with_X} yields the closed form in Theorem~\ref{thm:find_delta_B_A}.
\end{lemma}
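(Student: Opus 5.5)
We will treat the imbalance as a convex quadratic in $\bm{X}_t$, solve its first-order condition, and check strict convexity so the minimizer is unique. Abbreviate $\bm{M}_B := \bm{B}_t^\top \bm{L}_t^{1/2}\bm{B}_t$ and $\bm{M}_A := \bm{A}_t\bm{R}_t^{1/2}\bm{A}_t^\top$; both are invertible under the standing full-rank assumptions, since $\bm{L}_t,\bm{R}_t \succ 0$. Write the $\bm{X}_t$-free parts of Lemma~\ref{lem:find_delta_B_A_with_X} as $\bm{U} := (\bm{I}-\widetilde{\bm{P}}_{B_t})\bm{L}_t^{-1/2}\bm{G}_{\bm{B}_t}\bm{M}_A^{-1}$ and $\bm{V} := \bm{M}_B^{-1}\bm{G}_{\bm{A}_t}\bm{R}_t^{-1/2}$. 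Then $\bm{\Delta}_{\bm{B}_t} = \bm{U} - \bm{B}_t\bm{X}_t$ and $\bm{\Delta}_{\bm{A}_t} = \bm{V} + \bm{X}_t\bm{A}_t$, and the imbalance matrix is affine in $\bm{X}_t$:
\[
\bm{D}(\bm{X}_t) := \bm{\Delta}_{\bm{B}_t}\bm{A}_t - \bm{B}_t\bm{\Delta}_{\bm{A}_t} = \bm{U}\bm{A}_t - \bm{B}_t\bm{V} - 2\bm{B}_t\bm{X}_t\bm{A}_t .
\]

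\textbf{Step 1: strict convexity and the normal equation.} The objective $\tfrac12\|\bm{D}(\bm{X}_t)\|_{\mathcal{H}_t}^2$ is a convex quadratic in $\bm{X}_t$. It is strictly convex because the linear map $\bm{X}\mapsto \bm{B}_t\bm{X}\bm{A}_t$ is injective (left-invert $\bm{B}_t$, right-invert $\bm{A}_t$) and $\mathcal{H}_t$ is SPD. Hence the unique minimizer is the zero of the gradient $-2\,\bm{B}_t^\top\bm{L}_t^{1/2}\bm{D}(\bm{X}_t)\bm{R}_t^{1/2}\bm{A}_t^\top$, which gives
\[
2\,\bm{M}_B\bm{X}_t\bm{M}_A = \bm{B}_t^\top\bm{L}_t^{1/2}\bm{U}\bm{A}_t\bm{R}_t^{1/2}\bm{A}_t^\top - \bm{B}_t^\top\bm{L}_t^{1/2}\bm{B}_t\bm{V}\bm{R}_t^{1/2}\bm{A}_t^\top .
\]

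\textbf{Step 2: simplify the right-hand side (the key step).} The first term vanishes: from the definition~\eqref{eq:proj-mats}, $\bm{B}_t^\top\bm{L}_t^{1/2}\widetilde{\bm{P}}_{B_t} = \bm{B}_t^\top\bm{L}_t^{1/2}$, so $\bm{B}_t^\top\bm{L}_t^{1/2}(\bm{I}-\widetilde{\bm{P}}_{B_t}) = \bm{0}$. The second term equals $-\bm{G}_{\bm{A}_t}\bm{A}_t^\top = -\bm{B}_t^\top\bm{G}_t\bm{A}_t^\top$, because $\bm{M}_B\bm{V} = \bm{G}_{\bm{A}_t}\bm{R}_t^{-1/2}$. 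Inverting $\bm{M}_B$ and $\bm{M}_A$ then yields the stated $\bm{X}_t^{\rm opt} = -\tfrac12\bm{M}_B^{-1}\bm{B}_t^\top\bm{G}_t\bm{A}_t^\top\bm{M}_A^{-1}$. The only delicate point here is the cancellation of the $\bm{U}$-term, which relies on $\widetilde{\bm{P}}_{B_t}$ being the $\bm{L}_t^{1/2}$-weighted projector.

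\textbf{Step 3: substitute back.} For $\bm{\Delta}_{\bm{A}_t}$ we use
\[
\bm{G}_{\bm{A}_t}\bm{R}_t^{-1/2}\widetilde{\bm{Q}}_{A_t} = \bm{B}_t^\top\bm{G}_t\bm{A}_t^\top\bm{M}_A^{-1}\bm{A}_t,
\]
so that $\bm{X}_t^{\rm opt}\bm{A}_t = -\tfrac12\bm{M}_B^{-1}\bm{G}_{\bm{A}_t}\bm{R}_t^{-1/2}\widetilde{\bm{Q}}_{A_t}$. This gives $\bm{\Delta}_{\bm{A}_t}^{\rm opt} = \bm{M}_B^{-1}\bm{G}_{\bm{A}_t}\bm{R}_t^{-1/2}(\bm{I}-\tfrac12\widetilde{\bm{Q}}_{A_t})$.

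For $\bm{\Delta}_{\bm{B}_t}$ we use
\[
\widetilde{\bm{P}}_{B_t}\bm{L}_t^{-1/2}\bm{G}_{\bm{B}_t} = \bm{B}_t\bm{M}_B^{-1}\bm{B}_t^\top\bm{G}_t\bm{A}_t^\top,
\]
so that $-\bm{B}_t\bm{X}_t^{\rm opt} = \tfrac12\widetilde{\bm{P}}_{B_t}\bm{L}_t^{-1/2}\bm{G}_{\bm{B}_t}\bm{M}_A^{-1}$. Combining this with $\bm{U}$ gives $\bm{\Delta}_{\bm{B}_t}^{\rm opt} = (\bm{I}-\tfrac12\widetilde{\bm{P}}_{B_t})\bm{L}_t^{-1/2}\bm{G}_{\bm{B}_t}\bm{M}_A^{-1}$.

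These are exactly the formulas of Theorem~\ref{thm:find_delta_B_A}, and uniqueness follows from Step 1. One caution: the sign convention for $\bm{X}_t$ in Lemma~\ref{lem:find_delta_B_A_with_X} ($-\bm{B}_t\bm{X}_t$, $+\bm{X}_t\bm{A}_t$) is opposite to that in~\eqref{eq:X-family}. Since the lemma's formula for $\bm{X}_t^{\rm opt}$ is stated in the lemma's convention, we will carry that convention consistently throughout.
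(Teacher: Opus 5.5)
Your proposal is correct and follows essentially the same route as the paper: differentiate the imbalance in $\bm{X}_t$, eliminate the $(\bm{I}-\widetilde{\bm{P}}_{B_t})$ term via $\bm{B}_t^\top\bm{L}_t^{1/2}(\bm{I}-\widetilde{\bm{P}}_{B_t})=\bm{0}$, reduce to $2(\bm{B}_t^\top\bm{L}_t^{1/2}\bm{B}_t)\bm{X}_t(\bm{A}_t\bm{R}_t^{1/2}\bm{A}_t^\top)=-\bm{B}_t^\top\bm{G}_t\bm{A}_t^\top$, solve, and substitute back. Beyond the paper, your strict-convexity argument (injectivity of $\bm{X}\mapsto\bm{B}_t\bm{X}\bm{A}_t$) proves the minimizer is unique, and you correctly note that the sign of $\bm{X}_t$ in~\eqref{eq:X-family} is opposite to the one in Lemma~\ref{lem:find_delta_B_A_with_X}.
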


\textbf{Proof of Lemma~\ref{lem:find_X}.} 
Let the objective function be $\Psi(\bm{X}_t) =  \frac{1}{2}\| \bm{\Delta}_{\bm{B}_t} \bm{A}_t - \bm{B}_t \bm{\Delta}_{\bm{A}_t} \|_{\mathcal{H}_t}^2 $. To minimize $\Psi(\bm{X}_t)$, we compute its gradient with respect to $\bm{X}_t$,
         \[
            \nabla_{\bm{X}_t} \Psi (\bm{X}_t)  = \bm{B}_t^\top \bm{L}_t^{\frac{1}{2}} (\bm{\Delta}_{\bm{B}_t} \bm{A}_t - \bm{B}_t \bm{\Delta}_{\bm{A}_t}) \bm{R}_t^{\frac{1}{2}} \bm{A}^\top.
         \]
Substituting the expressions for $\bm{A}_t$ and $\bm{B}_t$ from Theorem \ref{thm:find_delta_B_A}, we have
         \[
         \begin{split}
              \nabla_{\bm{X}_t} \Psi (\bm{X}_t)  &=  \bm{B}_t^\top \bm{L}_t^{\frac{1}{2}} \bigg( [\bm{I} - \bm{B}_t (\bm{B}_t^\top \bm{L}_t^{\frac{1}{2}} \bm{B}_t )^{-1} \bm{B}_t^\top \bm{L}_t^{\frac{1}{2}} ]\bm{L}_t^{-\frac{1}{2}} \bm{G}_t \bm{A}_t^\top (\bm{A}_t \bm{R}_t^{\frac{1}{2}} \bm{A}_t^\top )^{-1} \bm{A}_t  \\
             & \quad - \bm{B}_t (\bm{B}_t^\top \bm{L}_t^{\frac{1}{2}} \bm{B}_t )^{-1} \bm{B}_t^\top \bm{G}_t \bm{R}^{-\frac{1}{2}} - 2\bm{B}_t \bm{X}_t \bm{A}_t \bigg) \bm{R}_t^{\frac{1}{2}} \bm{A}^\top \\
             & = -\bm{B}^\top_t \bm{G}_t \bm{A}_t - 2 (\bm{B}_t^\top \bm{L}_t^{\frac{1}{2}} \bm{B}_t) \bm{X}_t (\bm{A}_t \bm{R}_t^{\frac{1}{2}} \bm{A}_t^\top).
         \end{split}
         \]
Setting $\nabla_{\bm{X}_t} \Psi (\bm{X}_t) = \bm{0}$, we obtain
\[
- \bm{B}^\top_t \bm{G}_t \bm{A}_t  = 2 (\bm{B}_t^\top \bm{L}_t^{\frac{1}{2}} \bm{B}_t) \bm{X}_t (\bm{A}_t \bm{R}_t^{\frac{1}{2}} \bm{A}_t^\top).
\]
Since $\bm{B}_t^\top \bm{L}_t^{\frac{1}{2}} \bm{B}_t$ and $\bm{A}_t \bm{R}_t^{\frac{1}{2}} \bm{A}_t^\top$ are invertible, we solve for $\bm{X}_t$ as
         \[
            \bm{X}_t^{\textmd{opt}} = -\frac{1}{2} (\bm{B}_t^\top \bm{L}_t^{\frac{1}{2}}  \bm{B}_t )^{-1} \bm{B}_t^\top \bm{G}_t  \bm{A}_t^\top (\bm{A}_t \bm{R}_t^{\frac{1}{2}} \bm{A}_t^\top)^{-1}.
         \]
This proves Lemma~\ref{lem:find_X}. Substituting $\bm{X}_t^{\rm opt}$ into Lemma~\ref{lem:find_delta_B_A_with_X} and simplifying $-\bm{B}_t \bm{X}_t = \frac{1}{2} \widetilde{\bm{P}}_{B_t} \bm{L}_t^{-1/2} \bm{G}_t \bm{A}_t^\top (\bm{A}_t \bm{R}_t^{1/2} \bm{A}_t^\top)^{-1}$ (and the symmetric expression for $\bm{X}_t \bm{A}_t$) gives the closed form in Theorem~\ref{thm:find_delta_B_A}, where $\bm{G}_{\bm{B}_t} = \bm{G}_t \bm{A}_t^\top$ and $\bm{G}_{\bm{A}_t} = \bm{B}_t^\top \bm{G}_t$ are the chain-rule low-rank gradients. \hfill $\square$

\section{Algorithm}

In this section, we provide the completed algorithm of AdaPreLoRA.

\begin{algorithm*}[htbp]
\caption{AdaPreLoRA with SGD for fine-tuning.}\label{alg:RAdaGrad}
\begin{algorithmic}[1]
\State Initialize $\bm{B}_1 = \bm{0}_{m \times r}$, $\bm{A}_1 = \text{Kaiming uniform}_{r \times n}$, $\bm{l}_0 = \bm{0}_{m}, \bm{r}_0 = \bm{0}_{n}$, $\epsilon = 1e-6$. 
\For{$t=1, \cdots, T$}
    \State $\bm{l}_{t} = \beta_1 \bm{l}_{t-1} + (1-\beta_1) \sum_{j=1}^{n} (\bm{G}_t \odot \bm{G}_t)_{i,j}$, $\bm{L}_{t} = \diag(\bm{l}_{t} / \sqrt{\| \bm{l}_t\|_1} )$.
    \State $\bm{r}_{t} = \beta_2 \bm{r}_{t-1} + (1-\beta_2) \sum_{i=1}^m (\bm{G}_t \odot \bm{G}_t)_{i,j}$, $\bm{R}_{t} = \diag(\bm{r}_{t}/ \sqrt{\| \bm{r}_t \|_1} )$.
    \State $\bm{\Delta}_{\bm{B}_t} = \left[\bm{I} - \frac{1}{2} \bm{B}_t (\bm{B}_t^\top \bm{L}_t^{\frac{1}{2}} \bm{B}_t )^{-1} \bm{B}_t^\top \bm{L}_t^{\frac{1}{2}}  \right] \bm{L}_t^{-\frac{1}{2}} \bm{G}_{\bm{B}_t} (\bm{A}_t \bm{R}_t^{\frac{1}{2}} \bm{A}_t^\top )^{-1}  $.
    \State $\bm{\Delta}_{\bm{A}_t} = (\bm{B}_t^\top \bm{L}_t^{\frac{1}{2}} \bm{B}_t )^{-1} \bm{G}_{\bm{A}_t} \bm{R}_t^{-\frac{1}{2}} \left[ \bm{I}  - \frac{1}{2} \bm{R}_t^{\frac{1}{2}} \bm{A}_t^\top (\bm{A}_t \bm{R}_t^{\frac{1}{2}} \bm{A}_t^\top )^{-1} \bm{A}_t  \right]$.
    \State $\bm{B}_{t+1} = \bm{B}_t - \eta_t \bm{\Delta}_{\bm{B}_t}$, $\bm{A}_{t+1} = \bm{A}_t - \eta_t \bm{\Delta}_{\bm{A}_t}$.
\EndFor
\\
Note: factor gradients $\bm{G}_{\bm{B}_t} = \partial \mathcal{L} / \partial \bm{B}_t$ and $\bm{G}_{\bm{A}_t} = \partial \mathcal{L} / \partial \bm{A}_t$ are obtained from autograd. The weight-space gradient $\bm{G}_t$ can be obtained either via a backward hook on $\bm{W}$, or as the tangent-space surrogate $\bm{G}_{\bm{B}_t} \bm{A}_t + \bm{B}_t \bm{G}_{\bm{A}_t}$. Add $\epsilon \bm{I}$ to matrix $\bm{B}_t^\top \bm{L}_t^{\frac{1}{2}} \bm{B}_t$ if it is not invertible.
\end{algorithmic}
\end{algorithm*}

\begin{algorithm*}[htbp]
\caption{AdaPreLoRA with Momentum for fine-tuning.}\label{alg:RAdaGradM}
\begin{algorithmic}[1]
\State Initialize moment $\bm{M}_{0} = \bm{0}_{m \times n}$, $\bm{B}_1 = \bm{0}_{m \times r}$, $ \bm{A}_1 = \text{Kaiming uniform}_{r \times n}$; $\bm{l}_0 = \bm{0}_{m}, \bm{r}_0 = \bm{0}_{n}$, weight decay $\lambda$, coefficients $\beta_1 = \beta_2$, and $\beta_3$, $\epsilon = 1e-6$.
\For{$t=1, \cdots, T$}
    \State $\bm{l}_{t} = \beta_1 \bm{l}_{t-1} + (1-\beta_1) \sum_{j=1}^{n} (\bm{G}_t \odot \bm{G}_t)_{i,j}$, $\bm{L}_{t} = \diag(\bm{l}_{t} / \sqrt{\| \bm{l}_t\|_1} )$.
    \State $\bm{r}_{t} = \beta_2 \bm{r}_{t-1} + (1-\beta_2) \sum_{i=1}^m (\bm{G}_t \odot \bm{G}_t)_{i,j}$, $\bm{R}_{t} = \diag(\bm{r}_{t}/ \sqrt{\| \bm{r}_t \|_1} )$.
    \State $\bm{M}_{t} = \beta_3 \bm{M}_{t-1} + (1-\beta_3) \bm{G}_{t}$.
    \State $\bm{\Delta}_{\bm{B}_t} = \left[\bm{I} - \frac{1}{2} \bm{B}_t \left(\bm{B}_t^\top \bm{L}_t^{\frac{1}{2}} \bm{B}_t \right)^{-1} \bm{B}_t^\top  \bm{L}_t^{\frac{1}{2}}  \right] \bm{L}_t^{-\frac{1}{2}} \bm{M}_{t} \bm{A}_t^\top \left(\bm{A}_t \bm{R}_t^{\frac{1}{2}} \bm{A}_t^\top \right)^{-1}  $.
    \State $\bm{\Delta}_{\bm{A}_t} = \left(\bm{B}_t^\top \bm{L}_t^{\frac{1}{2}} \bm{B}_t \right)^{-1} \bm{B}_t^\top\bm{M}_{t} \bm{R}_t^{-\frac{1}{2}} \left[ \bm{I}  - \frac{1}{2} \bm{R}_t^{\frac{1}{2}} \bm{A}_t^\top \left(\bm{A}_t \bm{R}_t^{\frac{1}{2}} \bm{A}_t^\top \right)^{-1} \bm{A}_t  \right]$.
    \State $\bm{B}_{t+1} = (1-\lambda \eta_t)\bm{B}_t - \eta_t \frac{\sqrt{1-\beta_1^t}}{1-\beta_3^t} \bm{\Delta}_{\bm{B}_t}$, $\bm{A}_{t+1} = (1-\lambda \eta_t ) \bm{A}_t - \eta_t \frac{\sqrt{1-\beta_1^t}}{1-\beta_3^t} \bm{\Delta}_{\bm{A}_t}$.
\EndFor
\\
Note: factor gradients $\bm{G}_{\bm{B}_t} = \partial \mathcal{L} / \partial \bm{B}_t$ and $\bm{G}_{\bm{A}_t} = \partial \mathcal{L} / \partial \bm{A}_t$ are obtained from autograd. The weight-space gradient $\bm{G}_t$ can be obtained either via a backward hook on $\bm{W}$, or as the tangent-space surrogate $\bm{G}_{\bm{B}_t} \bm{A}_t + \bm{B}_t \bm{G}_{\bm{A}_t}$. Add $\epsilon \bm{I}$ to matrix $\bm{B}_t^\top \bm{L}_t^{\frac{1}{2}} \bm{B}_t$ if it is not invertible. The momentum buffer $\bm{M}_t$ is maintained on $\bm{G}_t$ for theoretical optimality, but in practice we recommend maintaining first-order moments directly on the factor gradients $\bm{G}_{\bm{B}_t}$ and $\bm{G}_{\bm{A}_t}$ (then plugging the moment-debiased values into the $\bm{\Delta}_{\bm{B}_t}$ / $\bm{\Delta}_{\bm{A}_t}$ formulas above), which avoids materializing an $m \times n$ moment buffer.
\end{algorithmic}
\end{algorithm*}

\section{Computational and Memory Complexity Analysis of SoLoRA}\label{sec:alg-anal}
The update rule of SoLoRA is given by
\begin{equation*}
\begin{split}
    \bm{\Delta}_{\bm{A}_t} & = ( \bm{B}_t^\top \bm{L}_t^{\frac{1}{2}} \bm{B}_t )^{-1} \underbrace{\bm{B}_t^\top \bm{G}_t}_{\bm{G}_{\bm{A}_t}}\bm{R}_t^{-\frac{1}{2}}  \left[ \bm{I} - \frac{1}{2} \bm{R}_t^{\frac{1}{2}} \bm{A}_t^{\top} ( \bm{A}_t \bm{R}_t^{\frac{1}{2}} \bm{A}_t^\top  )^{-1} \bm{A}_t \right], \\
    \bm{\Delta}_{\bm{B}_t} & = \left[ \bm{I} - \frac{1}{2}\bm{B}_t ( \bm{B}_t^\top \bm{L}_t^{\frac{1}{2}} \bm{B}_t  )^{-1} \bm{B}_t^\top \bm{L}_t^{\frac{1}{2}} \right] \bm{L}_t^{-\frac{1}{2}} \underbrace{\bm{G}_t \bm{A}_t^{\top}}_{\bm{G}_{\bm{B}_t}} ( \bm{A}_t \bm{R}_t^{\frac{1}{2}} \bm{A}_t^\top  )^{-1} .
\end{split}
\end{equation*}
We now analyze the computational complexity of computing the updates $\bm{\Delta}_{\bm{A}_t}$ and $\bm{\Delta}_{\bm{B}_t}$. For simplicity, we focus on $\bm{\Delta}_{\bm{A}_t}$, as the complexity for $\bm{\Delta}_{\bm{B}_t}$ is symmetric.

\begin{itemize}
    \item Compute gradient $\bm{G}_t$. The stochastic gradient $\bm{G}_t$ of $\bm{W}_t$ is obtained during the backpropagation process.
   \item Row and column sums for $\bm{l}_t$ and $\bm{r}_t$. Compute $\bm{l}_t$ and $\bm{r}_t$ by summing the square of the element of $\bm{G}_t$ along rows or columns, which is in the computation $\mathcal{O}(mn)$. $\bm{L}_t^{\frac{1}{2}}$ and $\bm{L}_t^{-\frac{1}{2}}$ can computed in $\mathcal{O}(m)$, $\bm{R}_t^{\frac{1}{2}}$ and $\bm{R}_t^{-\frac{1}{2}}$ can computed in $\mathcal{O}(n)$.
\item Compute $(\bm{B}_t^\top \bm{L}_t^{\frac{1}{2}} \bm{B}_t)^{-1} \bm{G}_{\bm{A}_t} \bm{R}_t^{-\frac{1}{2}}$.  First to compute the inverse matrices $(\bm{B}_t^\top \bm{L}^{\frac{1}{2}}_t \bm{B}_t)^{-1}$ in $\mathcal{O}((m+r)r^2) $. Then multiply the inverse $(\bm{B}_t^\top \bm{L}^{\frac{1}{2}}_t \bm{B}_t)^{-1}$ by $\bm{G}_{\bm{A}_t}$ in $\mathcal{O}(nr^2)$, and multiply the diagonal matrix $\bm{R}_t^{-\frac{1}{2}}$ in $\mathcal{O}(nr)$.
\item Compute $(\bm{B}_t^\top \bm{L}_t^{\frac{1}{2}} \bm{B}_t )^{-1} \bm{G}_{\bm{A}_t}\bm{A}_t^\top (\bm{A}_t \bm{R}_t^{\frac{1}{2}} \bm{A}_t^\top )^{-1} \bm{A}_t$. First to compute the inverse matrices $(\bm{A}_t \bm{R}^{\frac{1}{2}}_t \bm{A}_t^\top)^{-1}$ in $\mathcal{O}((n+r)r^2) $. Use the result from the last step, multiply $(\bm{B}_t^\top \bm{L}_t^{\frac{1}{2}} \bm{B}_t )^{-1} \bm{G}_{\bm{A}_t}$ by $\bm{A}_t^\top$ in computation $\mathcal{O}(nr^2)$, then multiply $(\bm{A}_t \bm{R}_t^{\frac{1}{2}} \bm{A}_t^\top)^{-1}$ in computation $\mathcal{O}(r^3)$, and multiply $\bm{A}_t$ in $\mathcal{O}(nr^2)$.
\end{itemize}
The computation complexity of $\bm{\Delta}_{\bm{A}_t}$ is $\mathcal{O}(mn+(m+n)r^2+r^3)$. The computation of $\bm{\Delta}_{\bm{B}_t}$ follows a similar structure, with symmetric terms. Its complexity is also $\mathcal{O}(mn+(m+n)r^2+r^3)$. Then we have
\begin{itemize}
\item  \textbf{Per Iteration Computational Complexity.} Combining the computations of $\bm{\Delta}_{\bm{A}_t}$ and $\bm{\Delta}_{\bm{B}_t}$, the total computation complexity per iteration is $\mathcal{O}(mn+(m+n)r^2+r^3)$.\\
\item \textbf{Memory Complexity.}  The algorithm requires storing the vectors $\bm{l}_t$ and $\bm{r}_t$ in each iteration, hence  the memory complexity is $\mathcal{O}(m+n)$.
\end{itemize}

\section{Supplementary Experiments of GPT-2 Fine-tuning}\label{app:GPT2}

\subsection{Parameter Settings}\label{sec:gpt2-setting}
To ensure the reproducibility of the experiments described in Section \ref{sec:experiment} and to facilitate verification and comparison by others, we provide the complete details of the experimental parameter settings. Tables \ref{tab:GPT2-parameter} and \ref{tab:GPT2-learning_rate} list the parameters used during the fine-tuning of GPT-2 models and the learning rates corresponding to different optimizers, respectively.  Specifically, we conduct experiments with GPT-2 models of various sizes. ``Rank 4 (M)'' represents a medium-sized model using LoRA with rank 4, while ``Rank 4'', ``Rank 16'', and ``Rank 64'' represent small models using LoRA with ranks 4, 16, and 64, respectively.  
To ensure the fairness of the experimental setup, we follow the parameter settings in LoRA~\citep{hu2022lora} and Riemannian Preconditioned LoRA~\citep{zhang2024RiemannianPreconditioned}. However, considering the sensitivity of different optimizers to learning rates, we use a grid search strategy to independently tune the optimal learning rate for each optimizer. This ensures that each optimizer operates under its best-performing configuration, providing more objective and reliable experimental results.

\begin{table}[htbp]
\centering
\caption{Training and Inference Configuration for GPT-2 Fine-tuning.}\label{tab:GPT2-parameter}
\begin{tabular}{lc lc lc}
\toprule
\multicolumn{2}{c}{\textbf{Training}} & \multicolumn{2}{c}{\textbf{LoRA $\alpha$}} & \multicolumn{2}{c}{\textbf{Inference}} \\
\midrule
Parameter & Value & Parameter & Value & Parameter & Value \\
\midrule
Dropout Probability & 0.1 &  &  &  &  \\
Batch Size & 8 &  &  &  &  \\
Number of Epochs & 5 & $\alpha$ (for Rank 4) & 32  & Beam Size & 10  \\
Warm-up Steps & 500 & $\alpha$ (for Rank 16) & 32 & Length Penalty & 0.8 \\
Learning Rate Scheduler & Linear & $\alpha$ (for Rank 64) & 128  & No Repeat Ngram Size & 4 \\
Label Smoothing & 0.1 & & & & \\
Weight Decay & 0.01 & & & & \\
\bottomrule
\end{tabular}
\end{table}

\begin{table}[htbp]
\centering
\caption{Core Optimizer Parameters for GPT-2 fine-tuning.}\label{tab:GPT2-learning_rate}
\begin{tabular}{ccccccc}
\toprule
\multirow{2}{*}{Methods} & \multicolumn{4}{c}{Learning Rate ($\times 10^{-3}$)} & \multirow{2}{*}{$\beta_3$} & \multirow{2}{*}{$\beta_1=\beta_2$} \\ 
&  Rank 4 & Rank 4 (M) & Rank 16 & Rank 64 \\
\midrule
SGD & 90 & 90 & 200 & 90 & / &  / \\
Scaled GD & 20 & 20 & 40 & 10 & / & / \\
LoRA-Pro SGD & 40 & 40 & 40 & 40 & / & / \\
AdaPreLoRA SGD & 0.05 & 0.05 & 0.5 & 0.8 & / & 0.98 \\
AdamW & 0.2 & 0.2 & 0.2 & 0.2 & 0.9 & 0.999 \\
Scaled AdamW & 0.8 & 0.8 & 2 & 4 & 0.7 & 0.8 \\
LoRA-Pro AdamW & 0.1 & 0.1 & 0.2 & 0.4 &  0.9 & 0.999 \\
AdaPreLoRA AdamW & 0.5 & 0.1 & 0.8 & 0.3 & 0.9 & 0.98 \\
\bottomrule
\end{tabular}
\end{table}

\subsection{Cross-rank ablation on GPT-2 small}\label{sec:gpt2-rank-ablation}
To complement the $r=4$ comparisons in the main text (Table~\ref{tab:score_GPT2}), we report cross-rank scores at $r \in \{16, 64\}$ on GPT-2 small in Table~\ref{tab:score_GPT2_rank_ablation}. Hyperparameters follow Table~\ref{tab:GPT2-parameter} and Table~\ref{tab:GPT2-learning_rate}. Across both ranks and both optimizer families, AdaPreLoRA achieves the best or tied-best score on essentially every metric; gains over Scaled GD/AdamW and LoRA-Pro narrow at higher rank but remain positive, consistent with the main-text observation that the gradient-statistics-aware projection retains an advantage even when the LoRA factorization is less rank-constrained.

\begin{table}[htbp]
\caption{GPT-2 small fine-tuned on E2E at $r \in \{16, 64\}$ (cross-rank ablation). Bold/underline = best/second-best per metric per (rank, optimizer family).}\label{tab:score_GPT2_rank_ablation}
\centering
\begin{tabular}{cccccccc}
\toprule
\multirow{2}{*}{$r$} & \multirow{2}{*}{Method} & \multicolumn{5}{c}{E2E} \\
\cmidrule(lr){3-7}
& & BLEU & NIST & MET & ROUGE-L & CIDEr \\
\midrule
\multirow{8}{*}{16} & SGD & 65.4 & 8.07 & 40.7 & 67.0 & 2.07 \\
& Scaled GD & \underline{68.8} & \underline{8.75} & 45.0 & 69.2 & \underline{2.39} \\
& LoRA-Pro SGD & 68.3 & 8.67 & \underline{45.1} & \underline{69.3} & 2.37 \\
& \textbf{AdaPreLoRA SGD (ours)} & \textbf{70.0} & \textbf{8.82} & \textbf{46.6} & \textbf{71.6} & \textbf{2.53} \\
\cmidrule{2-7}
& AdamW & 69.5 & 8.77 & 46.4 & 71.2 & 2.48 \\
& Scaled AdamW & \underline{69.8} & \underline{8.79} & 46.5 & \underline{71.7} & \underline{2.51} \\
& LoRA-Pro AdamW & 69.7 & 8.73 & \textbf{46.8} & \underline{71.7} & \underline{2.51} \\
& \textbf{AdaPreLoRA AdamW (ours)} & \textbf{70.2} & \textbf{8.85} & \underline{46.6} & \textbf{71.9} & \textbf{2.52} \\
\midrule
\multirow{8}{*}{64} & SGD & 64.7 & 8.08 & 40.8 & 66.7 & 2.04 \\
& Scaled GD & 68.5 & 8.68 & 45.0 & 69.4 & \underline{2.38} \\
& LoRA-Pro SGD & \underline{68.6} & \underline{8.71} & \underline{45.4} & \underline{69.7} & \underline{2.38} \\
& \textbf{AdaPreLoRA SGD (ours)} & \textbf{70.1} & \textbf{8.85} & \textbf{46.7} & \textbf{71.8} & \textbf{2.53} \\
\cmidrule{2-7}
& AdamW & 69.6 & 8.76 & \underline{46.7} & \underline{71.5} & 2.50 \\
& Scaled AdamW & \underline{70.0} & \underline{8.83} & 46.4 & \underline{71.5} & 2.50 \\
& LoRA-Pro AdamW & \underline{70.0} & 8.82 & 46.6 & \underline{71.5} & \underline{2.51} \\
& \textbf{AdaPreLoRA AdamW (ours)} & \textbf{70.2} & \textbf{8.84} & \textbf{46.8} & \textbf{72.1} & \textbf{2.52} \\
\bottomrule
\end{tabular}
\end{table}


\subsection{Training Efficiency Comparison}
To validate the training and inference efficiency of AdaPreLoRA, we report in Table~\ref{tab:gpt2-time} the total training and inference time required for all algorithms on the GPT-2 small model (rank 64) fine-tuned on E2E.

\begin{table}[htbp]
\centering
\caption{Training and inference time (hours) of GPT-2 small (rank=64) fine-tuned with different optimizers on E2E.}
\label{tab:gpt2-time}
\setlength{\tabcolsep}{4pt}
\begin{tabular}{lcccc}
\toprule
Method & SGD & Scaled GD & LoRA-Pro SGD & AdaPreLoRA SGD \\
\midrule
Training  & 1.79 & 1.92 & 2.78 & 2.04 \\
Inference & 1.86 & 1.87 & 1.58 & 1.89 \\
\midrule
Method & AdamW & Scaled AdamW & LoRA-Pro AdamW & AdaPreLoRA AdamW \\
\midrule
Training  & 1.79 & 1.94 & 2.93 & 2.04 \\
Inference & 1.87 & 1.88 & 1.89 & 1.86 \\
\bottomrule
\end{tabular}
\end{table}

\section{Supplementary Experiments of Diffusion Model Fine-tuning}\label{sec:diffusion}

We evaluate AdaPreLoRA on diffusion-model personalization with the Mix-of-Show framework~\citep{gu2023mix-of-show}, which integrates Embedding Decomposed LoRA (EDLoRA) into the text encoder and U-Net of a Stable Diffusion backbone. Following~\citep{zhang2024RiemannianPreconditioned, gu2023mix-of-show}, we disable embedding-vector tuning and fine-tune only the LoRA components of the text encoder and U-Net submodules. The CLIP score~\citep{hessel2021clipscore} (ViT-B/32 variant of the CLIP model~\citep{radford2021clip-model}) measures alignment between generated images and text prompts (range $[0, 100]$, higher better); FID~\citep{heusel2017fid-distance} measures the distributional similarity between generated and reference images (lower better). Aggregate CLIP/FID scores at LoRA scaling factors $\{0.7, 1.0\}$ are in the main text (Table~\ref{tab:fid_clip}); this section reports per-prompt qualitative samples at the same two scaling factors, using the per-optimizer learning rates listed in Table~\ref{tab:diffusion-learning_rate}. At $s=1.0$, Figures~\ref{fig:potter_adamw_fuji_1.0} and~\ref{fig:hermione_adamw_sea_1.0} compare AdamW-based optimizers on Harry Potter and Hermione Granger prompts respectively, and Figures~\ref{fig:potter_sgd_lake_1.0} and~\ref{fig:hermione_sgd_shirt_1.0} compare SGD-based optimizers on the same prompts. At $s=0.7$, Figures~\ref{fig:potter_adamw_hat_0.7} and~\ref{fig:hermione_adamw_sea_0.7} report the corresponding AdamW comparisons. Across both scaling factors, AdaPreLoRA preserves character identity and follows the prompt scene layout while remaining visually consistent across prompts.

\begin{table}[t]
\centering
\caption{Optimizer Parameters for fine-tuning the Mix-of-Show Model.}\label{tab:diffusion-learning_rate}
\begin{tabular}{ccccccc}
\toprule
\multirow{2}{*}{Methods} & \multicolumn{2}{c}{Learning Rate } & \multirow{2}{*}{$\beta_3$} & \multirow{2}{*}{$\beta_1=\beta_2$} \\ 
&  Text-Encoder & U-Net  \\
\midrule
SGD & 1e-1 & 1e-1   & / &  / \\
Scaled GD & 1e-1 & 1e-1   & / & / \\
LoRA-Pro SGD & 1e-1 & 1e-1   & / & / \\
AdaPreLoRA SGD & 1e-5 & 1e-5   & / & 0.98 \\
AdamW & 1e-5 & 1e-4 & 0.9   & 0.999 \\
Scaled AdamW & 1e-5 & 1e-4   & 0.7 & 0.8 \\
LoRA-Pro AdamW & 1e-5 & 1e-5   &  0.9 & 0.999 \\
AdaPreLoRA AdamW & 1e-5 & 1e-5   & 0.9 & 0.98 \\
\bottomrule
\end{tabular}
\end{table}




\begin{figure}[h]
\centering
AdamW with s=1.0 \par
\includegraphics[width=0.12\textwidth]{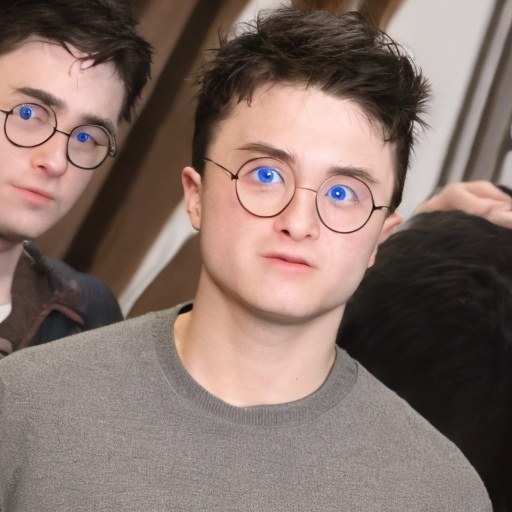}
\includegraphics[width=0.12\textwidth]{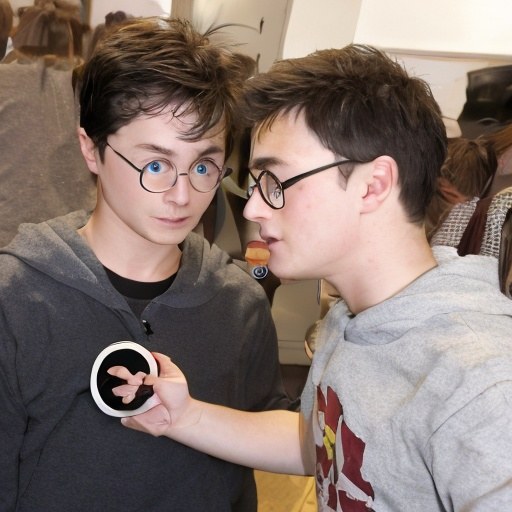}
\includegraphics[width=0.12\textwidth]{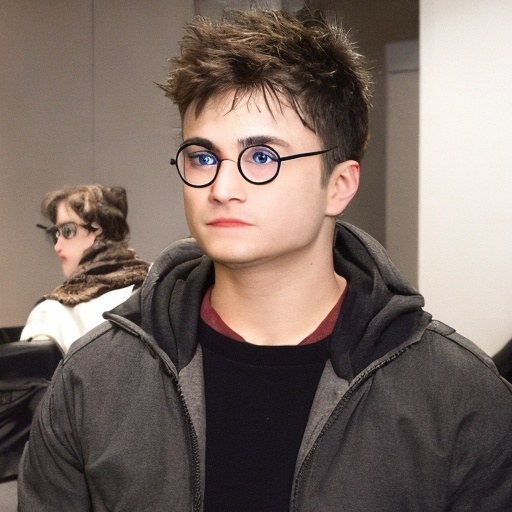}
\includegraphics[width=0.12\textwidth]{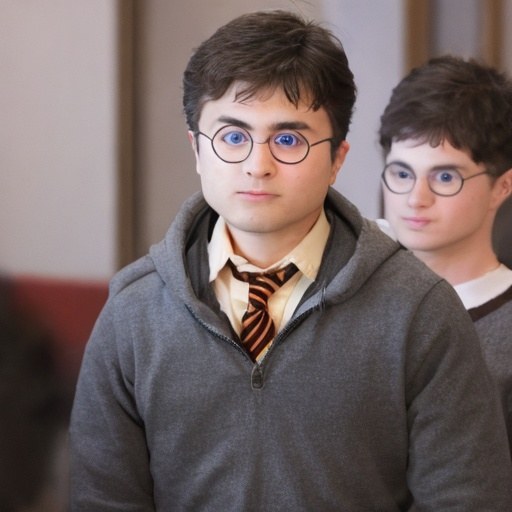}
\includegraphics[width=0.12\textwidth]{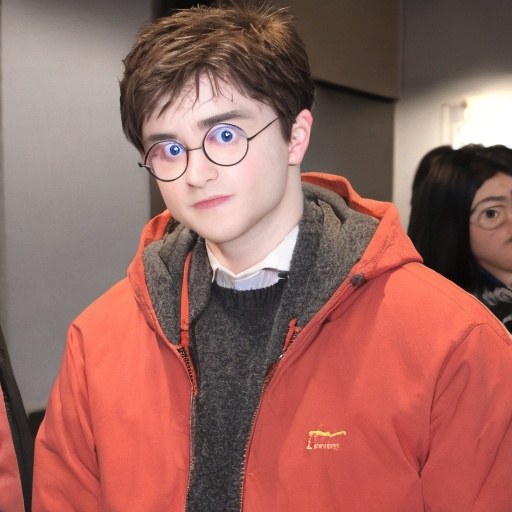} \\
Scaled AdamW with s=1.0 \par
\includegraphics[width=0.12\textwidth]{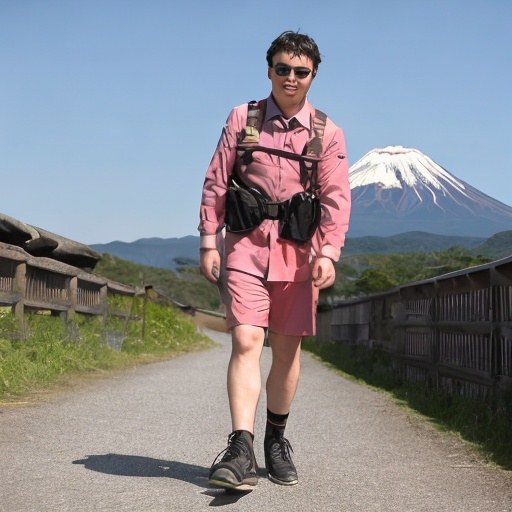}
\includegraphics[width=0.12\textwidth]{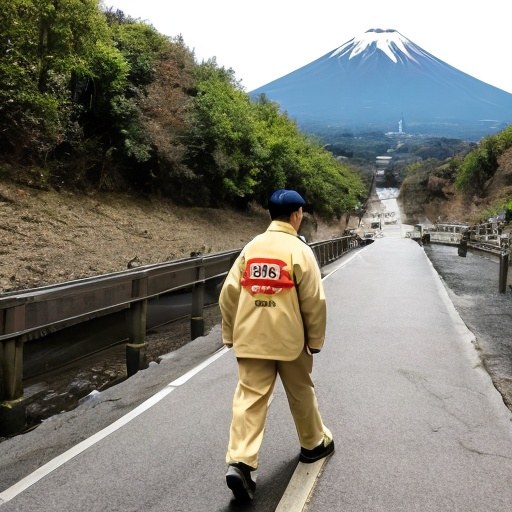}
\includegraphics[width=0.12\textwidth]{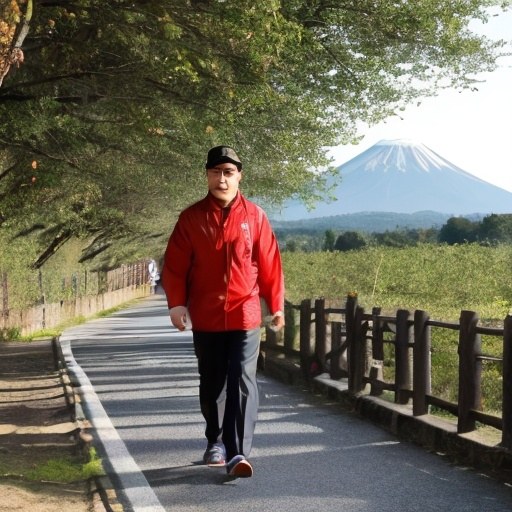}
\includegraphics[width=0.12\textwidth]{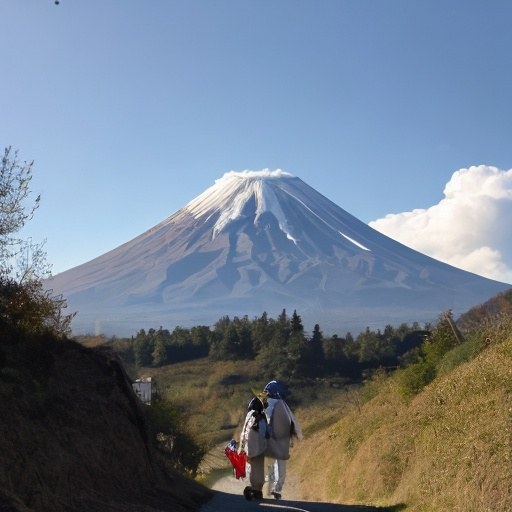}
\includegraphics[width=0.12\textwidth]{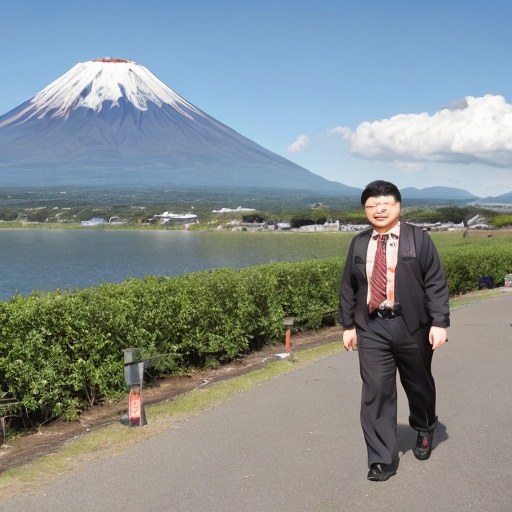} \\
LoRA-Pro AdamW with s=1.0 \par
\includegraphics[width=0.12\textwidth]{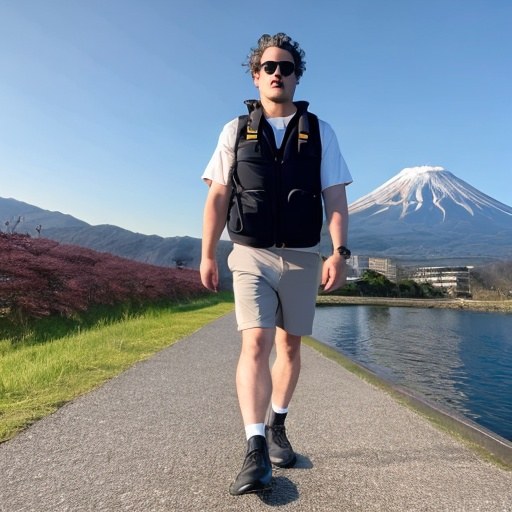}
\includegraphics[width=0.12\textwidth]{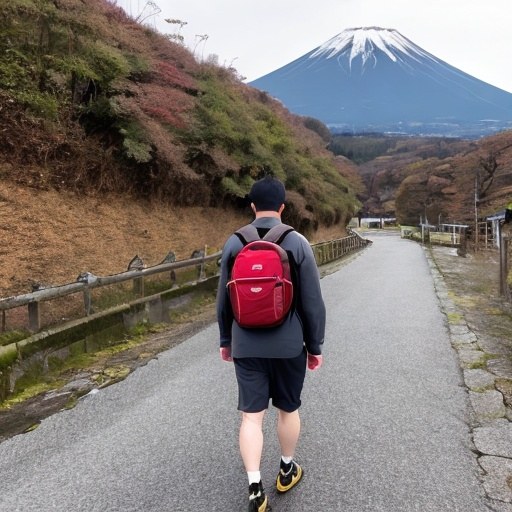}
\includegraphics[width=0.12\textwidth]{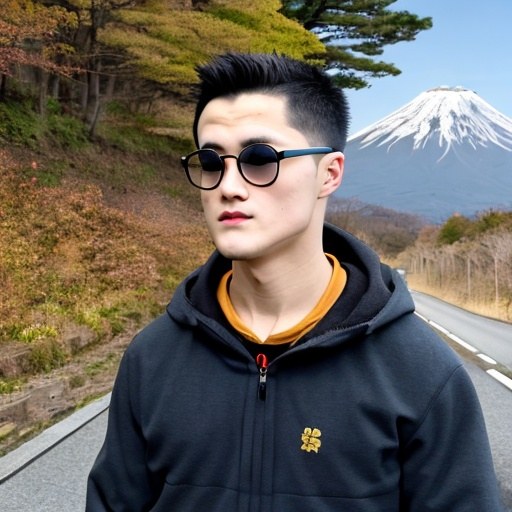}
\includegraphics[width=0.12\textwidth]{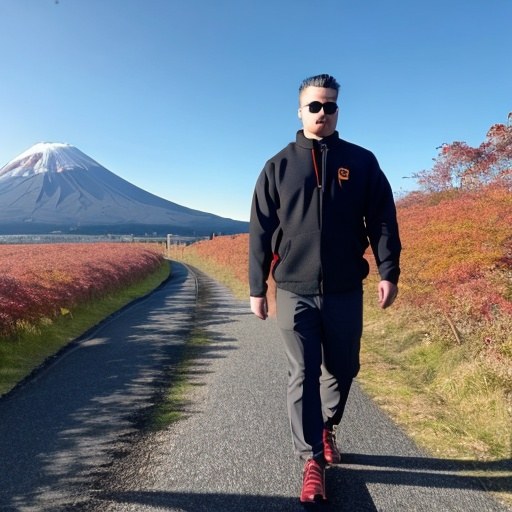}
\includegraphics[width=0.12\textwidth]{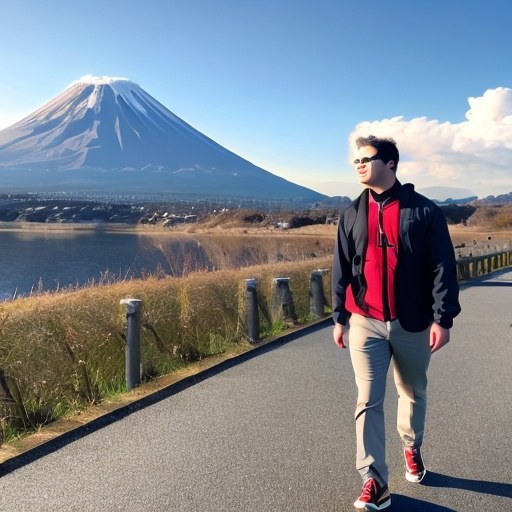} \\
\textbf{AdaPreLoRA AdamW with s=1.0 (ours)} \par
\includegraphics[width=0.12\textwidth]{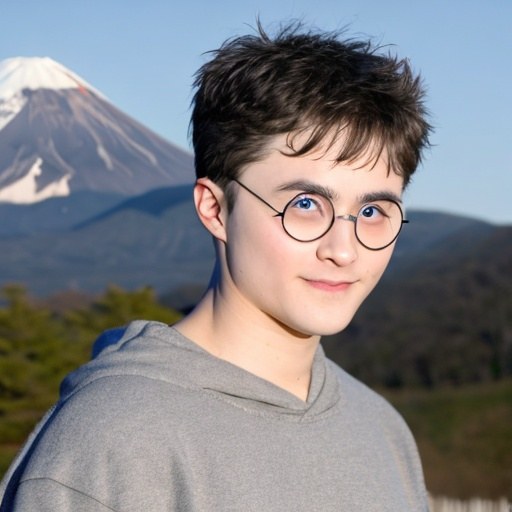}
\includegraphics[width=0.12\textwidth]{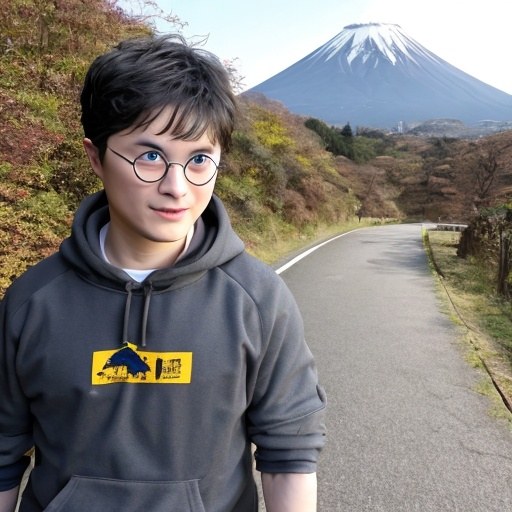}
\includegraphics[width=0.12\textwidth]{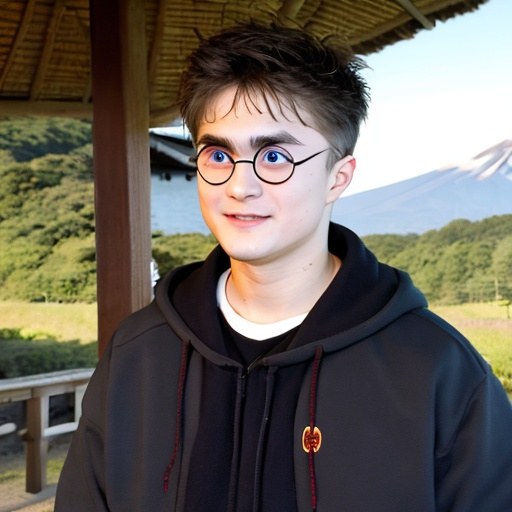}
\includegraphics[width=0.12\textwidth]{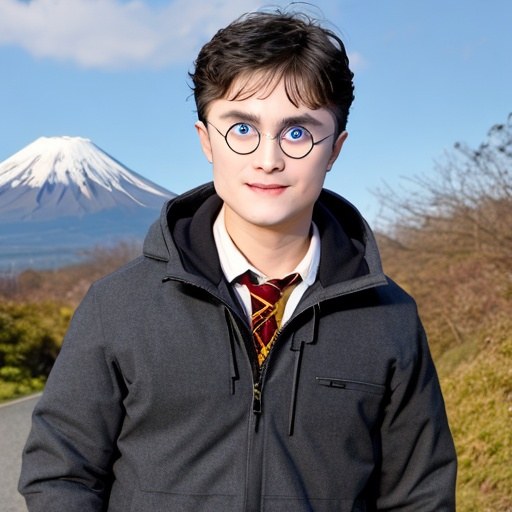}
\includegraphics[width=0.12\textwidth]{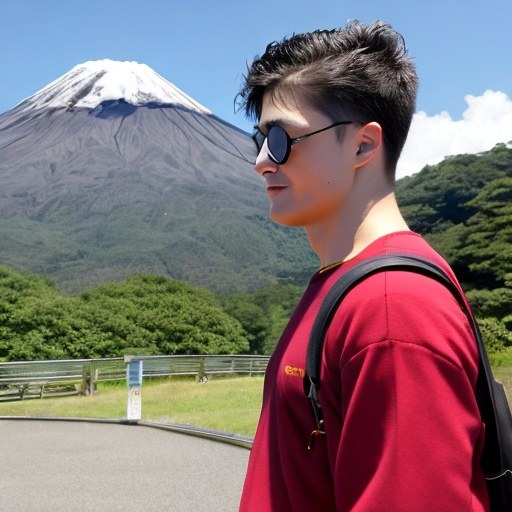}
\caption{Generated results based on the prompt ``Harry Potter is walking near Mount Fuji'' when fine-tuned using AdamW-based optimizers. All optimizers employed a LoRA scaling factor of 1.0, with the best learning rate. The results indicate that the output of the model trained with our optimizer incorporates the character ``Harry Potter'', the action``walking'', and the scene ``Mount Fuji'', yielding superior image quality compared to alternative approaches.}
\label{fig:potter_adamw_fuji_1.0}
\end{figure}

\begin{figure}[htbp] 
\centering
AdamW with s=1.0 \par
\includegraphics[width=0.12\textwidth]{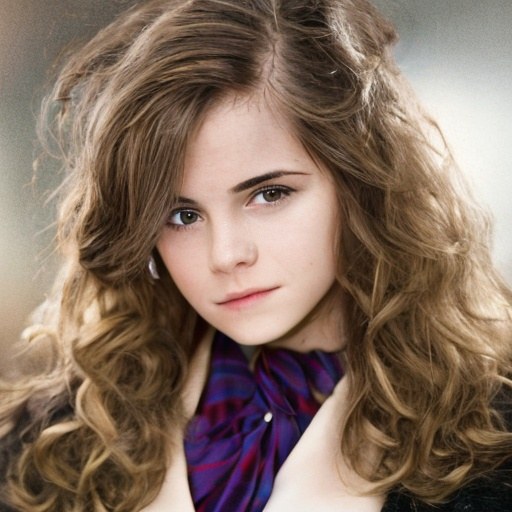}
\includegraphics[width=0.12\textwidth]{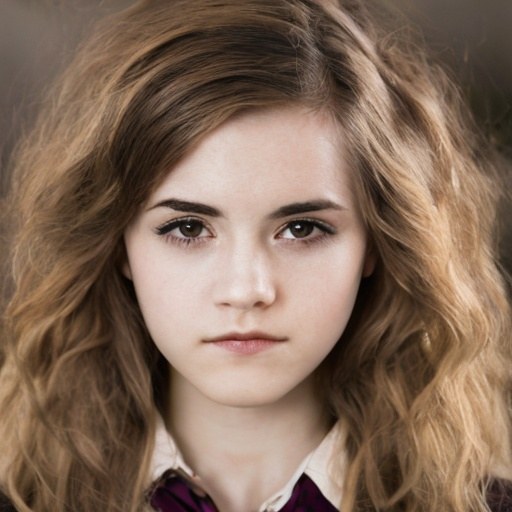}
\includegraphics[width=0.12\textwidth]{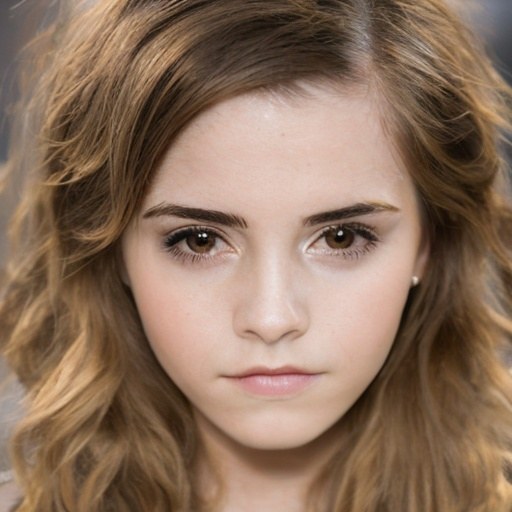}
\includegraphics[width=0.12\textwidth]{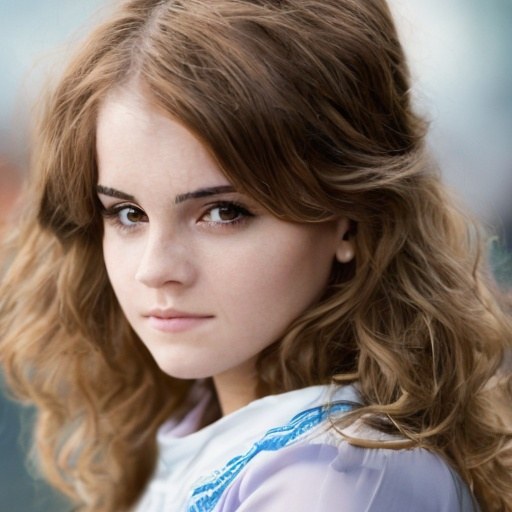}
\includegraphics[width=0.12\textwidth]{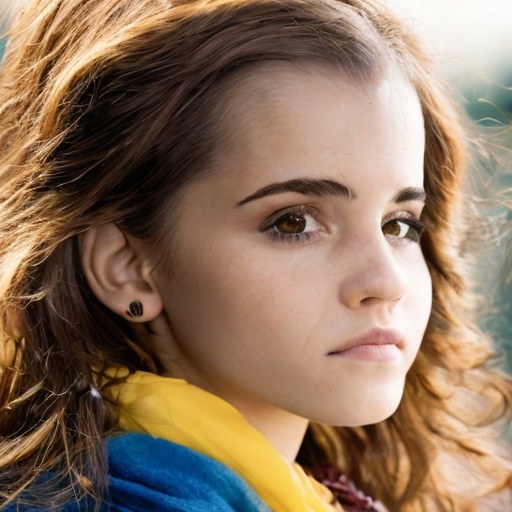} \\
Scaled AdamW with s=1.0 \par
\includegraphics[width=0.12\textwidth]{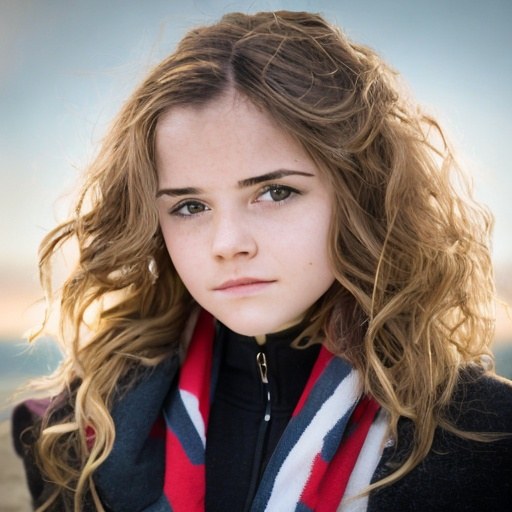}
\includegraphics[width=0.12\textwidth]{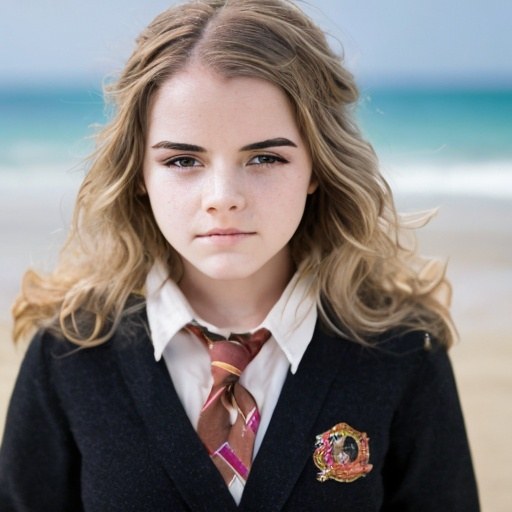}
\includegraphics[width=0.12\textwidth]{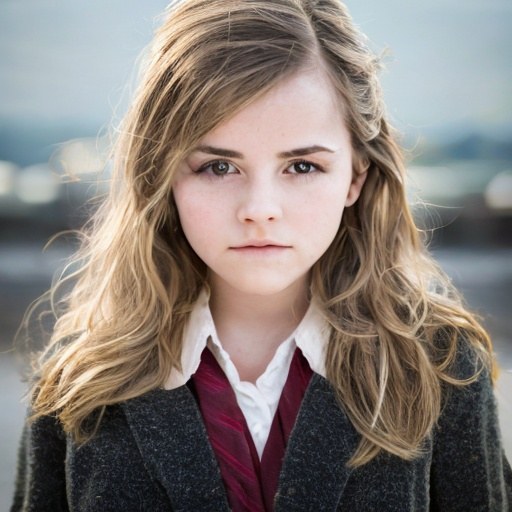}
\includegraphics[width=0.12\textwidth]{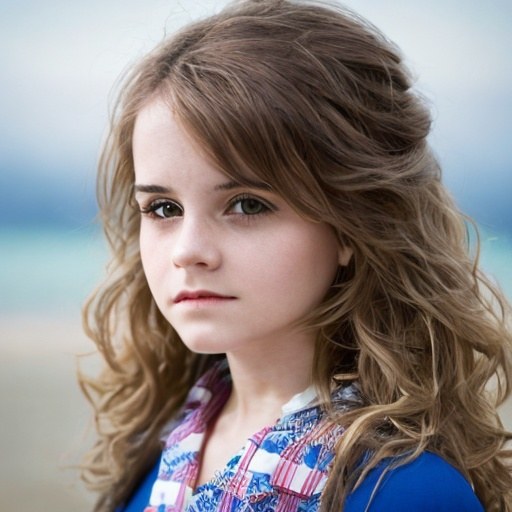}
\includegraphics[width=0.12\textwidth]{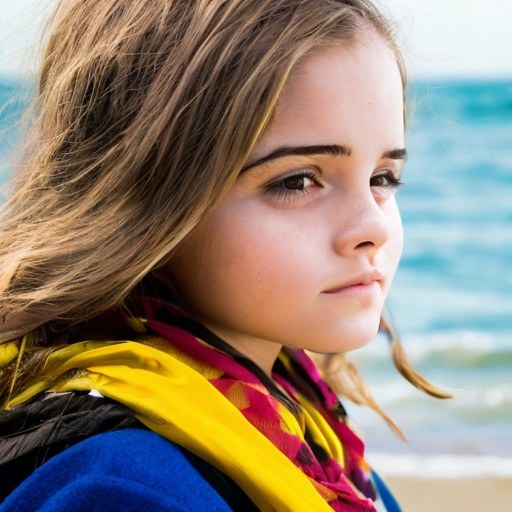} \\
LoRA-Pro AdamW with s=1.0 \par
\includegraphics[width=0.12\textwidth]{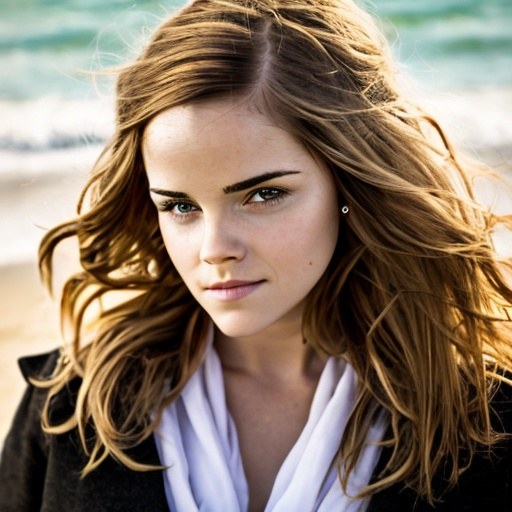}
\includegraphics[width=0.12\textwidth]{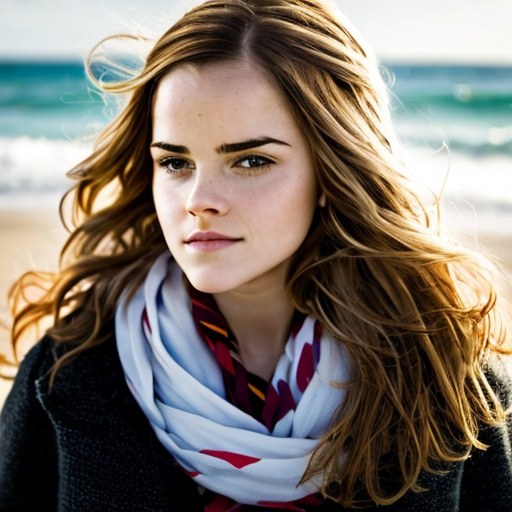}
\includegraphics[width=0.12\textwidth]{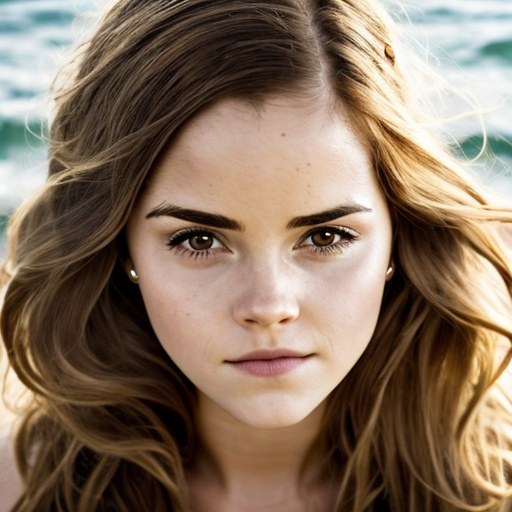}
\includegraphics[width=0.12\textwidth]{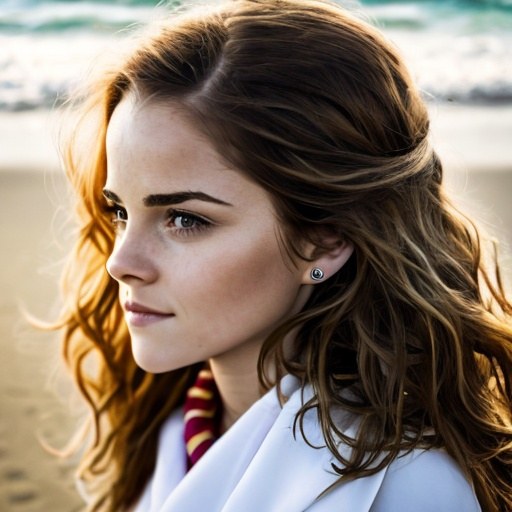}
\includegraphics[width=0.12\textwidth]{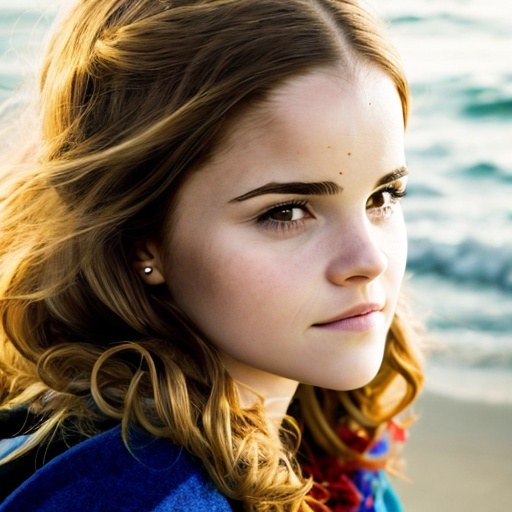} \\
\textbf{AdaPreLoRA AdamW with s=1.0 (ours)} \par
\includegraphics[width=0.12\textwidth]{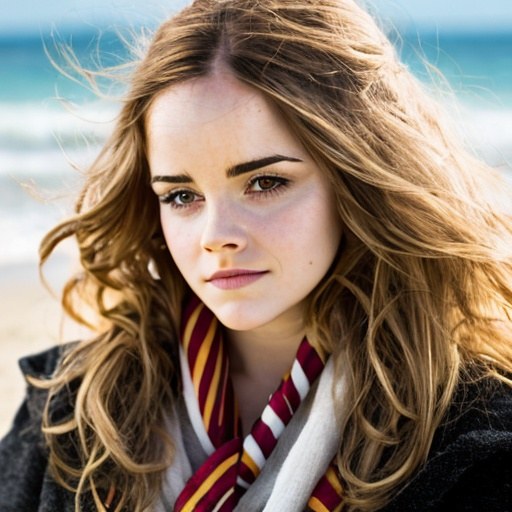}
\includegraphics[width=0.12\textwidth]{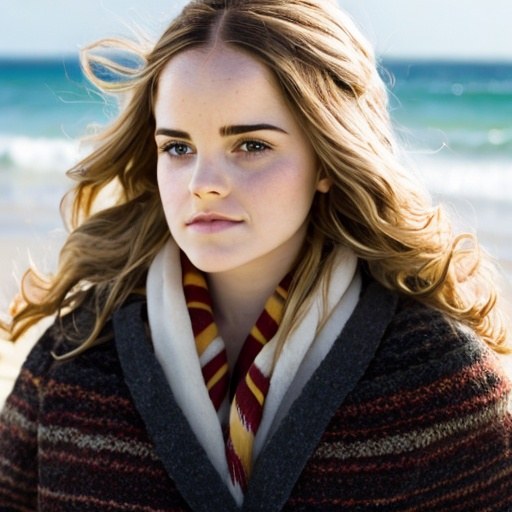}
\includegraphics[width=0.12\textwidth]{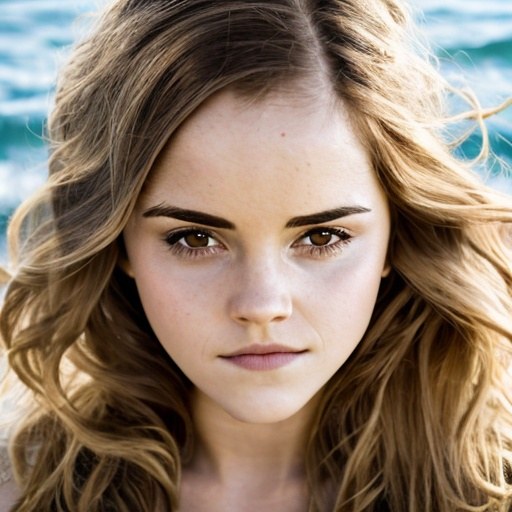}
\includegraphics[width=0.12\textwidth]{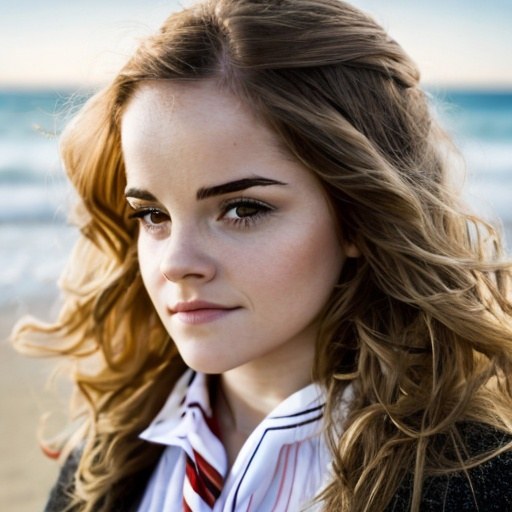}
\includegraphics[width=0.12\textwidth]{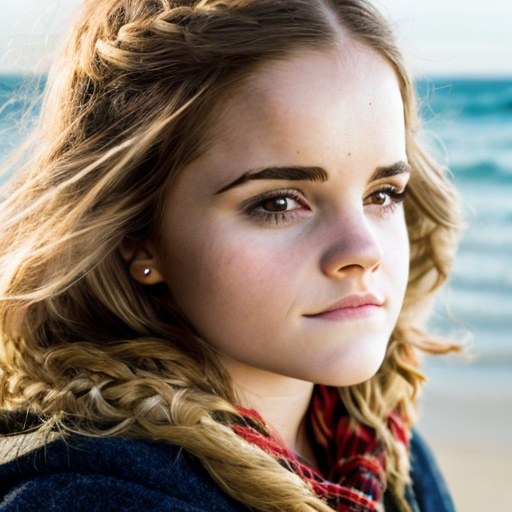}
\caption{Generation results from the prompt ``A photo of Hermione Granger on the beach, small waves, detailed symmetric face, beautiful composition'' using AdamW-based optimizers. All the optimizers apply LoRA scaling factor as 1.0, with the best learning rate. Results demonstrate that the model trained with our optimizer generates higher-quality images than others, especially the face of Hermione Granger and the scene.}\label{fig:hermione_adamw_sea_1.0}
\end{figure}

\begin{figure}[t]
\centering
SGD with s=1.0\par
\includegraphics[width=0.12\textwidth]{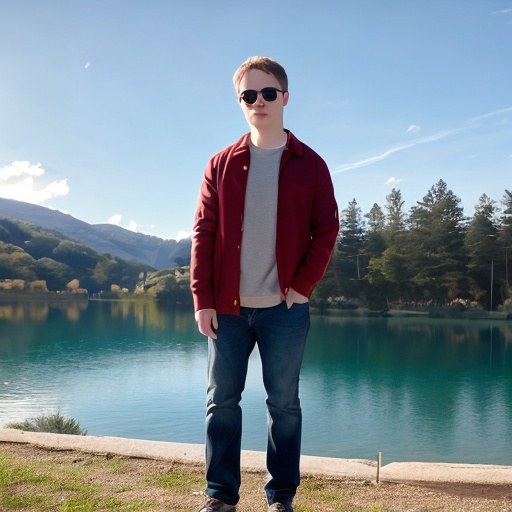}
\includegraphics[width=0.12\textwidth]{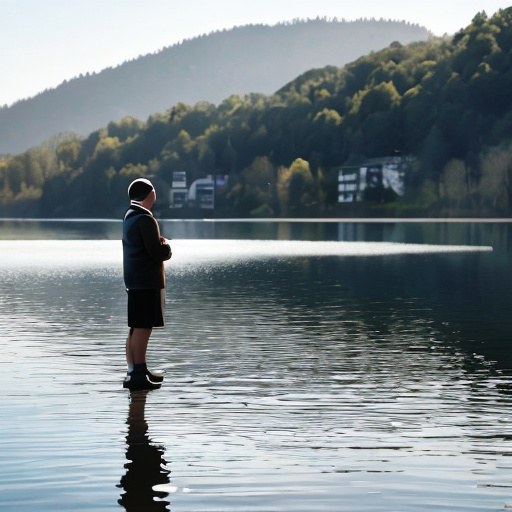}
\includegraphics[width=0.12\textwidth]{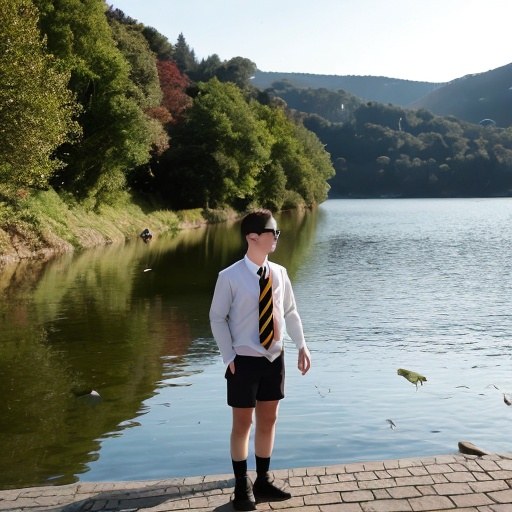}
\includegraphics[width=0.12\textwidth]{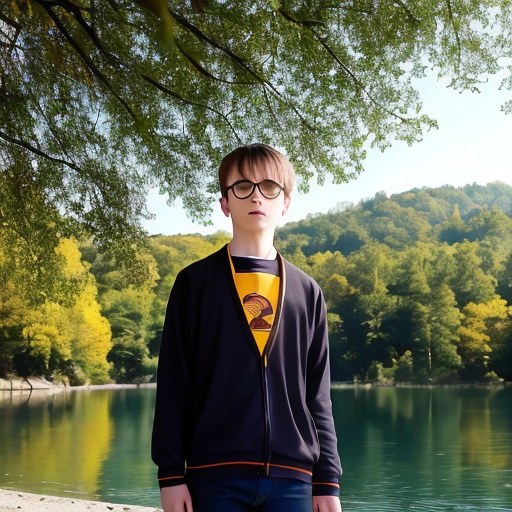}
\includegraphics[width=0.12\textwidth]{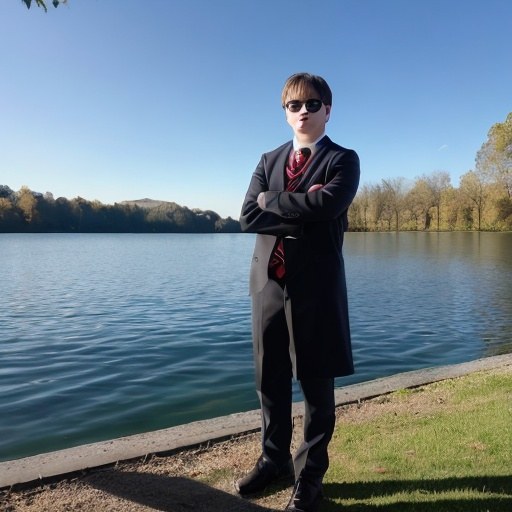} \\
Scaled GD with s=1.0 \par
\includegraphics[width=0.12\textwidth]{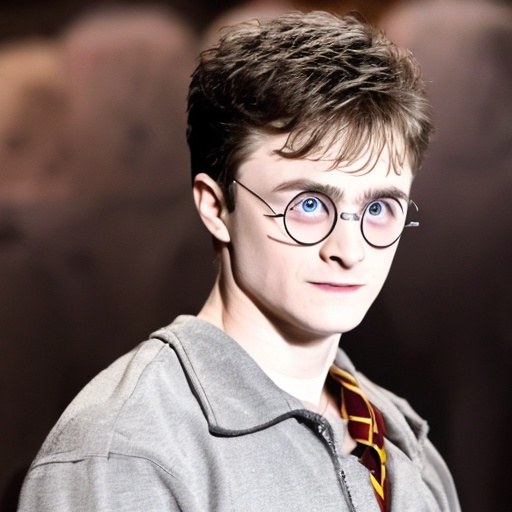}
\includegraphics[width=0.12\textwidth]{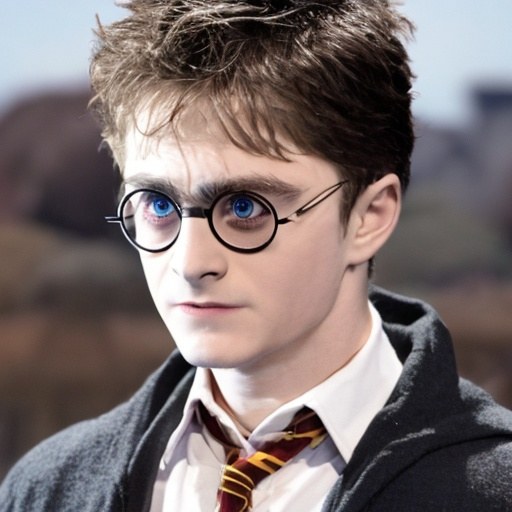}
\includegraphics[width=0.12\textwidth]{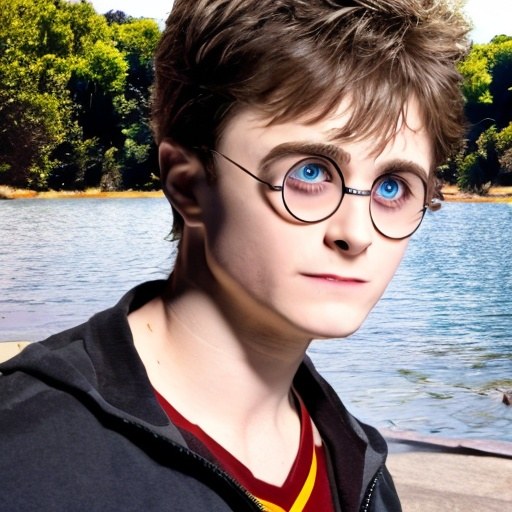}
\includegraphics[width=0.12\textwidth]{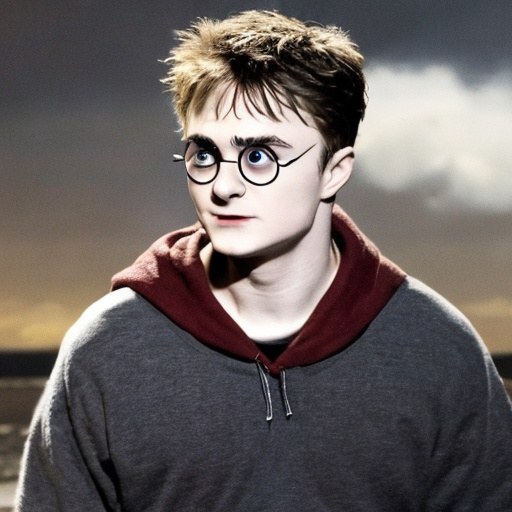}
\includegraphics[width=0.12\textwidth]{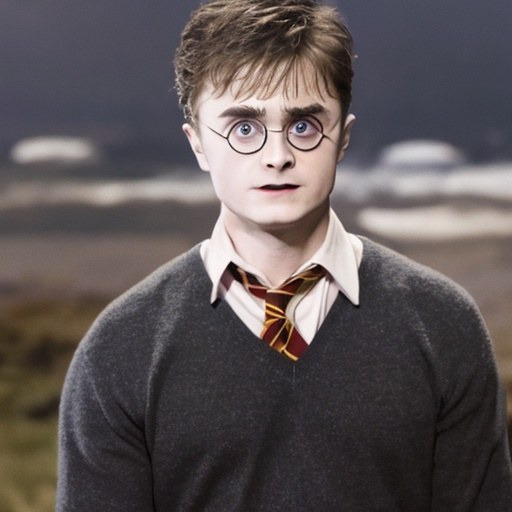} \\
LoRA-Pro SGD with s=1.0\par
\includegraphics[width=0.12\textwidth]{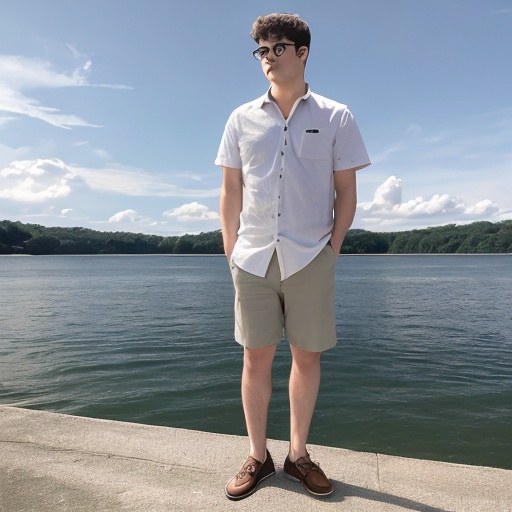}
\includegraphics[width=0.12\textwidth]{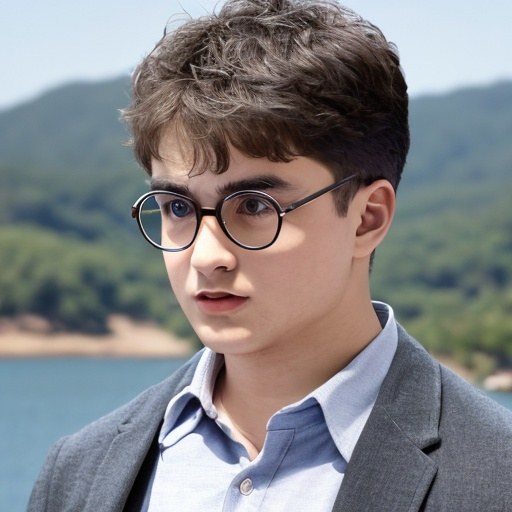}
\includegraphics[width=0.12\textwidth]{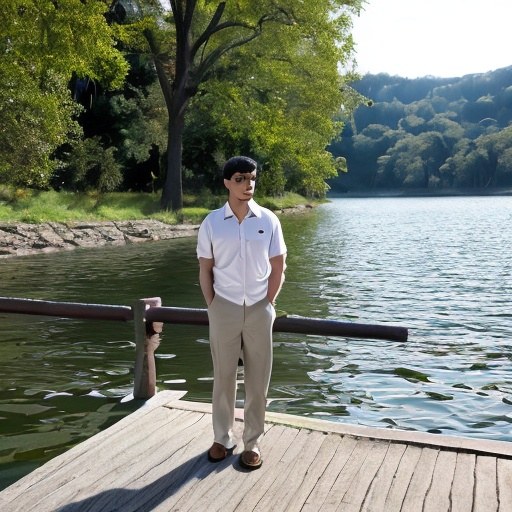}
\includegraphics[width=0.12\textwidth]{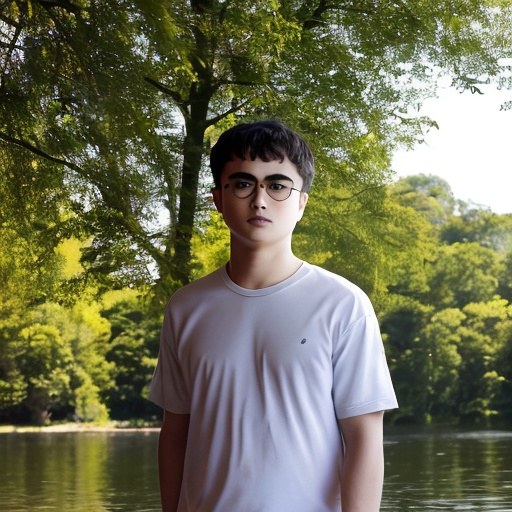}
\includegraphics[width=0.12\textwidth]{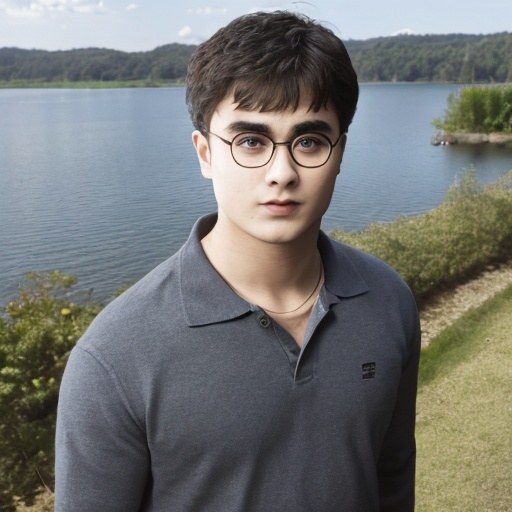} \\
\textbf{AdaPreLoRA SGD with s=1.0 (ours)} \par
\includegraphics[width=0.12\textwidth]{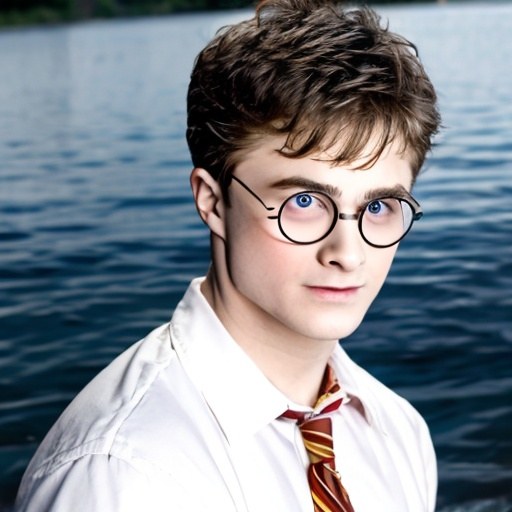}
\includegraphics[width=0.12\textwidth]{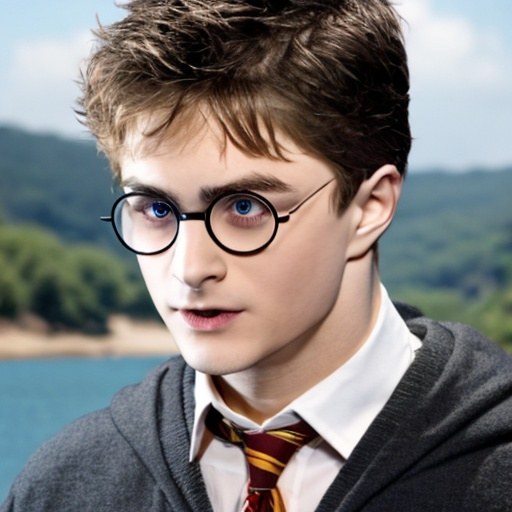}
\includegraphics[width=0.12\textwidth]{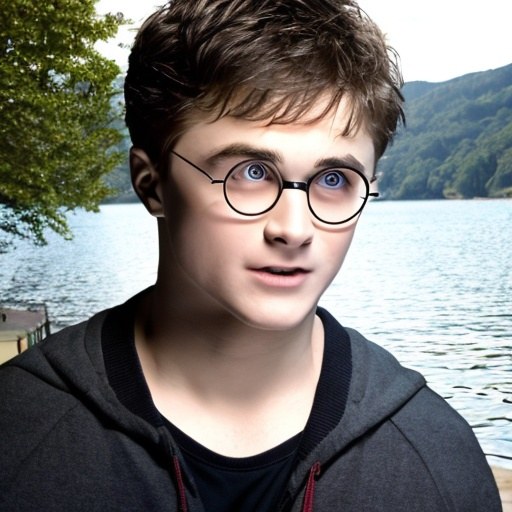}
\includegraphics[width=0.12\textwidth]{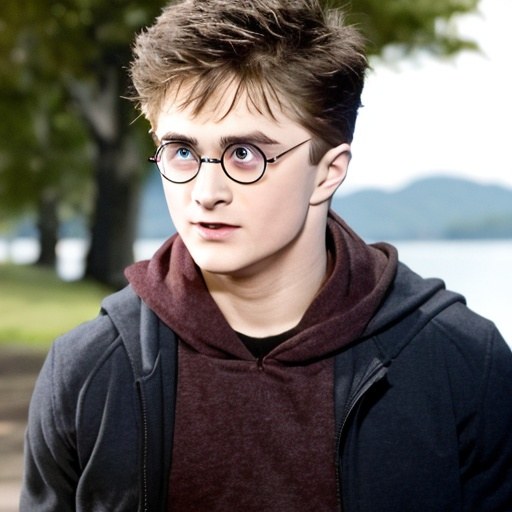}
\includegraphics[width=0.12\textwidth]{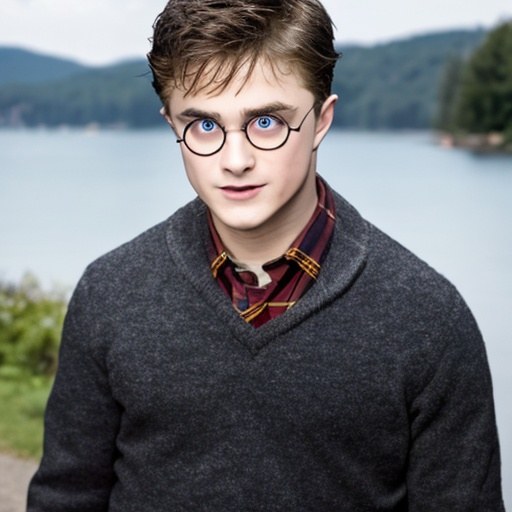}
\caption{Generated results based on the prompt ``Harry Potter standing near the lake'' when fine-tuned using SGD-based optimizers. All optimizers employed a LoRA scaling scaling factor of 1.0, with the best learning rate. Results demonstrate that the output images of the model trained with our optimizer have higher-quality than others, especially the face of Harry Potter.}\label{fig:potter_sgd_lake_1.0}
\end{figure}

\begin{figure}[t]
\centering
SGD with s=1.0\par
\includegraphics[width=0.12\textwidth]{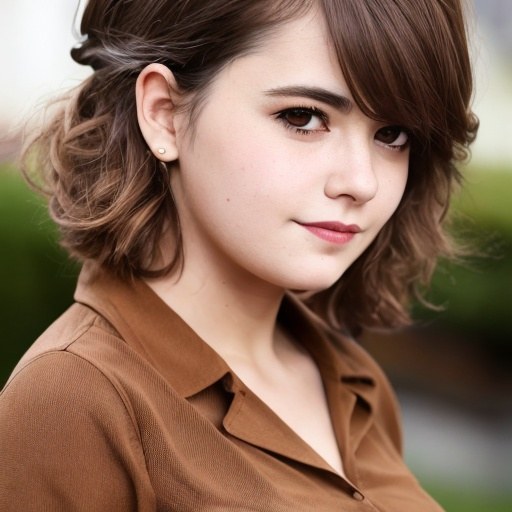}
\includegraphics[width=0.12\textwidth]{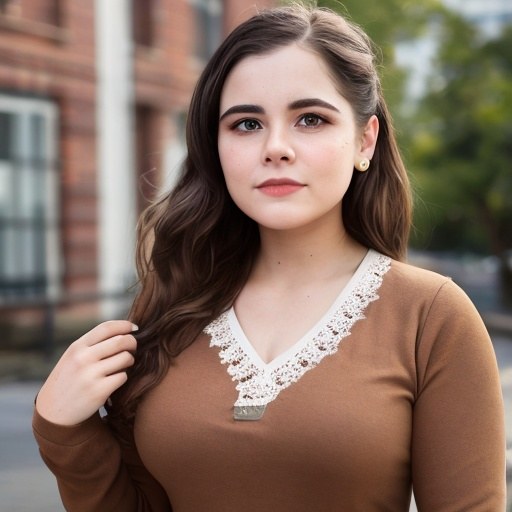}
\includegraphics[width=0.12\textwidth]{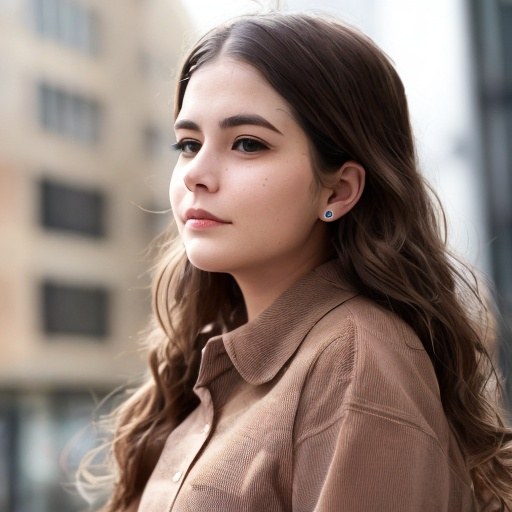}
\includegraphics[width=0.12\textwidth]{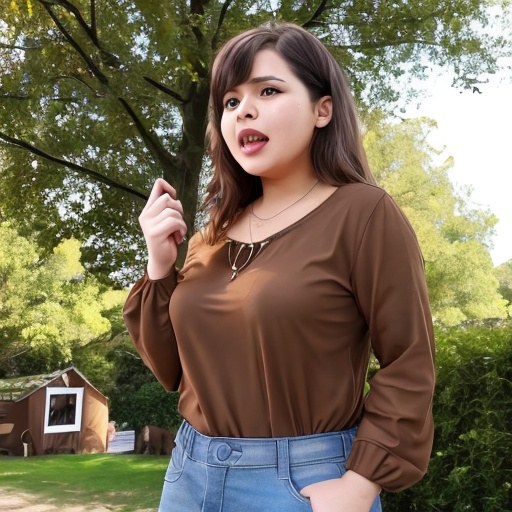}
\includegraphics[width=0.12\textwidth]{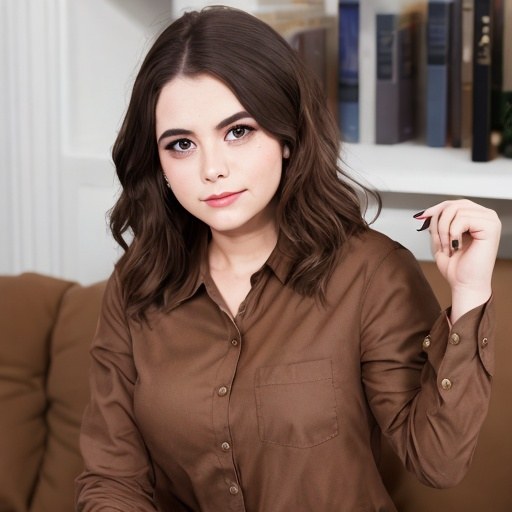} \\
Scaled GD with s=1.0 \par
\includegraphics[width=0.12\textwidth]{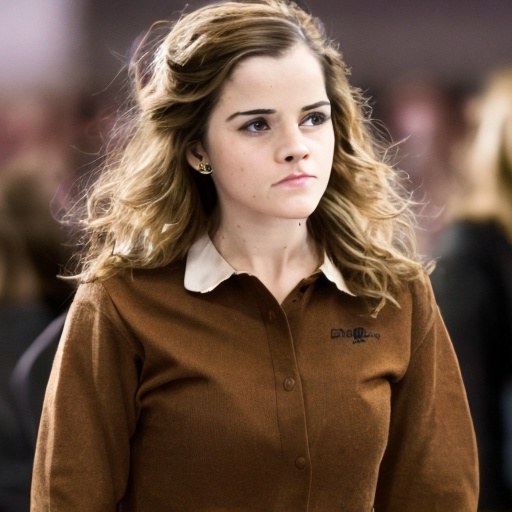}
\includegraphics[width=0.12\textwidth]{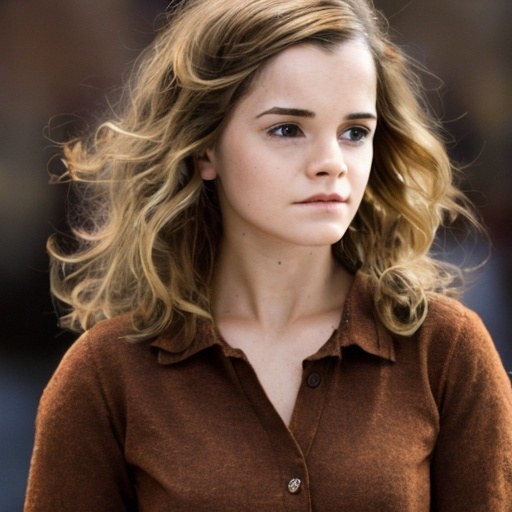}
\includegraphics[width=0.12\textwidth]{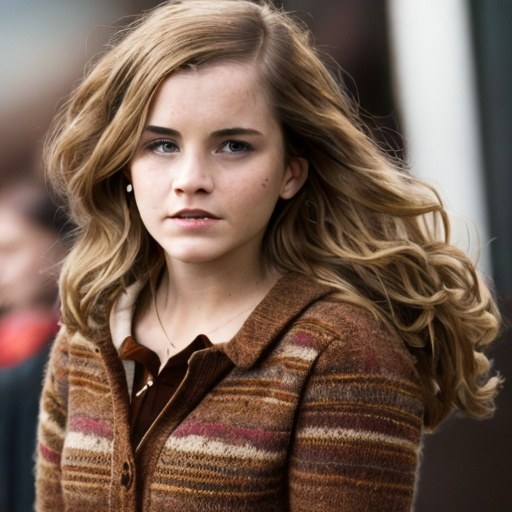}
\includegraphics[width=0.12\textwidth]{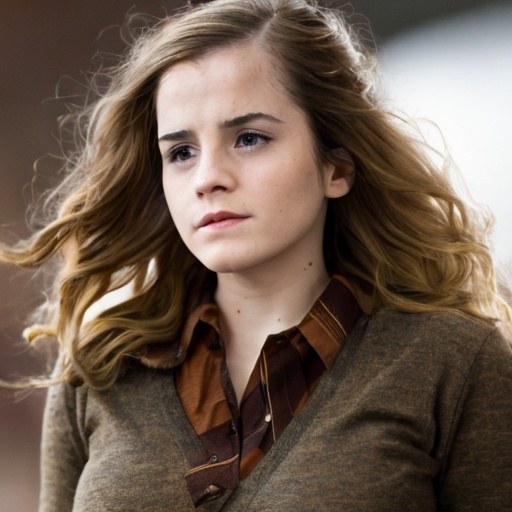}
\includegraphics[width=0.12\textwidth]{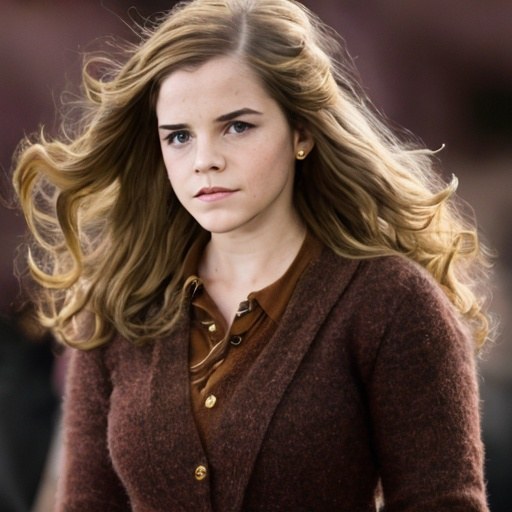} \\
LoRA-Pro SGD with s=1.0\par
\includegraphics[width=0.12\textwidth]{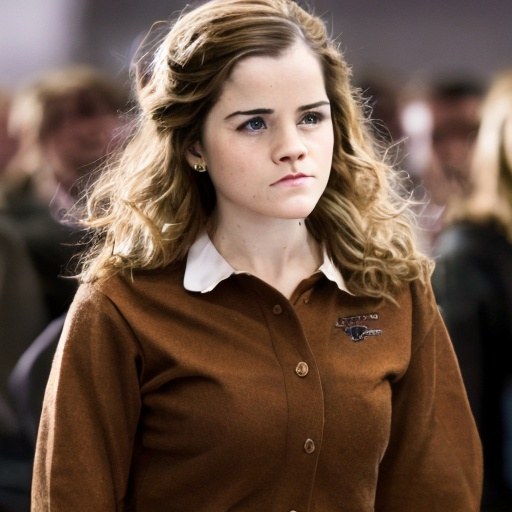}
\includegraphics[width=0.12\textwidth]{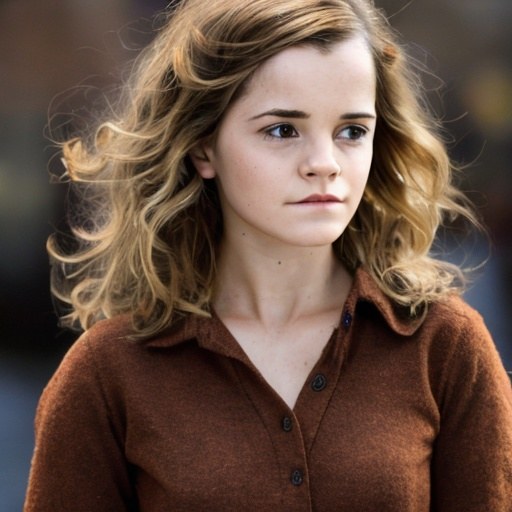}
\includegraphics[width=0.12\textwidth]{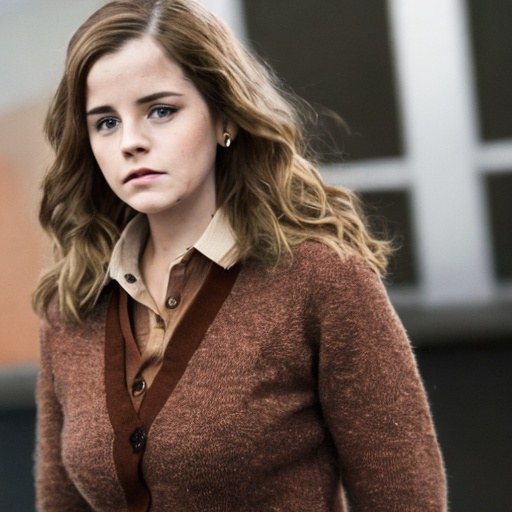}
\includegraphics[width=0.12\textwidth]{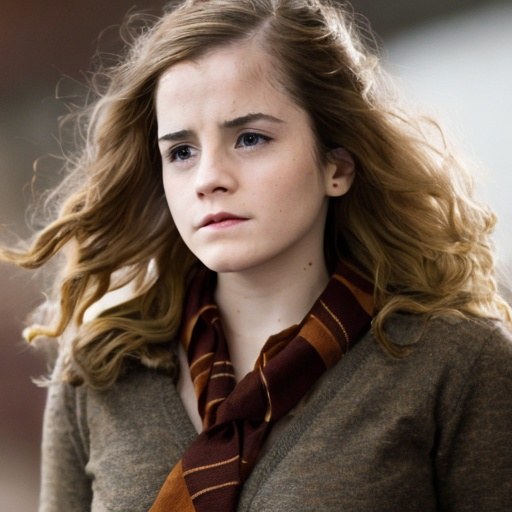}
\includegraphics[width=0.12\textwidth]{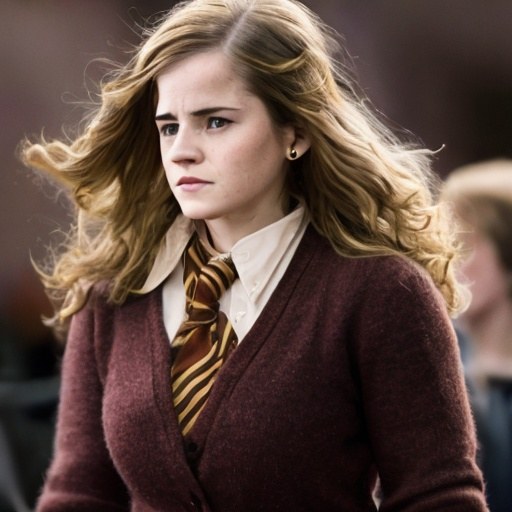} \\
\textbf{AdaPreLoRA SGD with s=1.0 (ours)} \par
\includegraphics[width=0.12\textwidth]{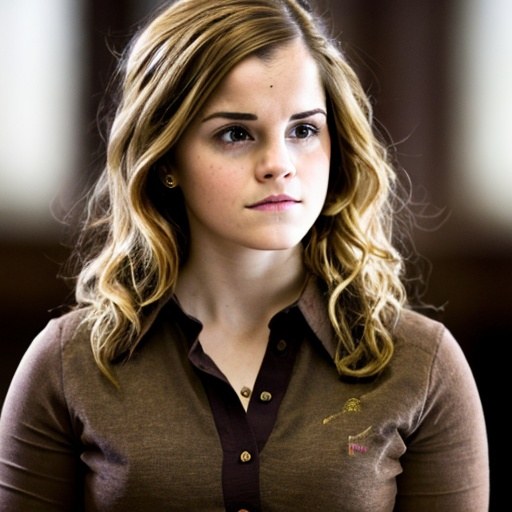}
\includegraphics[width=0.12\textwidth]{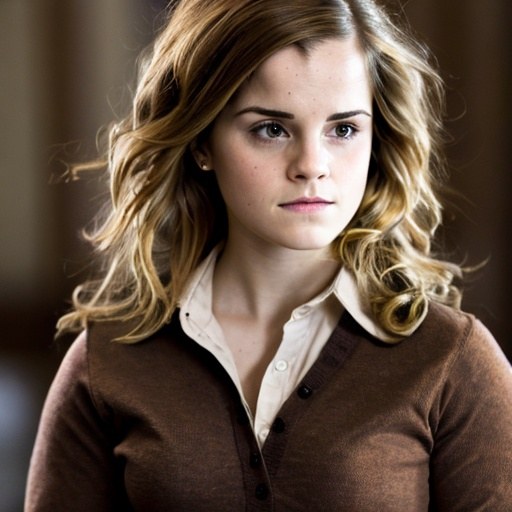}
\includegraphics[width=0.12\textwidth]{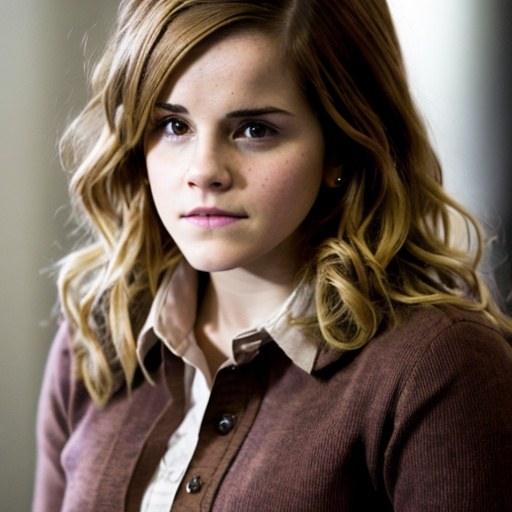}
\includegraphics[width=0.12\textwidth]{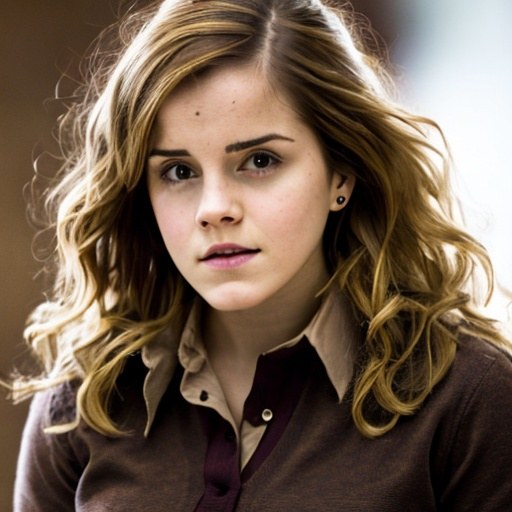}
\includegraphics[width=0.12\textwidth]{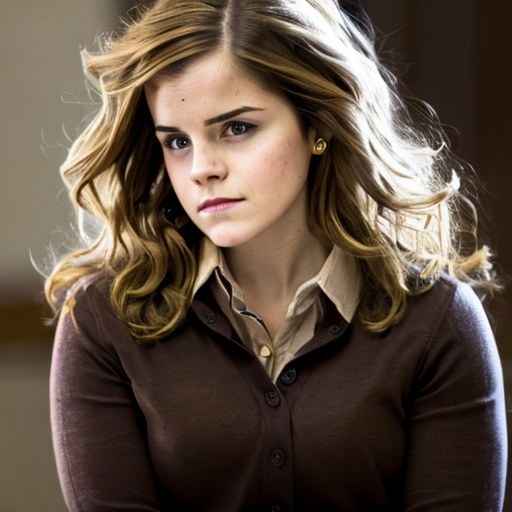}
\caption{Generated results based on the prompt ``Hermione Granger wearing a brown shirt'' when fine-tuned using SGD-based optimizers. All optimizers employed a LoRA scaling factor of 1.0, with the best learning rate. Results demonstrate that the model trained with AdaPreLoRA generates higher-quality images than others, especially the face of Hermione Granger.}\label{fig:hermione_sgd_shirt_1.0}
\end{figure}

\begin{figure}[htbp]
\centering
AdamW with s=0.7 \par
\includegraphics[width=0.12\textwidth]{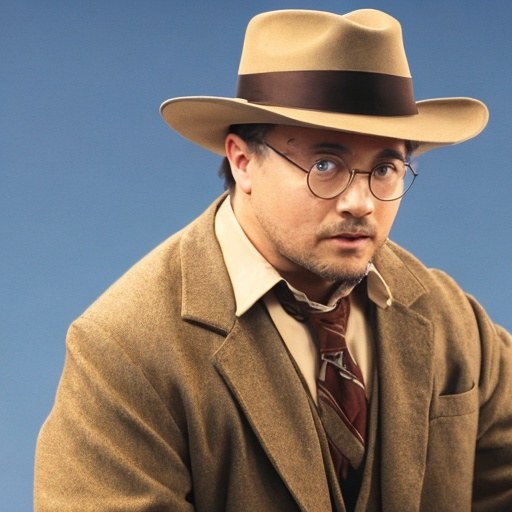}
\includegraphics[width=0.12\textwidth]{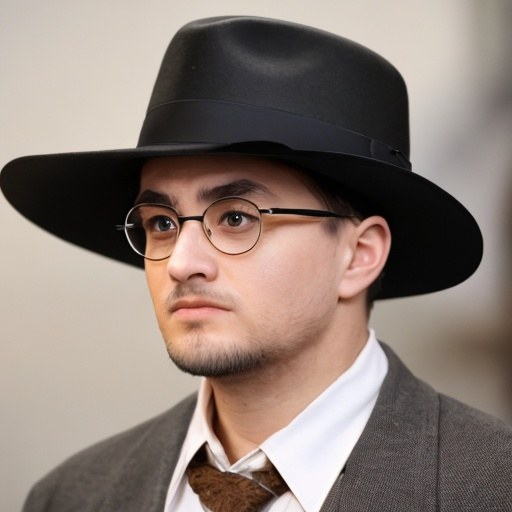}
\includegraphics[width=0.12\textwidth]{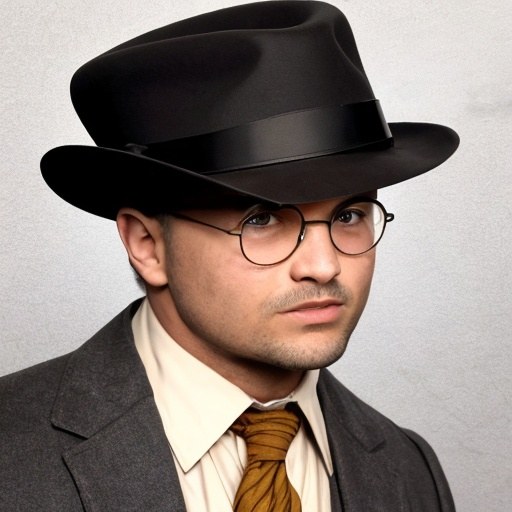}
\includegraphics[width=0.12\textwidth]{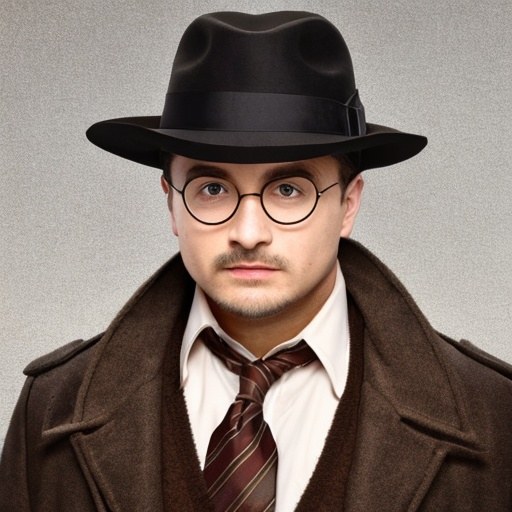}
\includegraphics[width=0.12\textwidth]{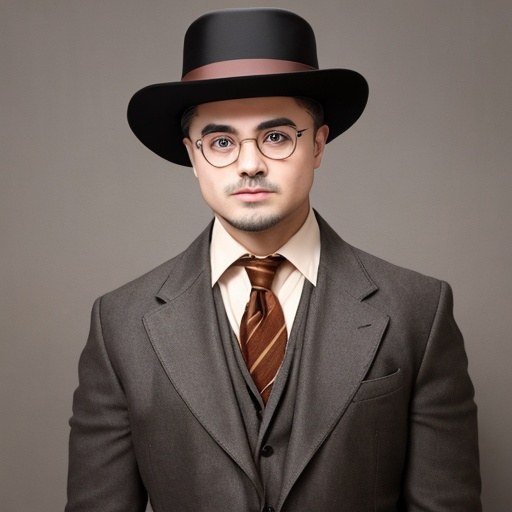} \\
Scaled AdamW with s=0.7 \par
\includegraphics[width=0.12\textwidth]{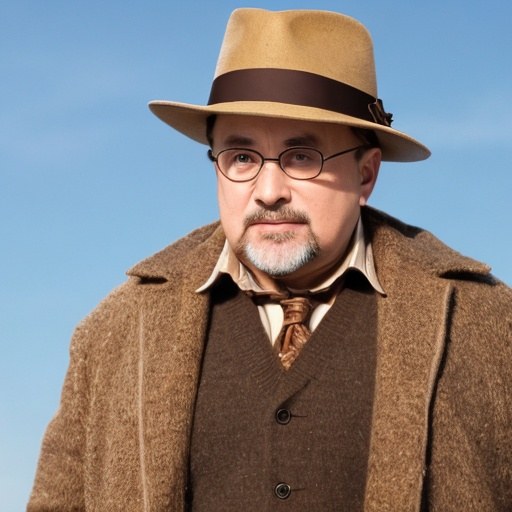}
\includegraphics[width=0.12\textwidth]{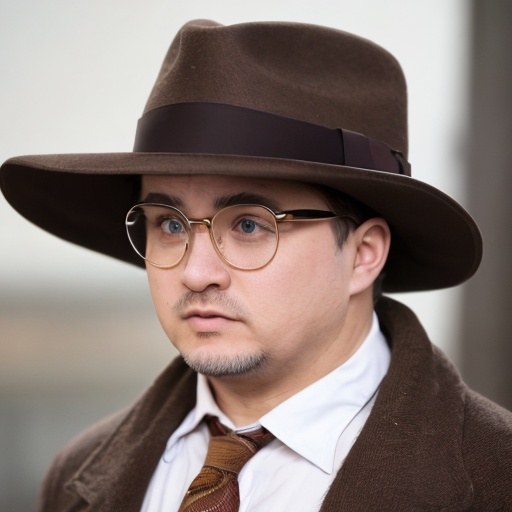}
\includegraphics[width=0.12\textwidth]{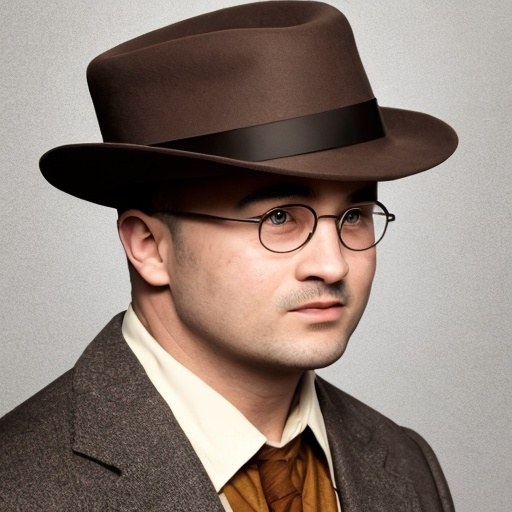}
\includegraphics[width=0.12\textwidth]{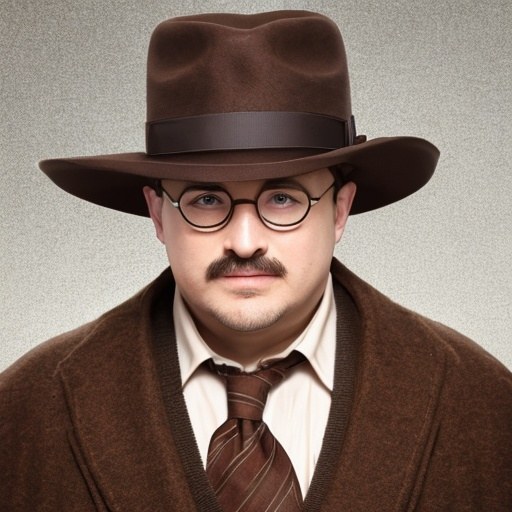}
\includegraphics[width=0.12\textwidth]{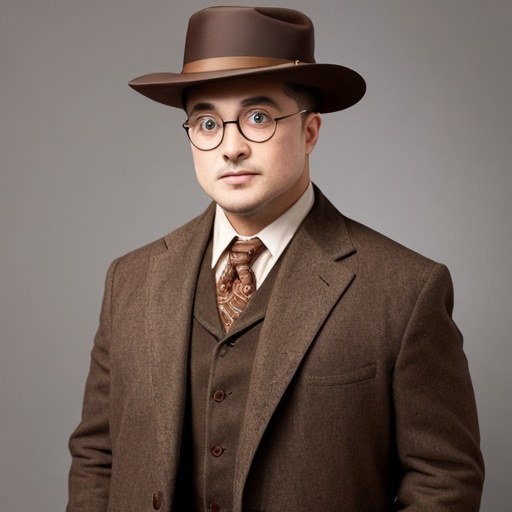} \\
LoRA-Pro AdamW with s=0.7 \par
\includegraphics[width=0.12\textwidth]{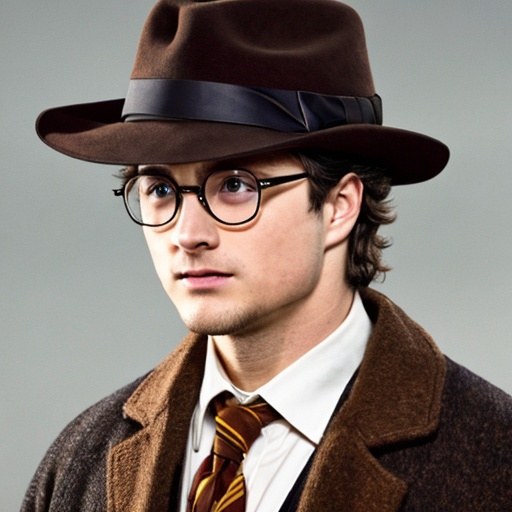}
\includegraphics[width=0.12\textwidth]{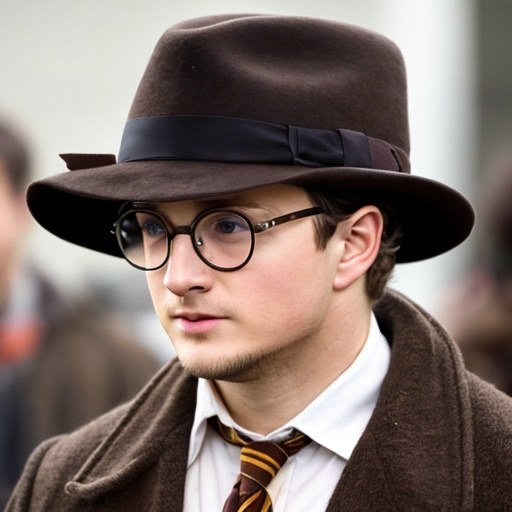}
\includegraphics[width=0.12\textwidth]{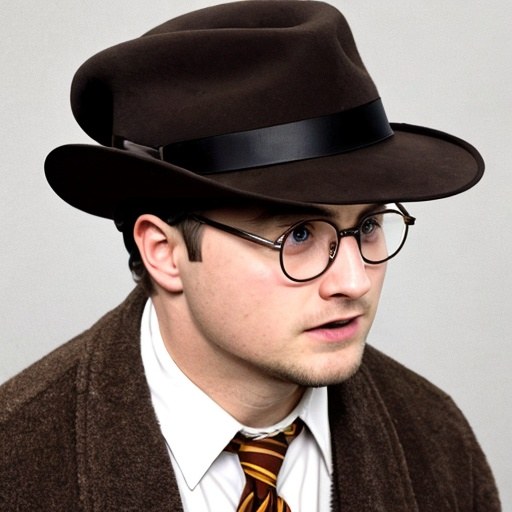}
\includegraphics[width=0.12\textwidth]{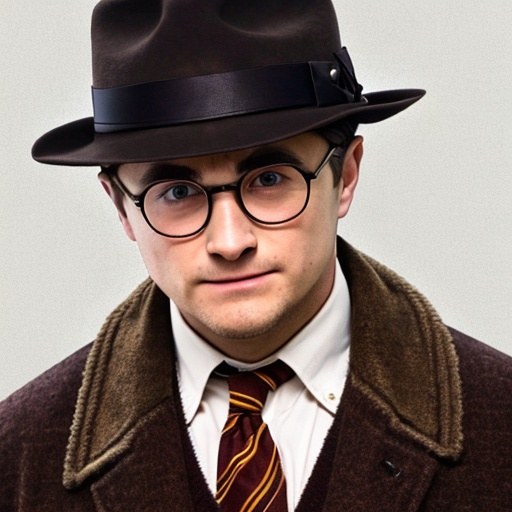}
\includegraphics[width=0.12\textwidth]{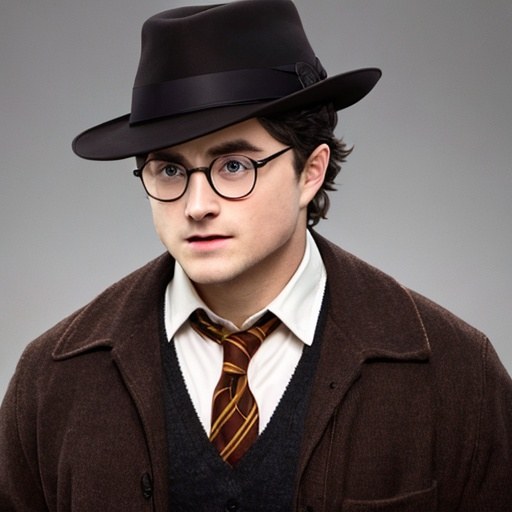} \\
\textbf{AdaPreLoRA AdamW with s=0.7 (ours)} \par
\includegraphics[width=0.12\textwidth]{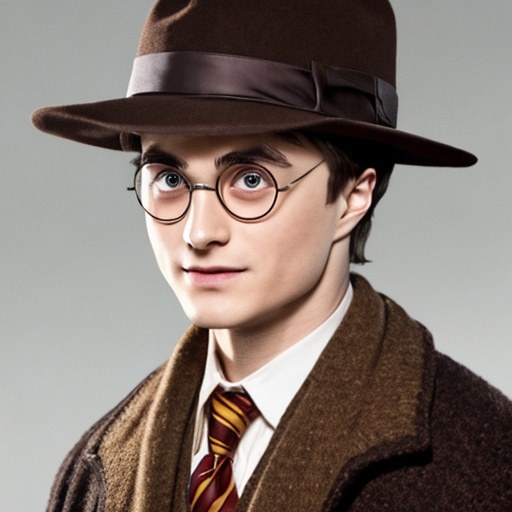}
\includegraphics[width=0.12\textwidth]{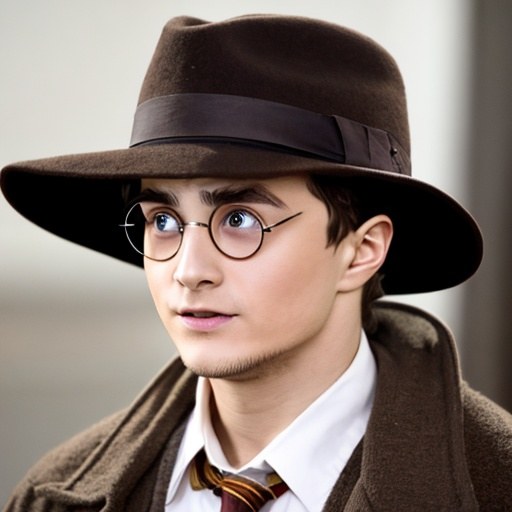}
\includegraphics[width=0.12\textwidth]{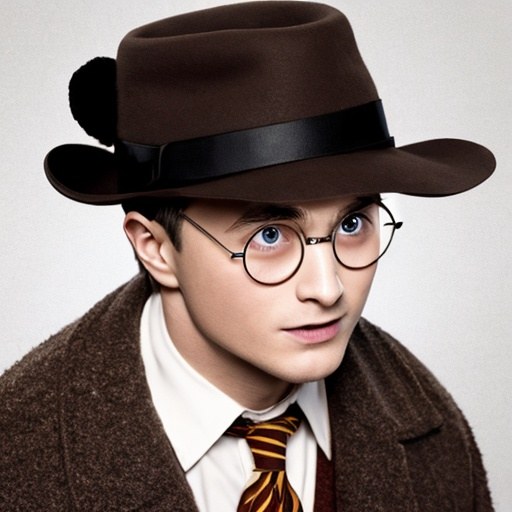}
\includegraphics[width=0.12\textwidth]{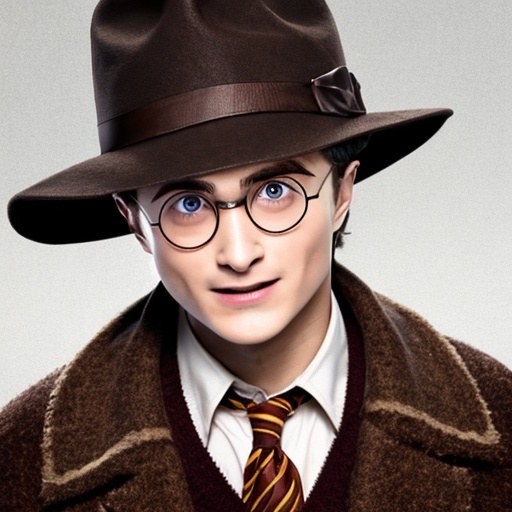}
\includegraphics[width=0.12\textwidth]{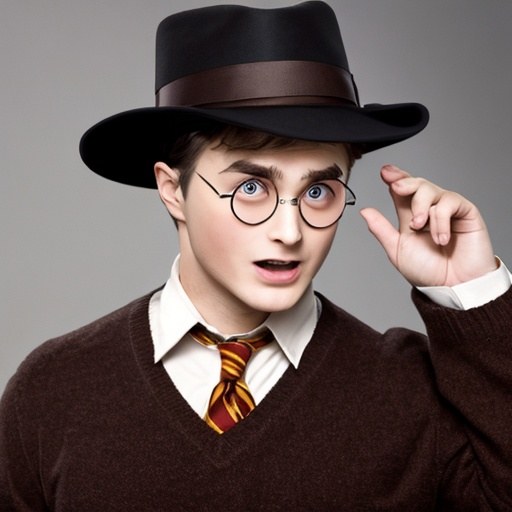}
\caption{Generated results based on the prompt ``Harry Potter wearing a brown hat'' when fine-tuned using AdamW-based optimizers. All optimizers employed a LoRA scaling factor of 0.7, with the best learning rate. The results indicate that the output of the model trained with AdaPreLoRA incorporates the character ``Harry Potter'', and the ``hat'', yielding superior quality compared to alternative approaches.}
\label{fig:potter_adamw_hat_0.7}
\end{figure}

\begin{figure}[htbp]
\centering
AdamW with s=0.7 \par
\includegraphics[width=0.12\textwidth]{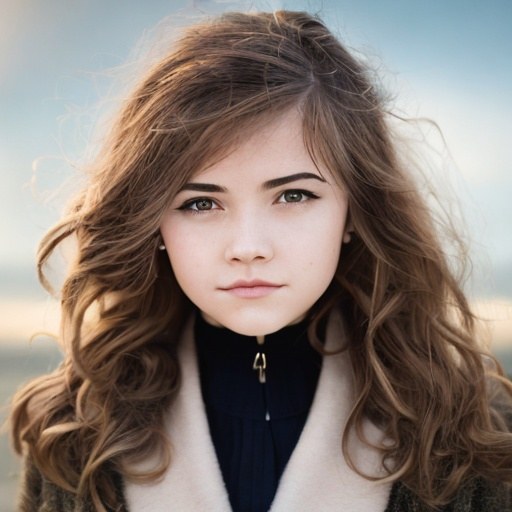}
\includegraphics[width=0.12\textwidth]{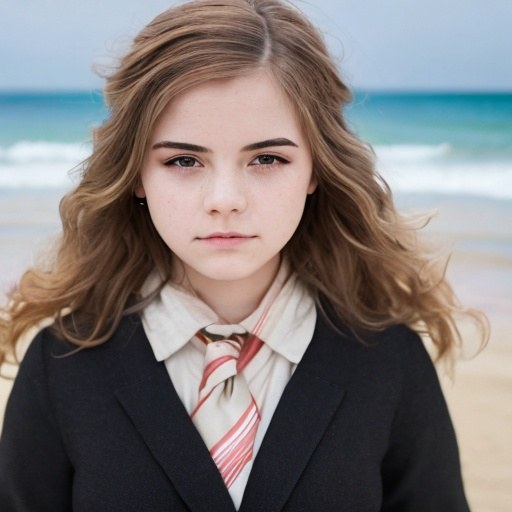}
\includegraphics[width=0.12\textwidth]{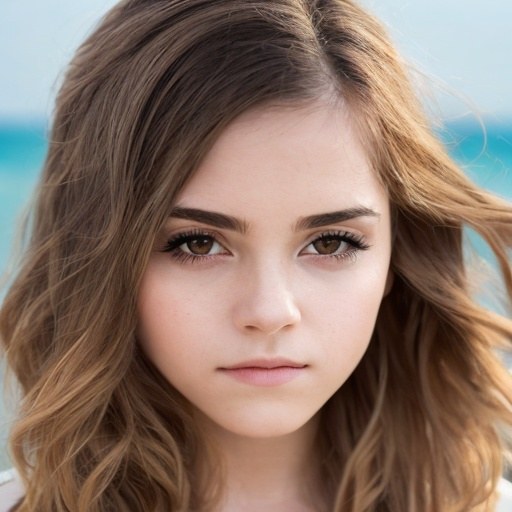}
\includegraphics[width=0.12\textwidth]{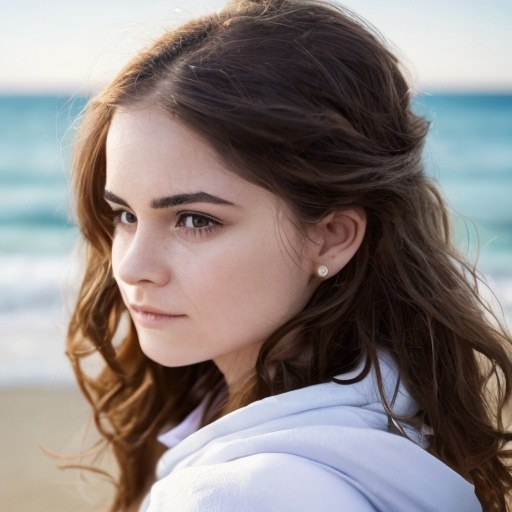}
\includegraphics[width=0.12\textwidth]{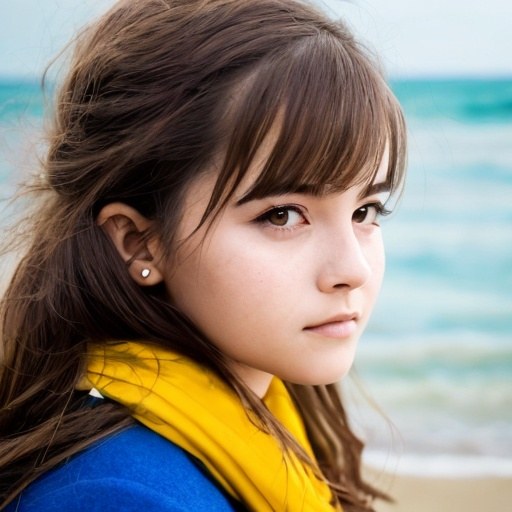} \\
Scaled AdamW with s=0.7 \par
\includegraphics[width=0.12\textwidth]{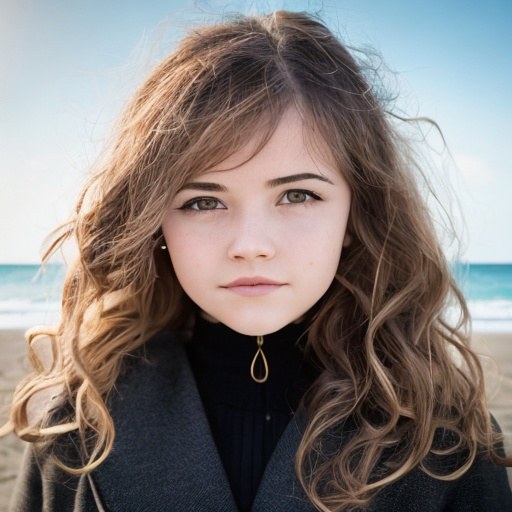}
\includegraphics[width=0.12\textwidth]{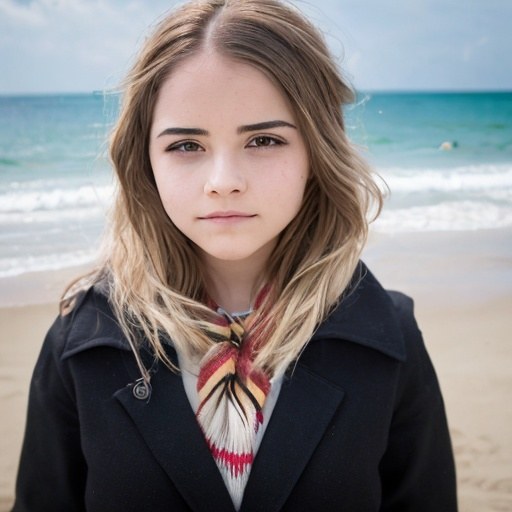}
\includegraphics[width=0.12\textwidth]{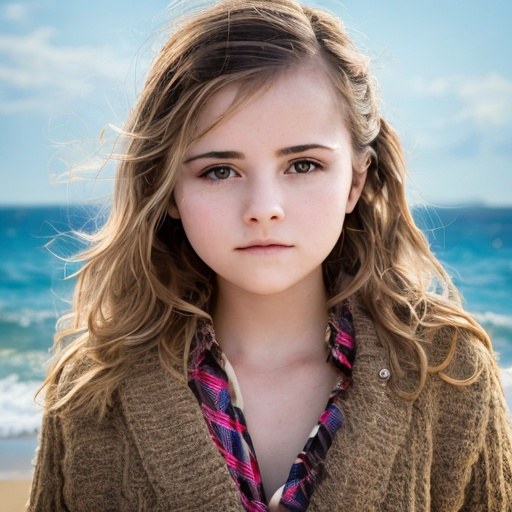}
\includegraphics[width=0.12\textwidth]{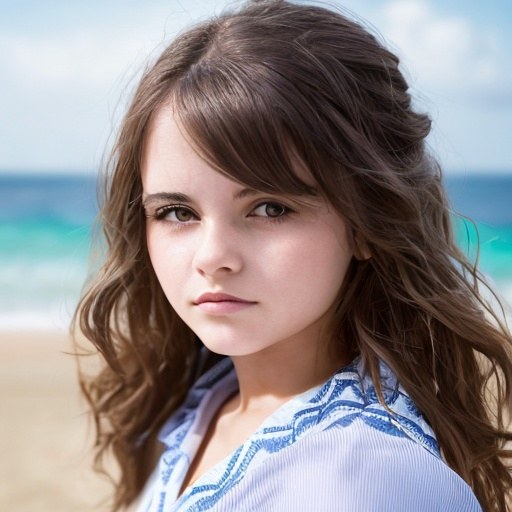}
\includegraphics[width=0.12\textwidth]{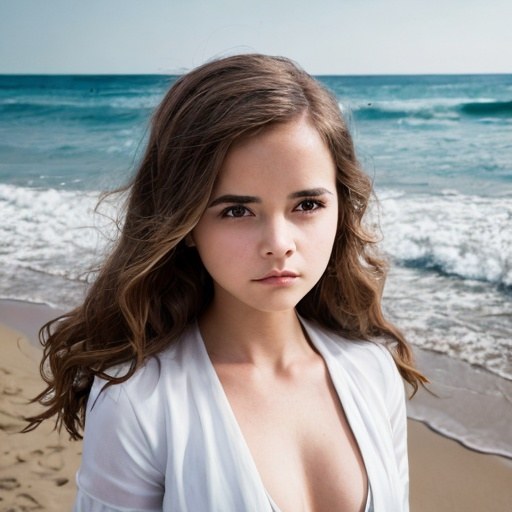} \\
LoRA-Pro AdamW with s=0.7 \par
\includegraphics[width=0.12\textwidth]{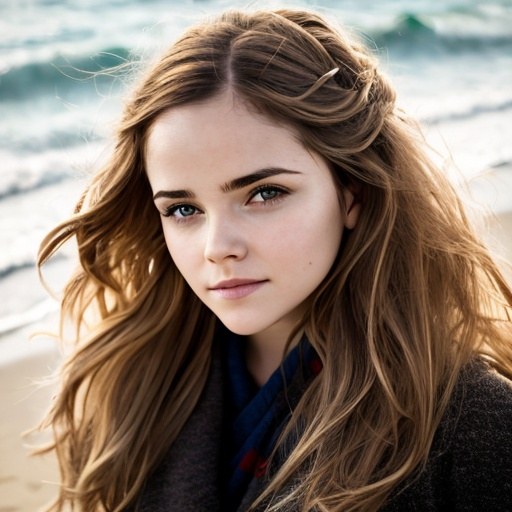}
\includegraphics[width=0.12\textwidth]{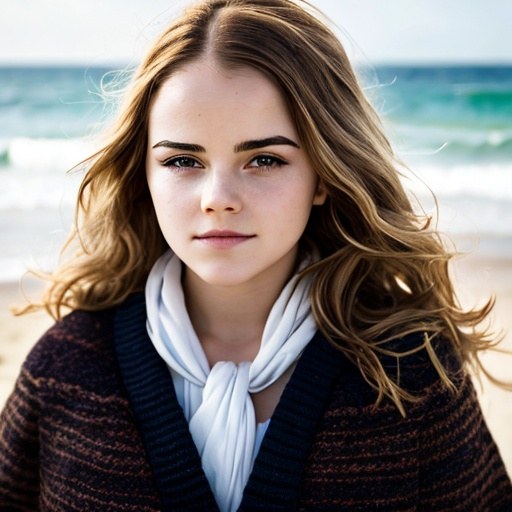}
\includegraphics[width=0.12\textwidth]{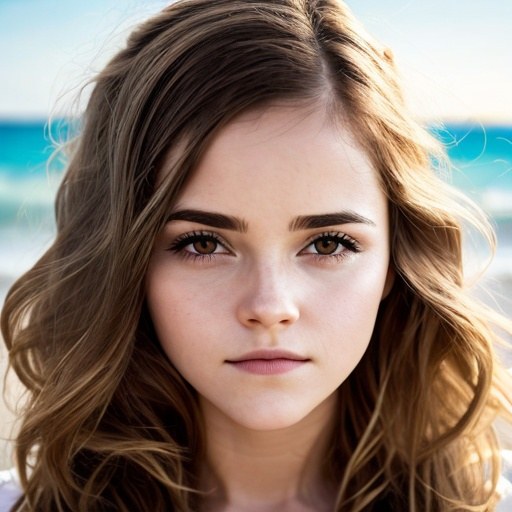}
\includegraphics[width=0.12\textwidth]{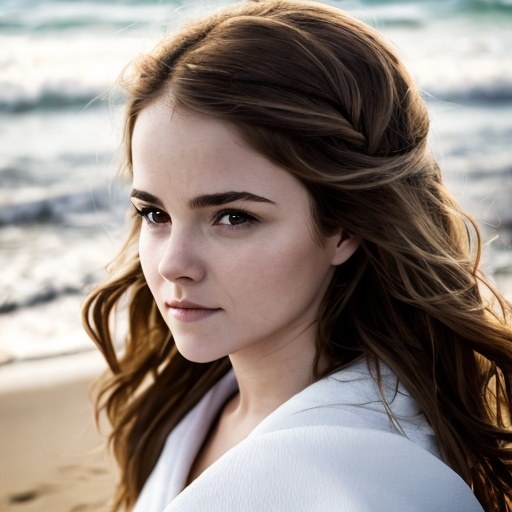}
\includegraphics[width=0.12\textwidth]{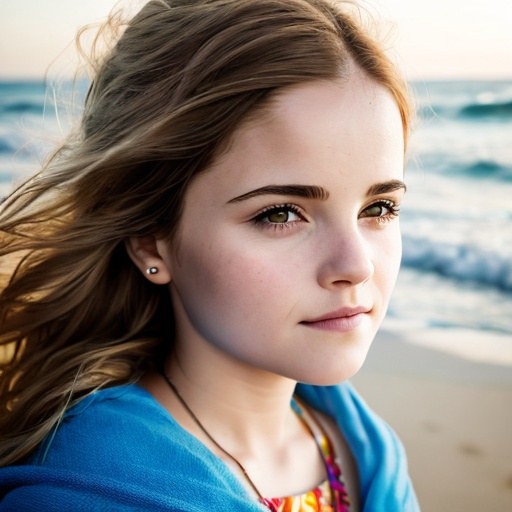} \\
\textbf{AdaPreLoRA AdamW with s=0.7 (ours)} \par
\includegraphics[width=0.12\textwidth]{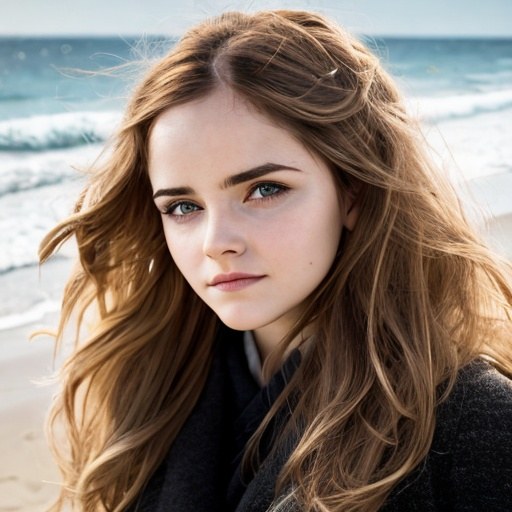}
\includegraphics[width=0.12\textwidth]{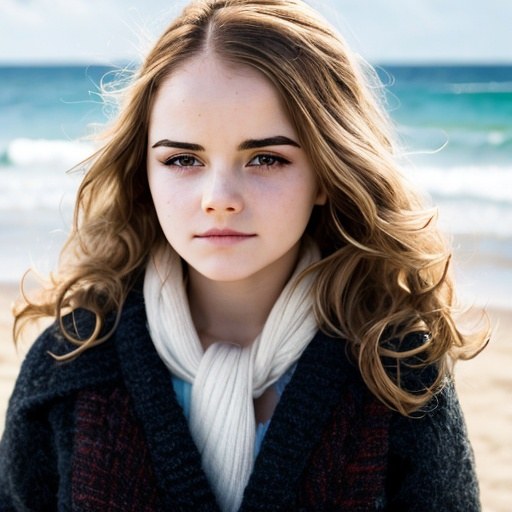}
\includegraphics[width=0.12\textwidth]{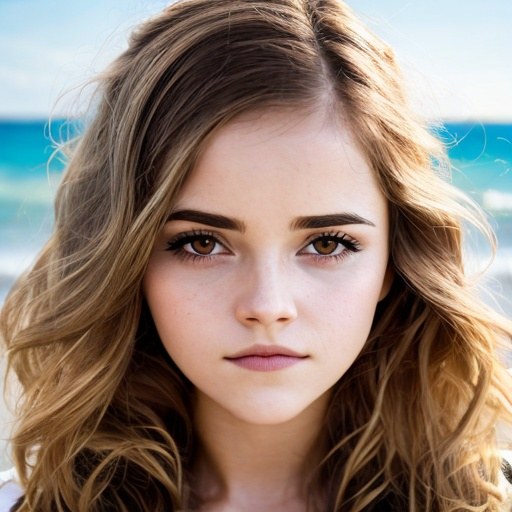}
\includegraphics[width=0.12\textwidth]{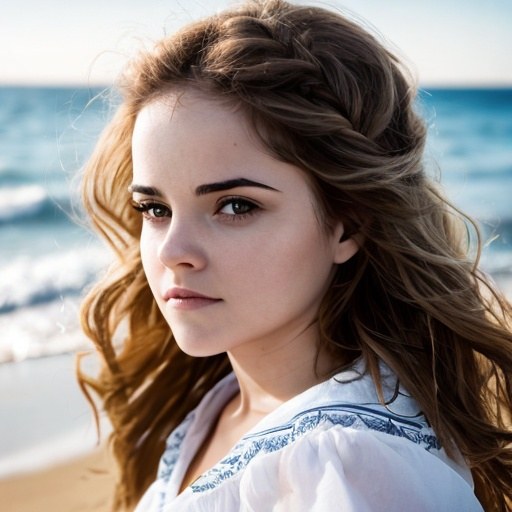}
\includegraphics[width=0.12\textwidth]{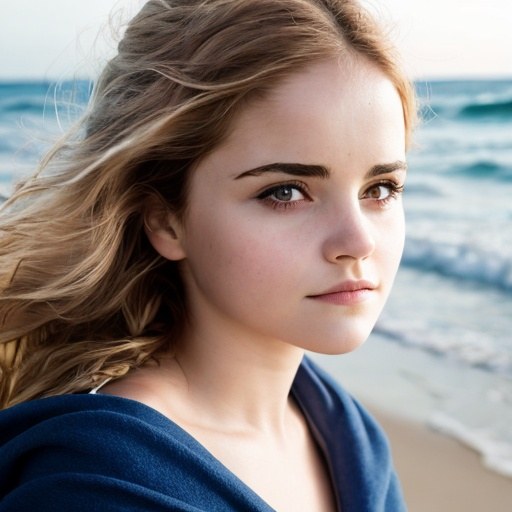}
\caption{Generation results from the prompt ``A photo of Hermione Granger on the beach, small waves, detailed symmetric face, beautiful composition'' using AdamW-based optimizers. All the optimizers apply LoRA scaling factor as 0.7. Results demonstrate that AdaPreLoRA generates higher-quality images for both scaling factors than others, including the face of Hermione Granger and the scene.}\label{fig:hermione_adamw_sea_0.7}
\end{figure}




\end{document}